%% file: main.tex
\pdfoutput=1

\documentclass[11pt]{article}
\usepackage[letterpaper,margin=1in]{geometry}

\usepackage[round]{natbib}

\usepackage[hyphens]{url}
\usepackage{graphicx}
\usepackage{adjustbox}
\usepackage{booktabs}
\usepackage{array}
\usepackage{longtable}
\usepackage{placeins}
\usepackage{amsmath, amssymb, amsthm, mathtools}
\usepackage{enumitem}
\usepackage{xcolor}
\usepackage[colorlinks=true,linkcolor=blue,citecolor=blue,urlcolor=blue]{hyperref}
\usepackage[capitalise,nameinlink]{cleveref}
\usepackage{algorithm}
\usepackage{algpseudocode}
\crefname{algorithm}{Algorithm}{Algorithms}

\newtheorem{theorem}{Theorem}
\newtheorem{proposition}{Proposition}
\newtheorem{corollary}{Corollary}
\newtheorem{lemma}{Lemma}
\theoremstyle{definition}

\newtheorem{assumption}{Assumption}
\theoremstyle{remark}

\theoremstyle{plain}
\crefname{assumption}{Assumption}{Assumptions}
\crefname{lemma}{Lemma}{Lemmas}

\newcommand{\EE}{\mathbb{E}}
\newcommand{\PP}{\mathbb{P}}
\newcommand{\RR}{\mathbb{R}}
\newcommand{\VV}{\operatorname{Var}}
\newcommand{\Cov}{\operatorname{cov}}

\newcommand{\cN}{\mathcal{N}}

\newcommand{\range}{\operatorname{range}}

\newcommand{\st}{\mathsf s}   %
\newcommand{\cm}{\mathsf c}   %
\newcommand{\ec}{\mathsf e}   %

\providecommand{\coloneqq}{\mathrel{:=}}

\title{What Do Sparse Autoencoder Feature Discovery Scores Identify?\\ From Activations to Interventions}
\author{Yifeng Guo$^{1}$\quad Yanran Li$^{2}$\quad Xiang Chen$^{1}$\\[4pt]
{\normalsize $^{1}$St.\ Jude Children's Research Hospital\qquad $^{2}$Columbia University}}
\date{}

\begin{document}

\maketitle

\begin{abstract}
Sparse autoencoder (SAE) features are typically discovered from how their activations co-vary with behaviour, yet they are often used through interventions such as steering or ablation. We take intervention attribution, the first-order effect of a specified intervention on the model output, as our estimand. For a broad class of response functions, we show that observational feature-discovery evidence generally does not identify this attribution, and derive an exact gradient-based decomposition of the gap between linear-probe predictions and intervention attribution. Across four model-SAE pairs, median probe-to-attribution magnitude ratios range from tens to hundreds, while attributions closely track small-step steering. For single-latent ablations with nonzero effects, gradient predictions match the realized sign in 91–97\% of cases, compared with 17–35\% for probe-implied changes. Attribution-selected sets also produce larger absolute mean ablation effects on held-out prompts than sets selected by correlation, rank tests or probe coefficients. Together, these results establish intervention attribution as a principled, gradient-estimable target for selecting SAE features for intervention.
\end{abstract}

\section{Introduction}\label{sec:intro}

Sparse autoencoders (SAEs) rewrite a language model's activations as sparse combinations of thousands of latent features \citep{bricken2023monosemanticity, cunningham2023sparse,templeton2024scaling}. Practitioners discover the features behind a behaviour observationally. They rank latents by how their activations co-vary with the behaviour across natural prompts, using correlations, class-mean differences or per-latent tests \citep{kantamneni2025sparse,patel2026functional,cho2026corrsteer}. They then use the selected features through interventions that promote or remove a concept, suppress an answer or correct a failure \citep{templeton2024scaling,marks2024sparse,arad2025saes}. Steering adds a multiple of a decoder direction \citep{chalnev2024improving}, ablation zeroes a latent's activation or replaces it with a reference mean \citep{wang2023ioi,heimersheim2024how}, and patching transplants it from another prompt \citep{vig2020investigating}. That is, discovery is observational, yet use is interventional.

\begin{figure*}[t]
\centering
\includegraphics[width=\textwidth]{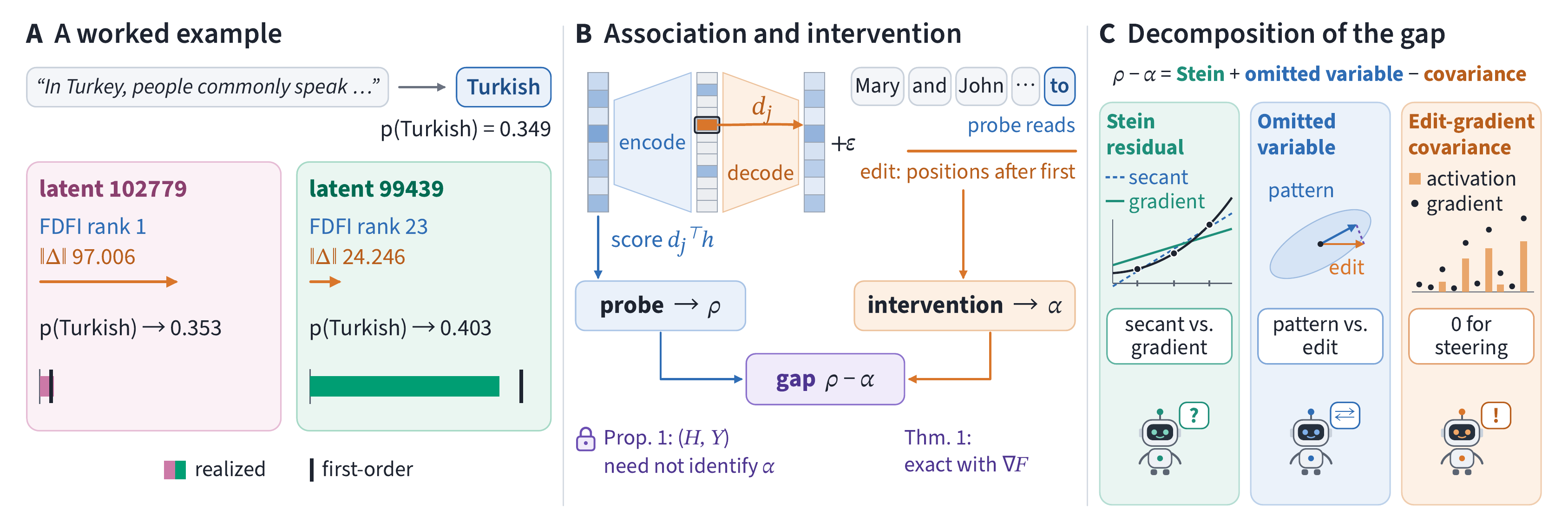}
\caption{Association and intervention for SAE latents. (A) Gemma-2-9B completes ``In Turkey, people commonly speak'' with $p(\text{Turkish})=0.349$. FDFI ranks latent 102779 first and latent 99439 23rd among the $39$ latents it selects for this task. Zero-ablating each latent at every position but the first gives the realized intervention effect (bars) and the prompt-specific gradient prediction (ticks) in standardized log-probability units. The illustrative example is post hoc. It is the sole eligible unit under a selection rule specified after the aggregate analyses with a required positive attribution-set versus FDFI-set contrast (\Cref{app:case}). (B) A separately fitted probe on the final-position score $d_j^\top h$ implies a change $\rho$, whereas an intervention that edits along $d_j$ has first-order effect $\alpha$, its attribution. The law of $(H,Y)$ fixes $F$ only on the support of $H$ and need not identify $\alpha$ (\Cref{prop:discrete}). (C) With gradients the gap $\rho-\alpha$ is a Stein residual plus an omitted-variable term minus an edit-gradient covariance (\Cref{thm:gap}). The secant illustration assumes affinely independent support.}\label{fig:overview}
\end{figure*}

The two can diverge. \Cref{fig:overview}A gives a concrete example using FDFI, a dependence-aware feature-importance method \citep{du2025disentangled,chen2026fdfi}, as an observational discovery pipeline. Zero-ablating the top-ranked latent barely changes the probability of ``Turkish'', despite inducing an edit four times larger in norm, whereas ablating the 23rd-ranked latent raises it from $0.349$ to $0.403$. A single backward pass closely anticipates both changes. The example illustrates the central distinction: a stronger observational discovery signal need not imply a larger intervention effect.

A specified intervention calls for a specified estimand. Write $H$ for the activation, $F(H)$ for a model output such as the standardized log-probability of the answer and $H\mapsto H+\Delta(H)$ for an intervention with edit $\Delta(H)$. Its first-order effect averaged over prompts, $\alpha(\Delta)=\EE[\Delta(H)^\top\nabla F(H)]$, is the \emph{intervention attribution}, or \emph{attribution} for short. Attribution patching computes it with one backward pass per prompt \citep{nanda2023attribution,syed2024attribution}. The estimand is intervention-specific and local. A finite effect also depends on how the gradient changes along the edit path. We therefore test it against realized interventions. Small-step steering checks its local accuracy, larger steps map its range of validity and single-latent ablations give a second check.

Our first question is whether observational discovery evidence identifies \(\alpha(\Delta)\). In general, it does not (\Cref{prop:discrete}). Natural prompts reveal \(F\) only along the activation variation they realize. Two response functions can induce the same observational law while differing in slope along an unexplored direction, leading to different attributions for interventions that move along that direction. Unless the admissible response class rules out this variation, the attribution remains unidentified from observational evidence. This boundary applies to any functional of the observational law, including FDFI scores, correlations, rank tests, class-mean differences and probe coefficients, both at the population level and under the empirical law of the observed prompts. In our evaluation strata, the median fraction of the mean steering edit lying outside the observed activation span is nearly one, making this boundary directly relevant to the finite evaluation design we study.

\Cref{prop:discrete} concerns observational discovery statistics in general. \Cref{thm:gap} then studies a canonical way to translate observational association into an intervention-scale prediction using a linear least-squares probe. A discovery score such as an FDFI statistic ranks latents but does not quantify how much the output should change under a specified edit. Regressing the output on a latent's activation yields such a prediction, which we call the \emph{probe-implied change} \(\rho(\Delta)\). We use this probe as a common observational benchmark for comparing association with intervention attribution. For a fixed scalar activation and target, a correlation determines the probe coefficient after rescaling, as does a class-mean difference for a binary target (\Cref{app:survey}).

Our second question is why $\rho(\Delta)$ and $\alpha(\Delta)$ differ once gradients are computed. Their difference splits exactly into three terms (\Cref{thm:gap}, \Cref{tab:terms}). A \emph{Stein residual} compares the slope that the probe fits across prompts with the model's average local sensitivity. An \emph{omitted-variable term} compares the activation pattern that moves with the probe score with the coordinates that the intervention changes, for instance at another token position. An \emph{edit-gradient covariance} arises when the size of the edit and the model's sensitivity vary together across prompts, as for ablations. For example, a probe that reads only the final position implies no change for zero-ablating a latent that never fires there, whatever its attribution (\Cref{cor:ablate}). One backward pass per prompt makes every term estimable.

Our third question is whether this distinction matters in practice (\Cref{sec:results}). Across four model-SAE pairs, it does in three ways. First, the gap is large. For FDFI-selected directions, the attribution is only a few percent of the probe coefficient in the median. The mismatch extends beyond FDFI-based selection: for latents selected by coefficient magnitude, probe coefficients exceed realized small-step steering responses by more than two orders of magnitude in the model-specific medians. Second, attribution predicts realized interventions. It closely tracks small-step steering and remains more accurate than the probe coefficient at displacements scaled to each latent's maximum observed activation. Prompt-specific gradient predictions also substantially outperform probe-based predictions for single-latent ablations. Third, the distinction affects feature choice. Within FDFI-selected candidates, attribution more often identifies the latent with the largest realized steering response than either the FDFI statistic or the probe coefficient. Attribution-selected sets also produce larger absolute mean ablation effects on held-out prompts than sets selected by correlation, rank tests or probe coefficients.

Our contributions address these three questions.
\begin{itemize}[leftmargin=1.2em,itemsep=0pt,topsep=2pt]
\item \emph{Identification.} We establish an identification boundary showing that observational feature-discovery evidence does not in general identify the first-order effect of a specified intervention (\Cref{sec:identify}).

\item \emph{Diagnosis.} We derive an exact gradient-based decomposition of the gap between the probe-implied change and intervention attribution into three estimable terms, separating regression, omitted-variable and edit-gradient effects (\Cref{sec:gap}).

\item \emph{Intervention consequences.} Across four public model--SAE pairs spanning GPT-2, Gemma 2 and Llama 3.1 and an eleven-task setup, we show that the gap is large in practice. The proposed intervention attribution target predicts realized steering and single-latent ablations more accurately than probe-based predictions and attribution-based selection identifies latents with larger held-out intervention effects (\Cref{sec:results}).
\end{itemize}

\noindent\textbf{Related work.} Attribution patching estimates patching effects from one backward pass \citep{nanda2023attribution,syed2024attribution}, with refinements for attention saturation, integrated gradients and second-order terms \citep{kramar2024atp,hanna2024faith,wang2026attribution}. \citet{marks2024sparse} use it to discover and intervene on sparse feature circuits, whereas we use it as the estimand against which observational evidence is judged. Empirical studies find that activation-side and output-side evidence about latents rarely coincide \citep{arad2025saes,shu2025beyond,tiwari2026decodability,wu2025axbench}. Selection-error control \citep{candes2018panning,enkhbayar2025knockoffs} and ablation-only validation \citep{khera2026anytime} leave the intervention's attribution untested. Probing studies separate what a representation encodes from what a probe learns \citep{hewitt2019designing} or what the model uses \citep{elazar2021amnesic,belinkov2022probing}. Our identity makes this separation exact for linear probes.

\section{From association to interventions}\label{sec:setup}

\noindent\textbf{Strata, activations and outputs.} For each model we partition prompts into \emph{strata} by task, answer and tokenized prompt length and define every estimand within a stratum. Within a stratum the answer and the positions are fixed. Variation in the activations then reflects the prompt rather than a change of answer or length. Pooling over strata would make a probe largely a contrast between answers and lengths, since the between-stratum part accounts for most of a task-pooled probe coefficient in the median (\Cref{app:protocols}). Write $H\in\RR^p$ for the activation at every position after the first and $F(H)$ for the output, the teacher-forced log-probability of the answer standardized within task. With $E_\tau$ the matrix that selects position $\tau$, $h\coloneqq E_{\rm fin}H$ is the final-position block. Within a stratum $H$ has a law with finite second moments, mean $\mu$ and covariance $\Sigma_H$. This law can be the population law of the stratum or the empirical law of its observed prompts. \Cref{prop:discrete,thm:gap} apply to either law under their respective hypotheses. The inference results concern sampling from stratum populations.

\noindent\textbf{Interventions, edits and attribution.} An \emph{intervention} modifies the activation as $H\mapsto H+\Delta(H)$, where $\Delta(H)$ is the \emph{edit}. The intervention keeps the SAE error term. Its edit has mean $\bar u\coloneqq\EE\Delta(H)$, the \emph{mean edit}. A steering intervention at the final position has the fixed edit $\Delta\equiv\varepsilon E_{\rm fin}^\top d_j$. A zero-ablation intervention has the prompt-dependent edit $\Delta_j(H)\coloneqq-\sum_\tau z_{\tau j}E_\tau^\top d_j$, where $z_{\tau j}$ is the activation of latent $j$ at position $\tau$ times the norm of its decoder row. Mean-ablation contributes the edit $(r_{\tau j}-z_{\tau j})E_\tau^\top d_j$ at position $\tau$, for a fixed reference $r_{\tau j}$. With $D(v)\coloneqq F(H+v)-F(H)$, the \emph{finite effect} of the intervention is $\EE D(\Delta(H))$. Its first-order part is the \emph{intervention attribution}, or \emph{attribution},
\begin{equation}\label{eq:effect}
\alpha(\Delta)\coloneqq\EE\bigl[\Delta(H)^\top\nabla F(H)\bigr].
\end{equation}
For steering, we report the unit-direction attribution $d_j^\top E_{\rm fin}\EE\nabla F(H)=\alpha(\Delta)/\varepsilon$ and compare it with the central difference quotient formed from the positive and negative steering interventions. The attribution is the first-order effect of a specified intervention. The following structural assumption, together with the moment conditions stated in each result, makes these objects well defined.

\begin{assumption}[Patching and model-derived target]\label{asm:cut}
The following hold within each stratum.
\begin{enumerate}[label=(\roman*),leftmargin=1.6em,itemsep=0pt]
\item\label{asm:cut:cut} (Patching) The output is $F(H,y^\ast)$ for the fixed downstream computation $F$ from the SAE hook under teacher forcing, with the first position absorbed. An additive patch keeping the SAE error term has outcome $F(H+\Delta,y^\ast)$.
\item\label{asm:cut:target} (Model-derived target) The probe target is $Y=\psi(F(H,y^\ast))+\eta$ with $\psi$ affine, $\EE[\eta\mid H,y^\ast]=0$ and $\EE Y^2<\infty$. Interventions act on $\psi\circ F$, written $F$.
\item\label{asm:cut:reg} (Regularity) $F(\cdot,y^\ast)$ is $C^1$, with $F$ and $\nabla F$ of at most polynomial growth.
\end{enumerate}
\end{assumption}

\Cref{asm:cut}\ref{asm:cut:cut} is the intervention of causal mediation and abstraction \citep{vig2020investigating,geiger2021causal,heimersheim2024how} and \Cref{asm:cut}\ref{asm:cut:target} ties the probe target to the output that the intervention moves. \Cref{asm:cut}\ref{asm:cut:reg} holds for the four models studied here, whose downstream computations from the SAE hook compose normalization layers with a positive stabilizing constant, attention and smooth activations. The SAE encoder, including a discontinuous threshold such as JumpReLU, enters only through the edit and leaves the gradient of $F$ unaffected.

\noindent\textbf{Discovery statistics.} A discovery pipeline applies an SAE with unit decoder directions $d_1,\ldots,d_m$ at one hook to prompts from several tasks and selects candidate latents $S_t$ for each task $t$. We call a quantity \emph{observational} if it is a functional of the law of $(H,Y)$. The discovery statistics considered here are observational. With tasks and fitting choices fixed, they depend only on observed activations and targets. As one concrete pipeline we use the Gaussian-transport implementation in the FDFI package, corresponding to disentangled feature importance with the Bures-Wasserstein map \citep{du2025disentangled,chen2026fdfi}. We refer to it as FDFI and turn its importance tests into a selection by applying the Benjamini-Hochberg procedure within task (\Cref{app:selection}). FDFI instantiates the general problem. The theory covers any observational discovery statistic.

\noindent\textbf{The probe and its implied change.} For a summary $S=AH$ with $A\Sigma_HA^\top\succ0$, the probe coefficient $\beta_S\coloneqq\Cov(S)^{-1}\Cov(S,Y)$ implies the change
\begin{equation}\label{eq:rho}
\rho(\Delta)\coloneqq\bar u^\top A^\top\beta_S
\end{equation}
for an intervention with mean edit $\bar u$, which we call the \emph{probe-implied change}. For a selected latent $j$ the summary is its final-position score, $A=d_j^\top E_{\rm fin}$, with probe coefficient $\beta$. Three quantities thus enter the comparison. The coefficient $\beta$ is observational, $\rho(\Delta)$ is the change it implies for the specified intervention and $\alpha(\Delta)$ is the first-order effect of that intervention.

\noindent\textbf{Data splits.} A \emph{selection split} ($7{,}063$ prompts) fixes the active latents, the tests and the candidates. An \emph{evaluation split} ($1{,}755$ prompts) supplies every gradient, intervention and estimate. Inference treats the samples of different evaluation strata as independent given the selection split (\Cref{asm:split} in \Cref{app:inference}).

Each discovery statistic considered here is a functional of the observational law. The same holds for the probe coefficient. The attribution depends on how $F$ responds to a specified edit. \Cref{sec:identify} asks when the first kind of quantity determines the second.

\section{What an observational discovery identifies}\label{sec:identify}

Natural prompts reveal how $F$ behaves along the activation variation they generate. An intervention can move the activation along a direction that this variation does not contain. Observational data then carry no information about how $F$ responds along that direction. The support of $H$ defines
\begin{equation}\label{eq:V}
V\coloneqq\operatorname{span}\{x-x'\colon x,x'\in\operatorname{supp}(H)\}=\range(\Sigma_H),
\end{equation}
where the second equality holds by \Cref{lem:range}. A law on $n_0$ points has $\dim V\le n_0-1$. Coordinates constant within the stratum, such as those of a shared template prefix, lie in $V^\perp$. We write $x_V$ and $x_{V^\perp}$ for the components of $x$ in $V$ and $V^\perp$.

The \emph{identified set} of a quantity over a class $\mathcal F$ of responses is the set of its values over all responses in $\mathcal F$ that produce the same law of $(H,Y)$ \citep{manski2003partial}. Let $\gamma\coloneqq\EE\nabla F(H)$ denote the mean gradient, which gives every steering intervention the attribution $\bar u^\top\gamma$.

\begin{proposition}[Non-identification of the attribution]\label{prop:discrete}
Let \Cref{asm:cut} hold. Let $\mathcal F$ be a class of responses $G$ that satisfy \Cref{asm:cut}\ref{asm:cut:reg} and $\EE\|\nabla G(H)\|^2<\infty$. Suppose that $\mathcal F$ contains $F$ and is closed under adding affine functions. Let the edit satisfy $\EE\|\Delta(H)\|^2<\infty$. For $\zeta\in V^\perp$ and a fixed $x_1\in\operatorname{supp}(H)$ put $F_\zeta(x)\coloneqq F(x)+\zeta^\top(x-x_1)$. Then
\begin{enumerate}[label=(\alph*),leftmargin=*,itemsep=0pt]
\item\label{prop:discrete:same} $F_\zeta\in\mathcal F$, $F_\zeta=F$ on $\operatorname{supp}(H)$ and every observational quantity takes the same value under $F$ and $F_\zeta$,
\item\label{prop:discrete:attr} the attribution under $F_\zeta$ is $\EE[\Delta(H)^\top\nabla F_\zeta(H)]=\alpha(\Delta)+\bar u^\top\zeta$ and an integrable finite effect shifts by the same amount $\bar u^\top\zeta$,
\item\label{prop:discrete:perp} the identified set of $\gamma$ is invariant under translation by $V^\perp$ and the identified set of $\alpha(\Delta)$ is $\RR$ whenever $\bar u\notin V$.
\end{enumerate}
\end{proposition}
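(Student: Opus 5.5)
The construction is explicit, so I would verify (a)–(c) in order, with \Cref{lem:range} as the one structural input.

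For \ref{prop:discrete:same}, $F_\zeta-F$ is the affine function $x\mapsto\zeta^\top(x-x_1)$. Closure of $\mathcal F$ under affine additions then gives $F_\zeta\in\mathcal F$. Regularity and the moment conditions are inherited automatically: adding an affine term preserves $C^1$ and polynomial growth, and $\nabla F_\zeta=\nabla F+\zeta$ has finite second moment. For any $x\in\operatorname{supp}(H)$, the difference $x-x_1$ lies in $V$ by the definition \eqref{eq:V}. Since $\zeta\in V^\perp$, the added term vanishes there, so $F_\zeta=F$ on $\operatorname{supp}(H)$.

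One measure-theoretic point needs care. The law of $(H,Y)$ depends on $F$ only through $F(H)$, which is a.s.\ the restriction of $F$ to the support, and $H\in\operatorname{supp}(H)$ almost surely. Under \Cref{asm:cut}\ref{asm:cut:target}, I take the joint law to be generated as $Y=\psi(F(H))+\eta$, with the same conditional law of $\eta$ given $H$ under both responses. Then $(H,Y)$ has the same law under $F$ and $F_\zeta$. Every observational quantity, being by definition a functional of that law, is therefore unchanged. This applies equally to the population law and the empirical law, since the argument uses only the support.

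For \ref{prop:discrete:attr}, I would compute
\[
\EE[\Delta(H)^\top\nabla F_\zeta(H)]=\alpha(\Delta)+\EE[\Delta(H)]^\top\zeta=\alpha(\Delta)+\bar u^\top\zeta.
\]
Integrability of both terms follows from Cauchy--Schwarz using $\EE\|\Delta\|^2<\infty$ and $\EE\|\nabla F\|^2<\infty$. For the finite effect, the pointwise identity is
\[
F_\zeta(H+\Delta)-F_\zeta(H)=D(\Delta(H))+\zeta^\top\Delta(H).
\]
Here $\zeta^\top\Delta$ is integrable, so whenever $\EE D(\Delta(H))$ exists the shifted effect exists and differs by exactly $\bar u^\top\zeta$.

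For \ref{prop:discrete:perp}, I would first record the converse of (a): any $G\in\mathcal F$ in the identified set can be translated to $G+\zeta^\top(\cdot-x_1)$ and stays in it. Its mean gradient is $\EE\nabla G+\zeta$, so the identified set of $\gamma$ is closed under adding $V^\perp$, and symmetrically under subtracting it. For $\alpha$, suppose $\bar u\notin V$. Decompose $\bar u=\bar u_V+\bar u_{V^\perp}$ with $\bar u_{V^\perp}\neq0$, and choose $\zeta=t\,\bar u_{V^\perp}/\|\bar u_{V^\perp}\|^2$. Then $\bar u^\top\zeta=t$ for every $t\in\RR$. This yields the attribution $\alpha(\Delta)+t$, so the identified set is all of $\RR$. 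Note that here $V=\range(\Sigma_H)$ by \Cref{lem:range}, so the orthogonal decomposition is legitimate.

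The main obstacle is the law-equality step in (a). The rest is bookkeeping, but there I must make explicit that the conditional law of $\eta$ given $H$ is held fixed across responses. That is, the data-generating model is indexed by $F$ only through $\psi\circ F$ evaluated on the support, with $\psi$ applied consistently to the affine perturbation. If needed, I would state this as the reading of \Cref{asm:cut}\ref{asm:cut:target} used in defining the identified set.
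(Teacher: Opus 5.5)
Your proposal is correct and follows the paper's proof essentially step for step: closure under affine additions together with $\zeta^\top(x-x_1)=0$ on the support gives (a), the pointwise shifts $\nabla F_\zeta=\nabla F+\zeta$ and $D_\zeta(\Delta(H))=D(\Delta(H))+\zeta^\top\Delta(H)$ give (b), and (c) follows by applying the construction to every observationally equivalent $G\in\mathcal F$ (in both directions $\pm\zeta$) and choosing $\zeta\propto\bar u_{V^\perp}$. Two differences are cosmetic: your normalization $\zeta=t\,\bar u_{V^\perp}/\|\bar u_{V^\perp}\|^2$ replaces the paper's $\zeta=t\,\bar u_{V^\perp}$, and you state explicitly that the noise is held fixed where the paper says ``same noise $\eta$''. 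The appeal to $V=\range(\Sigma_H)$ for the orthogonal decomposition is harmless but unnecessary, since any subspace of $\RR^p$ admits one.
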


\Cref{prop:discrete} establishes an identification boundary. Identification here is relative to the observational law and the response class. It asks whether all responses in the class that induce that law have the same attribution. For the fixed white-box model, gradients determine the prompt-level attribution contributions and their empirical mean. The same observational law is compatible with every value of the derivative along an edited direction that leaves $V$. The law restricts $\gamma$ at most through its component $\gamma_V$. Consequently, within such a class and unless the mean edit is an exact combination of the activation differences, no observational quantity fixes the size or the sign of the intervention's first-order effect $\alpha(\Delta)$ or of its integrable finite effect. Since $F_\zeta$ keeps the values of $F$ on the support and the Lipschitz constant of its gradient, a bound on that constant leaves this non-identification conclusion unchanged. The construction extends the off-manifold argument of \citet{anders2020fairwashing} for the input gradients of a classifier to the attributions of interventions.

\noindent\textbf{Population and observed support.} \Cref{prop:discrete} holds for every law of $H$. \Cref{thm:gap} decomposes the gap between $\rho$ and $\alpha$ under either law. For the empirical law of a stratum's observed activation-output pairs, \Cref{prop:discrete} leaves the empirical-average attribution unrestricted whenever the empirical mean edit lies outside the span of the observed activation differences. This statement holds within the response class of the proposition. Statistics computed solely from these pairs remain unchanged under its perturbations, with fitting choices and algorithmic randomness held fixed. A small observed span does not imply that the population support is low-dimensional. An evaluation stratum has at most eleven prompts and hence $\dim V\le10$ at its empirical law. Across models, the median share of the mean steering edit's norm outside the observed span is nearly one (\Cref{app:span-check}). The boundary therefore applies to the finite evaluation design actually used.

Observational evidence alone therefore leaves the intervention question open. In a white-box language model, gradients supply the additional information. We next use it to diagnose the gap.

\section{Why the probe and the intervention can disagree}\label{sec:gap}

\Cref{prop:discrete} concerns observational discovery statistics in general. This section studies one canonical translation of observational association into an intervention-scale prediction, the probe-implied change $\rho(\Delta)$. To compare $\alpha(\Delta)$ with the probe, we fix a stratum and a summary $A$ and let
\begin{gather*}
v_A(u)\coloneqq\Sigma_HA^\top(A\Sigma_HA^\top)^{-1}Au,\\
u'\coloneqq A^\top(A\Sigma_HA^\top)^{-1}Au,
\end{gather*}
which satisfy $\Sigma_Hu'=v_A(u)$. We call
\begin{equation}\label{eq:steinres}
\st_w\coloneqq w^\top\bigl\{\Cov(H,Y)-\Sigma_H\gamma\bigr\}
\end{equation}
the \emph{Stein residual} of a direction $w$. The following theorem shows that under \Cref{asm:cut} the gap between the probe-implied change and the intervention's attribution decomposes exactly.

\begin{theorem}[Probe-to-intervention identity]\label{thm:gap}
Assume \Cref{asm:cut}. For every law of $H$ with finite second moments and $\EE\|\nabla F\|^2<\infty$, every summary $A$ with $A\Sigma_HA^\top\succ0$ and every edit with $\EE\|\Delta(H)\|^2<\infty$,
\begin{equation}\label{eq:gap}
\rho(\Delta)-\alpha(\Delta)=\st_{\bar u'}+\cm(\Delta)-\ec(\Delta),
\end{equation}
where $\cm(\Delta)\coloneqq(v_A(\bar u)-\bar u)^\top\gamma$ is the \emph{omitted-variable term} and $\ec(\Delta)\coloneqq\EE[(\Delta(H)-\bar u)^\top(\nabla F(H)-\gamma)]$ is the \emph{edit-gradient covariance}.
\end{theorem}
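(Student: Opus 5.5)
The plan is a direct algebraic verification. Both $\rho(\Delta)$ and $\alpha(\Delta)$ can be rewritten as inner products with $\gamma$ plus remainders, and the three terms on the right of \eqref{eq:gap} are exactly those remainders. No analytic argument beyond integrability is needed. In particular, no Stein-type integration by parts is required, because $\st_w$ is \emph{defined} as a residual rather than shown to vanish.

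First I will check that every quantity is finite. By \Cref{asm:cut}\ref{asm:cut:target}, $\EE Y^2<\infty$. Together with finite second moments of $H$, this makes $\Cov(H,Y)$ finite. Since $\EE\|\nabla F\|^2<\infty$, the mean gradient $\gamma=\EE\nabla F(H)$ exists. Cauchy--Schwarz, applied with $\EE\|\Delta(H)\|^2<\infty$, makes $\Delta(H)^\top\nabla F(H)$ integrable. Hence $\alpha(\Delta)$, $\bar u$ and $\ec(\Delta)$ are well defined. The condition $A\Sigma_HA^\top\succ0$ makes $(A\Sigma_HA^\top)^{-1}$, $\beta_S$, $v_A$ and $u'$ well defined.

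Next I will rewrite the probe-implied change. Since $S=AH$, we have $\Cov(S)=A\Sigma_HA^\top$ and $\Cov(S,Y)=A\Cov(H,Y)$. Substituting into \eqref{eq:rho} gives
\[
\rho(\Delta)=\bar u^\top A^\top(A\Sigma_HA^\top)^{-1}A\Cov(H,Y)=\bar u'^\top\Cov(H,Y).
\]
This uses the symmetry of $(A\Sigma_HA^\top)^{-1}$ to identify $A^\top(A\Sigma_HA^\top)^{-1}A\bar u$ with $\bar u'$. Adding and subtracting $\bar u'^\top\Sigma_H\gamma$ and using $\Sigma_H\bar u'=v_A(\bar u)$ with $\Sigma_H$ symmetric, I obtain
\[
\rho(\Delta)=\st_{\bar u'}+v_A(\bar u)^\top\gamma .
\]

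Then I will split the attribution about its means. Writing $\Delta(H)=\bar u+(\Delta(H)-\bar u)$ and $\nabla F=\gamma+(\nabla F-\gamma)$, the two cross terms have mean zero. Therefore $\alpha(\Delta)=\bar u^\top\gamma+\ec(\Delta)$. Subtracting this from the previous display yields
\[
\rho-\alpha=\st_{\bar u'}+(v_A(\bar u)-\bar u)^\top\gamma-\ec(\Delta),
\]
which is \eqref{eq:gap} with $\cm(\Delta)$ as defined.

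I expect no real obstacle. The only delicate points are bookkeeping: confirming the transpose and symmetry steps that turn $\bar u^\top A^\top\beta_S$ into $\bar u'^\top\Cov(H,Y)$, and justifying the integrability that lets the expectation be split. I would also remark that the affine $\psi$ and the noise $\eta$ play no role in the identity itself. They matter only for interpreting $\st$, because under \Cref{asm:cut}\ref{asm:cut:target} we have $\Cov(H,Y)=\Cov(H,F(H))$, which is what makes the Stein residual a comparison of fitted slope with average gradient.
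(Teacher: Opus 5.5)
Your proposal is correct and follows the paper's proof step for step. The paper likewise rewrites $\rho(\Delta)=\bar u'^\top\Cov(H,Y)=\st_{\bar u'}+v_A(\bar u)^\top\gamma$ through the probe's normal equations and $\Sigma_H\bar u'=v_A(\bar u)$, expands $\alpha(\Delta)=\bar u^\top\gamma+\ec(\Delta)$ about the means, subtracts the two, and handles integrability by Cauchy--Schwarz as you do.
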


\Cref{thm:gap} establishes an exact decomposition of the probe's error under its stated moment and nonsingularity hypotheses, including the singular empirical law of a stratum. Each term has a classical counterpart in average derivatives and Stein discrepancies \citep{brillinger1983generalized,stoker1986consistent,goldstein2019nongaussian}, omitted-variable bias \citep{goldberger1991course} and the filter-pattern distinction \citep{haufe2014interpretation}. The identity combines them for a specified intervention. One backward pass per prompt then makes the probe's error estimable term by term within a stratum. Since $v_A(\bar u)\in V$, the part $-\bar u_{V^\perp}^\top\gamma$ of the omitted-variable term is exactly what \Cref{prop:discrete} leaves unidentified. \Cref{tab:terms} reads the three terms.

\begin{table}[t]
\centering\footnotesize
\setlength{\tabcolsep}{3pt}
\renewcommand{\arraystretch}{1.1}
\caption{The three terms of \eqref{eq:gap}, what each compares, when it vanishes and where it arises in SAE pipelines. The empirical stratum laws are finitely supported. The Gaussian case is a benchmark (\Cref{lem:stein}).}\label{tab:terms}
\begin{tabular}{@{}>{\raggedright\arraybackslash}p{0.19\columnwidth}>{\raggedright\arraybackslash}p{0.24\columnwidth}>{\raggedright\arraybackslash}p{0.24\columnwidth}>{\raggedright\arraybackslash}p{0.23\columnwidth}@{}}
\toprule
Term & Compares & Zero when & SAE instance \\
\midrule
Stein residual $\st_{\bar u'}$ & observational regression slope with mean gradient & $F$ globally affine, or $H$ Gaussian with $\Sigma_H\succ0$ & a response curved across prompts \\
Omitted variable $\cm(\Delta)$ & probe's activation pattern with mean edit & $(v_A(\bar u)-\bar u)\perp\gamma$, e.g.\ $v_A(\bar u)=\bar u$ & an edit at positions the probe does not read \\
Edit-gradient covariance $\ec(\Delta)$ & prompt-varying edit with model sensitivity & fixed edit, as for steering & ablation of a rarely firing latent \\
\bottomrule
\end{tabular}
\end{table}

The Stein residual is the error of a regression slope used in place of the mean gradient, since every least-squares slope $\beta_\star$ of $Y$ on $H$ gives $\st_{\bar u'}=v_A(\bar u)^\top(\beta_\star-\gamma)$ (\Cref{lem:range}). For a stratum whose empirical law has affinely independent support and $\EE[Y\mid H]=F(H)$, the plug-in residual is
\begin{equation}\label{eq:secant}
\st_w=(\Sigma_Hw)^\top(b-\gamma_V),
\end{equation}
where $b\in V$ is the slope of the affine function that interpolates $F$ on the support (\Cref{lem:finite-stein}). It then compares a secant of the response with its mean gradient. When $\Sigma_H\succ0$, a linear probe is an average-derivative estimate under the Gaussian score with the activation's moments. Its Stein residual vanishes for Gaussian activations (\Cref{lem:stein,app:terms}).

The omitted-variable term is the mean gradient along the difference between the probe's activation pattern $v_A(\bar u)$ \citep{haufe2014interpretation} and the mean edit that the intervention applies. For an edit along the read direction it has the form of omitted-variable bias in a short regression \citep{goldberger1991course}, with the mean gradient in place of the long-regression coefficient. Feature splitting and absorption \citep{bricken2023monosemanticity,chanin2024absorption} can spread a concept across latents with dependent scores (\Cref{lem:vif}). For example, let $H=(X,X)$ with $\VV(X)>0$, $F(x)=x_2$, a probe on the first coordinate and the steering edit $e_1$. Then $\rho=1$ while $\alpha=0$. The whole gap is the omitted-variable term, since the probe's pattern moves both coordinates and the edit moves only the first. \Cref{app:terms} gives closed-form examples of all three terms. The edit-gradient covariance separates the mean of $\Delta(H)^\top\nabla F(H)$ from the product of the means. It vanishes for steering, whose edit is fixed. It can be nonzero for ablations, whose edit varies with the prompt.

\noindent\textbf{Ablations.} Pipelines also use ablation interventions, which remove a latent's own activation. The edit they induce varies with the prompt, including at positions that a final-position probe does not read. Let $\Gamma_\tau\coloneqq d_j^\top E_\tau\nabla F$ be the directional derivative at position $\tau$.

\begin{corollary}[Ablations]\label{cor:ablate}
Assume the hypotheses of \Cref{thm:gap} and let $\Delta_j$ be the edit of the zero-ablation intervention on latent $j$.
\begin{enumerate}[label=(\alph*),leftmargin=*,itemsep=0pt]
\item\label{cor:ablate:dec} The attribution splits as
\begin{equation}\label{eq:ablate}
\alpha(\Delta_j)=L_j\coloneqq M_j+C_j,
\end{equation}
where $M_j\coloneqq-\sum_\tau\EE[z_{\tau j}]\EE[\Gamma_\tau]$ and $C_j\coloneqq-\sum_\tau\Cov(z_{\tau j},\Gamma_\tau)$.
\item\label{cor:ablate:blind} If $A=d_j^\top E_{\rm fin}$ and $z_{{\rm fin},j}=0$ almost surely, then $\rho(\Delta_j)=0$ and $\rho(\Delta_j)-\alpha(\Delta_j)=-\alpha(\Delta_j)$.
\end{enumerate}
\end{corollary}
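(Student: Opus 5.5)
Both parts follow by direct computation from the definitions in \eqref{eq:effect} and \eqref{eq:rho}. The theorem's hypotheses supply exactly the moments needed for every expectation to be finite.

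For part \ref{cor:ablate:dec}, I substitute the zero-ablation edit $\Delta_j(H)=-\sum_\tau z_{\tau j}E_\tau^\top d_j$ into \eqref{eq:effect}. This gives
\[
\Delta_j(H)^\top\nabla F(H)=-\sum_\tau z_{\tau j}\,d_j^\top E_\tau\nabla F(H)=-\sum_\tau z_{\tau j}\Gamma_\tau .
\]
Taking expectations and applying linearity over the finitely many positions yields $\alpha(\Delta_j)=-\sum_\tau\EE[z_{\tau j}\Gamma_\tau]$. Each summand then splits by the identity $\EE[XY]=\EE X\,\EE Y+\Cov(X,Y)$, which produces $M_j+C_j$.

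The step that needs care is integrability. The hypothesis $\EE\|\Delta_j(H)\|^2<\infty$ gives square-integrability of each coefficient $z_{\tau j}$, because the blocks $E_\tau^\top d_j$ occupy disjoint coordinates and $d_j$ is a unit vector, so $\|\Delta_j(H)\|^2=\sum_\tau z_{\tau j}^2$. Since $|\Gamma_\tau|\le\|\nabla F\|$, the hypothesis $\EE\|\nabla F\|^2<\infty$ gives square-integrability of $\Gamma_\tau$. Cauchy--Schwarz then makes every product, mean and covariance finite. I would also note, as a consistency check, that $M_j=\bar u^\top\gamma$ and $C_j=\ec(\Delta_j)$, which agrees with the structure of \Cref{thm:gap}.

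For part \ref{cor:ablate:blind}, I compute $A\bar u$ with $A=d_j^\top E_{\rm fin}$. The selection matrices satisfy $E_{\rm fin}E_\tau^\top=0$ for $\tau\neq{\rm fin}$ and $E_{\rm fin}E_{\rm fin}^\top=I$, and $d_j^\top d_j=1$. Hence
\[
A\Delta_j(H)=-z_{{\rm fin},j},
\]
so $A\bar u=-\EE z_{{\rm fin},j}=0$ under the almost-sure hypothesis. Therefore $\rho(\Delta_j)=\bar u^\top A^\top\beta_S=(A\bar u)^\top\beta_S=0$, and the final claim $\rho(\Delta_j)-\alpha(\Delta_j)=-\alpha(\Delta_j)$ follows immediately.

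No step poses a real obstacle. The only point requiring attention is the integrability bookkeeping in part \ref{cor:ablate:dec}, which justifies splitting $\EE[z_{\tau j}\Gamma_\tau]$ into means and covariances.
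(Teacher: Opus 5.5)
Your proof is correct and follows essentially the paper's computation. The paper treats this as the special case $d=d_j$, $\delta_\tau=-z_{\tau j}$ of a general corollary for edits along one direction at several positions, using the expansion $\alpha(\Delta)=\bar u^\top\gamma+\ec(\Delta)$ from the proof of \Cref{thm:gap} (your consistency check) and the same integrability argument via $\|\Delta_j(H)\|^2=\sum_\tau z_{\tau j}^2$. For part~(b) it writes $\rho=\bar u'^\top\Cov(H,Y)$ and uses $\bar u'=0$, which is equivalent to your direct use of $A\bar u=0$ in \eqref{eq:rho}.
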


\Cref{cor:ablate} establishes that the attribution of an ablation intervention adds to the mean-activation term $M_j$ a covariance $C_j$ of unrestricted sign. Rare firing makes the code's coefficient of variation large and can make $|C_j|$ exceed $|M_j|$ (\Cref{app:proofs:ablate}). Part~\ref{cor:ablate:blind} shows that the probe misses the whole attribution when the ablation edits only positions the probe does not read. The general statement for any set of edited positions and for mean-ablation is in \Cref{app:proofs:ablate}.

The identity is exact. \Cref{sec:results} tests its consequences in practice. Are its terms large? Does the attribution predict realized interventions? Does the distinction change which latents one would choose?

\section{Experiments}\label{sec:results}

\noindent\textbf{Setup.} We study four public model-SAE pairs, GPT-2 \citep{radford2019language}, Gemma-2-2B and Gemma-2-9B \citep{gemmateam2024gemma2} and Llama-3.1-8B \citep{dubey2024llama3}, each with a residual-stream SAE at one layer \citep{bloom2024gpt2residualsaes,lieberum2024llamascope,he2024llamascope}. Eleven tasks cover factual recall (country capitals in free and multiple-choice form, country languages, birthplaces and occupations), lexical tasks (antonyms and last letters), in-context tasks (few-shot induction and indirect object identification) and arithmetic (simple addition and two-digit subtraction) (\Cref{app:protocols}). The antonym task, with no evaluation stratum of at least three prompts, enters only the two-prompt ablation sensitivity analysis and the held-out set-ablation comparison. Unless stated otherwise, quantities are estimated on the held-out evaluation split within strata of at least three prompts, in within-task standard deviations of the log-probability (\Cref{app:protocols}). Stratum-level $t$ intervals use a many-strata normal approximation (\Cref{thm:inference}) and in simulation at the observed stratum sizes cover near the nominal level for the ablation terms and below it for the Stein residual (\Cref{app:coverage}). Other intervals are percentile bootstrap intervals. The fixed-direction analysis, the coefficient-selected steering and the single-latent ablations are prespecified. The maximum-activation steering and the set-ablation comparison of selection rules are post hoc, specified after the aggregate results and before evaluation. The probe-to-ablation breakdown, the averaged-summary analysis, the ranking comparison and the worked example are post hoc (\Cref{app:prespec}). Because steering uses step sizes scaled on the evaluation split, its comparisons are descriptive. The results answer three questions. How large is the gap? Does the attribution predict realized interventions? Does the distinction change which latents one would choose?

\begin{table*}[t]
\centering
\footnotesize
\setlength{\tabcolsep}{2.4pt}
\renewcommand{\arraystretch}{1.15}
\caption{Main results on held-out prompts. Columns Pairs through $|\cm/\beta|$ use FDFI-selected directions under unit steering. Ratios are medians over pairs, with intervals that resample strata within task. Column $<0.1$ is the share with $|\alpha/\beta|<0.1$ and $\rho\approx0$ is the post hoc share of single-latent zero-ablations with $|\rho|\le10^{-3}<|\alpha|$. Steering is descriptive, on coefficient-selected directions ($90$, $100$, $100$ and $90$ pairs), with $T$ the central difference quotient of the realized steering interventions and $|T/\beta|$ in units of $10^{-3}$. Ablation gives the reduction in mean absolute error from the probe to the gradient prediction. Sign gives the gradient/probe sign agreement among rows with nonzero realized effects, averaged equally over tasks, with a zero prediction counted as a disagreement. Choosing latents is descriptive. Top-1 counts the units, out of the number in parentheses, in which the FDFI statistic, $|\beta|$ or $|\alpha|$ ranks first the latent with the largest $|T|$ at one maximum activation. The last columns give model means of the attribution-set minus comparator contrast in absolute held-out mean set-ablation effects, in standardized units, with all five comparators in \Cref{app:choice}. $^\ast$Fewer than thirty pairs.}\label{tab:main}
\resizebox{\textwidth}{!}{%
\begin{tabular}{@{}l rrrrrr rrrr rrr@{}}
\toprule
& \multicolumn{6}{c}{Size of the gap} & \multicolumn{4}{c}{Realized interventions} & \multicolumn{3}{c}{Choosing latents} \\
\cmidrule(lr){2-7}\cmidrule(lr){8-11}\cmidrule(l){12-14}
& & & & & & & \multicolumn{2}{c}{Steering} & \multicolumn{2}{c}{Ablation} & & \multicolumn{2}{c}{Sets, attr.\ $-$} \\
\cmidrule(lr){8-9}\cmidrule(lr){10-11}\cmidrule(l){13-14}
Model & Pairs & $|\alpha/\beta|$ (\%) [CI] & $<0.1$ & $|\st/\beta|$ & $|\cm/\beta|$ & $\rho\approx0$ & $|T/\alpha|$ & $|T/\beta|$ & MAE red. & Sign (\%) & Top-1 & $|\beta|$ & FDFI \\
\midrule
GPT-2 & 46 & 4.6 [3.0, 5.9] & 0.739 & 0.614 & 0.514 & 0.370 & 1.000 & 7.60 & 0.303 & 92/21 & 1/0/4 (6) & 0.454 & 0.290 \\
Gemma-2-2B & 230 & 3.2 [2.2, 3.6] & 0.770 & 0.922 & 0.593 & 0.296 & 1.000 & 3.68 & 0.119 & 91/22 & 0/0/4 (9) & 0.506 & $-0.209$ \\
Gemma-2-9B & 457 & 2.8 [1.9, 3.5] & 0.751 & 0.728 & 0.463 & 0.287 & 1.001 & 2.50 & 0.114 & 93/17 & 0/0/4 (9) & 2.137 & 1.380 \\
Llama-3.1-8B & 23$^\ast$ & 3.2 [2.2, 5.1] & 0.826 & 0.583 & 0.406 & 0.043 & 1.000 & 3.36 & 0.300 & 97/35 & 2/1/4 (4) & 0.309 & 0.206 \\
\bottomrule
\end{tabular}}
\end{table*}

\begin{figure*}[t]
\centering
\includegraphics[width=\textwidth]{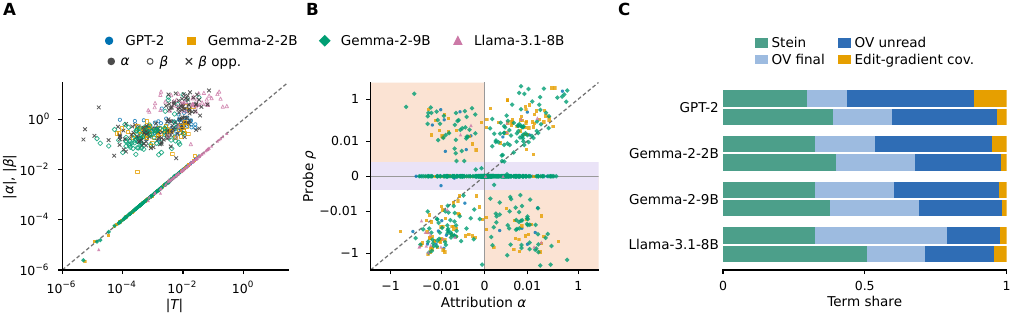}
\caption{Probe-implied changes and intervention attributions on held-out prompts, in standardized units. (A) Magnitudes of the unit-direction attribution $\alpha$ (filled) and the probe coefficient $\beta$ (open, crosses for the sign opposite to $T$) against the central difference quotient $|T|$ of steering at $\pm0.25$ within-stratum standard deviations for the $380$ coefficient-selected pairs. (B) Probe-implied change $\rho$ against attribution $\alpha$ on symmetric-log axes under zero-ablation of the $756$ FDFI-selected pairs. Shading marks opposite signs and negligible predictions ($|\rho|<10^{-3}$). Dashed diagonals mark equality. (C) Shares of the summed absolute terms of \eqref{eq:gap} for the final-position probe, with the omitted-variable (OV) term split by position and zero- and mean-ablation as upper and lower bars for each model. Panel A is descriptive. Panels B and C are post hoc and descriptive (\Cref{app:x6,app:x8}).}\label{fig:headline}
\end{figure*}

\noindent\textbf{Size of the gap.} For FDFI-selected directions under unit steering, the median ratio of attribution to probe-coefficient magnitude is $2.8$ to $4.6\%$ across models, with the ratio below 10\% for most pairs (\Cref{tab:main}). This mismatch is not specific to FDFI. When we instead select 10 active latents per task by selection-split coefficient magnitude, their coefficients exceed realized small-step steering quotients by more than two orders of magnitude in the model-specific medians. The decomposition points to why: under both selections, the Stein residual and omitted-variable term are each comparable in magnitude to the coefficient. The discrepancy also persists beyond the final-position probe. In a post hoc descriptive analysis, probes on position-averaged decoder scores and FDFI's averaged-code input remove most of the blindness to unread positions yet still overstate the attribution in aggregate (\Cref{app:fdfi-summary}). Finally, sampling-noise analyses and prespecified robustness variants support the same qualitative conclusion (\Cref{app:noise,app:e1r}).

\noindent\textbf{Where the gap arises.} Ablation interventions reveal how the terms in \Cref{cor:ablate} arise in SAE pipelines. On GPT-2 and the Gemma models, about a third of FDFI-selected pairs have a non-negligible attribution but a zero or negligible probe-implied change (\Cref{fig:headline}B, \Cref{tab:main}). Almost all of these latents never fire at the final position, which is the only position read by the probe. For these pairs, the gap arises at unread positions and is dominated by the omitted-variable term, with the edit-gradient covariance accounting for much of the remainder (\Cref{fig:headline}C, \Cref{app:position}). More broadly, under zero-ablation, the covariance term \(C_j\) exceeds the mean-activation term in magnitude for a substantial share of selected pairs (\Cref{app:e8}).

\noindent\textbf{Predicting realized interventions.} We first test whether attribution predicts realized interventions in the local regime. For coefficient-selected directions, we steer held-out prompts by a quarter of a within-stratum standard deviation of the latent score. The unit-direction attribution closely matches the realized central difference quotient \(T\) and is closer than the probe coefficient for every pair (\Cref{fig:headline}A, \Cref{tab:main}, \Cref{app:x6}). We next examine how far this first-order approximation remains accurate by increasing the steering displacement for FDFI-selected directions. At a displacement scaled to each latent's maximum observed activation, attribution agrees in sign with \(T\) for nearly all pairs and remains closer than the coefficient for almost all. At five times this scale, the approximation breaks down for most pairs, with \(T\) differing from the attribution by more than half its magnitude. Single-latent ablations provide an independent check. Prompt-specific gradient predictions have lower mean absolute error than probe-based predictions on every model and substantially higher sign agreement among ablations with nonzero realized effects (\Cref{tab:main}, \Cref{app:singletons}).

\noindent\textbf{Choosing latents for intervention.} The gap affects feature choice as well as effect prediction. Within FDFI-selected candidates, attribution more often ranks first the latent with the largest realized steering response than either the FDFI statistic or the probe coefficient (\Cref{app:choice-rank}). This agreement is nearly universal at small displacements and remains strong at a displacement scaled to the latent's maximum observed activation, before weakening as the first-order approximation breaks down at larger scales. We then ask whether this advantage carries over to held-out feature selection. Using training folds to select up to 10 latents per task, attribution-selected sets produce larger absolute mean ablation effects on held-out prompts than correlation-, rank-test- or coefficient-selected sets in every model-task unit (\Cref{app:choice-sets}). These contrasts remain positive after normalization by edit norm and their two-stage intervals exclude zero. Attribution-selected sets also exceed class-mean-difference sets in most units. Compared with FDFI selection, the contrast is positive in most units and in three of four models.

\section{Discussion}\label{sec:discussion}

Observational discovery and intervention attribution answer different questions. Empirical studies report that activation-side evidence about latents need not predict their effects under intervention \citep{arad2025saes, tiwari2026decodability}. We formalize this separation with an intervention-specific estimand, an identification boundary within the response class of \Cref{prop:discrete} and an exact decomposition (\Cref{thm:gap}). In the evaluated pipelines the gap changes which latents one would choose.

Discovery statistics remain useful for finding candidate features, whereas claims about the consequence of a specified intervention rest on evidence about that intervention. In a white-box model one backward pass per prompt computes the empirical attribution. For each such intervention one can report its attribution with a held-out interval, the terms of \eqref{eq:gap} and the share of the mean edit outside the observed span.

\noindent\textbf{Scope and extensions.} The attribution is the first-order effect of a specified intervention. Whether a latent is the sole mechanism behind a behaviour or necessary or sufficient for it is a broader mechanistic question. The identity holds for any layer, dictionary, linear readout and intervention that keeps the SAE error term. Our experiments cover four model-SAE pairs at one layer each and eleven tasks, with few selected pairs for Llama-3.1-8B, teacher-forced outputs and steering up to five maximum activations. Two directions follow. For inference, the stratum-level construction targets a fixed-weight population attribution under independent sampling across strata (\Cref{app:inference}). Inference over template clusters would relax that independence. Anytime-valid confidence sequences under adaptive intervention sampling \citep{asiaee2026certified} suggest a route to inference for adaptively chosen interventions. For finite effects, the intervention must also be evaluated at the scale of interest. Downstream nonlinearities can make first-order approximations inaccurate \citep{wang2026attribution}, motivating identification and inference beyond local attribution.

\bibliography{references}

\clearpage
\appendix
\begin{center}{\Large\bfseries Appendix}\end{center}

\counterwithin{lemma}{section}
\counterwithin{proposition}{section}
\counterwithin{remark}{section}
\counterwithin{figure}{section}
\counterwithin{table}{section}
\counterwithin{theorem}{section}
\counterwithin{corollary}{section}

\input{sections/app_notation}
\FloatBarrier
\input{sections/app_proofs}
\FloatBarrier
\input{sections/app_inference}
\FloatBarrier
\input{sections/app_empirics}
\FloatBarrier
\input{sections/app_reports}
\FloatBarrier
\input{sections/app_finite}
\FloatBarrier
\input{sections/app_choice}
\FloatBarrier
\input{sections/app_case}

\typeout{get arXiv to do 4 passes: Label(s) may have changed. Rerun}
\end{document}

%% file: sections/app_notation.tex
\section{Notation and organization}\label{app:notation}

\paragraph{Organization.} \Cref{app:proofs} proves the results of the main text, reads the terms of the probe-to-intervention identity and extends \Cref{cor:ablate} to any set of edited positions. \Cref{app:inference} states the split design, proves the many-strata inference result behind the stratum-level intervals and measures their coverage by simulation. \Cref{app:empirics} describes the tasks, models, protocols and additional analyses, the FDFI selection step and the observed span of each stratum. \Cref{app:reports,app:finite} compare the probe with the attribution under steering and ablation, test the gap against sampling noise and compare both with realized ablations. \Cref{app:case} details the worked example.

\paragraph{Notation.} The index $i$ runs over prompts, $j$ over latents, $k$ over strata, $t$ over tasks and $\tau$ over positions. In the FDFI statistics of \Cref{app:selection}, $l$ runs over active latents and $k$ over disentangled directions. A subscript $i$ evaluates a quantity at the activation $H_i$ of prompt $i$ and a subscript $j$ refers to the single-latent edit $\Delta_j$ of latent $j$. Without a prompt index a quantity is defined at a fixed activation $H$ or is an expectation over the stratum law, as $L_j$ is. In the tables $r$ denotes a Pearson correlation. The table below lists the symbols used in more than one appendix.

\medskip
{\footnotesize
\renewcommand{\arraystretch}{1.05}
\setlength{\tabcolsep}{3pt}
\noindent\begin{tabular}{@{}>{\raggedright\arraybackslash}p{0.10\textwidth}>{\raggedright\arraybackslash}p{0.38\textwidth}>{\raggedright\arraybackslash}p{0.10\textwidth}>{\raggedright\arraybackslash}p{0.38\textwidth}@{}}
\toprule
Symbol & Meaning & Symbol & Meaning \\
\midrule
$H$, $h$, $E_\tau$ & activations at all positions after the first, final block $E_{\rm fin}H$, block selector &
$\rho(\Delta)$ & probe-implied change $\bar u^\top A^\top\beta_S$ \\
$F$, $Y$ & standardized log-probability, target $F(H)+\eta$ &
$u'$ & $A^\top(A\Sigma_HA^\top)^{-1}Au$ \\
$\mu$, $\Sigma_H$, $\gamma$ & mean and covariance of $H$, mean gradient &
$\st_w$, $\cm$, $\ec$ & the three terms of \eqref{eq:gap} \\
$V$, $x_V$, $x_{V^\perp}$ & $\range(\Sigma_H)$, components in $V$ and $V^\perp$ &
$\mathrm{VIF}_j$ & variance inflation of a score (\Cref{lem:vif}) \\
$d_j$, $\beta$ & unit decoder direction, coefficient of $d_j^\top h$ &
$M_j$, $C_j$, $L_j$ & the terms of \eqref{eq:ablate}, $L_j=M_j+C_j$ \\
$\Delta(H)$, $\bar u$ & edit of an intervention keeping the SAE error term, mean edit &
$T$, $\sigma_{tj}$ & central difference quotient of steering at step $0.25\,\sigma_{tj}$, pooled within-stratum standard deviation of the score \\
$D(v)$, $\alpha(\Delta)$ & $F(H+v)-F(H)$, attribution \eqref{eq:effect} &
$K_s$, $n_k$, $\omega_k$ & number of strata, stratum size, stratum weight \\
$S=AH$, $\beta_S$ & summary, coefficient $\Cov(S)^{-1}\Cov(S,Y)$ &
$q$ & Benjamini-Hochberg or Benjamini-Yekutieli level \\
\bottomrule
\end{tabular}
}

%% file: sections/app_proofs.tex
\section{Proofs and supporting results}\label{app:proofs}

Every statement below holds for any law of $H$ that meets its hypotheses, the population law of a stratum and the empirical law of its observed prompts alike. We write $\mu\coloneqq\EE H$ and $Y=F(H)+\eta$ with $\EE[\eta\mid H]=0$, where within a stratum $F$ stands for $\psi\circ F$ and absorbs the fixed first position (\Cref{asm:cut}).

\subsection{Supporting lemmas}\label{app:lemmas}

For a positive $C^1$ density $\varrho$ on $\RR^q$ we write $s_\varrho\coloneqq-\nabla\log\varrho$ for its score.

\begin{lemma}[Integration by parts, \citealp{stein1981estimation,stoker1986consistent}]\label{lem:stein}
Let $\varrho$ be a positive $C^1$ density on $\RR^q$ with score $s_\varrho$.
\begin{enumerate}[label=(\alph*),leftmargin=*,itemsep=0pt]
\item\label{lem:stein:ibp} If $f\in W^{1,1}_{\rm loc}$ and $\varrho\nabla f$, $f\varrho\,s_\varrho$ and $f\varrho$ lie in $L^1$, then $\EE_\varrho[\nabla f]=\EE_\varrho[f\,s_\varrho]$.
\item\label{lem:stein:gauss} Let $\varrho=\cN(\mu,\Sigma)$ with $\Sigma\succ0$ and $X\sim\varrho$. The integrability conditions of \ref{lem:stein:ibp} then reduce to $\EE\|\nabla f(X)\|<\infty$ and $\EE[|f(X)|\,\|X-\mu\|]<\infty$. Under them, $\Cov(X,f(X))=\Sigma\,\EE\nabla f(X)$.
\item\label{lem:stein:act} Under the hypotheses of \Cref{thm:gap}, a Gaussian activation law with $\Sigma_H\succ0$ has $\Cov(H,Y)=\Sigma_H\gamma$. Every Stein residual $\st_w$ then vanishes and $\gamma=\Sigma_H^{-1}\Cov(H,Y)$.
\end{enumerate}
\end{lemma}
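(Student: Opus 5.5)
The plan is to prove the three parts in order, since each rests on the previous one.

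\emph{Part \ref{lem:stein:ibp}.} This is the multivariate integration-by-parts identity. Note that $\varrho s_\varrho=-\nabla\varrho$, so $f\varrho s_\varrho=-f\nabla\varrho$, and the claim $\EE_\varrho[\nabla f]=\EE_\varrho[f s_\varrho]$ is equivalent to
\[
\int\nabla(f\varrho)\,dx=0.
\]
The product rule for $f\in W^{1,1}_{\rm loc}$ times the $C^1$ function $\varrho$ gives $\nabla(f\varrho)=\varrho\nabla f+f\nabla\varrho$. By hypothesis both summands are in $L^1$, and $f\varrho\in L^1$ as well, so $g\coloneqq f\varrho\in W^{1,1}(\RR^q)$. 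It remains to show that $\int\partial_k g=0$ for every $g\in W^{1,1}(\RR^q)$. Take a smooth cutoff $\chi_R(x)=\chi(x/R)$ with $\chi=1$ on the unit ball and support in the ball of radius $2$. Then $\int\partial_k(\chi_R g)=0$ by compact support, that is, by the definition of the weak derivative tested against $\chi_R$. Expanding,
\[
\int\chi_R\,\partial_k g+\int g\,\partial_k\chi_R=0 .
\]
The first integral tends to $\int\partial_k g$ by dominated convergence. The second is bounded by $\|\nabla\chi\|_\infty R^{-1}\|g\|_{L^1}\to0$. This cutoff argument is the only delicate step, and it is exactly where the hypothesis $f\varrho\in L^1$ is used.

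\emph{Part \ref{lem:stein:gauss}.} For the Gaussian density the score is $s_\varrho(x)=\Sigma^{-1}(x-\mu)$. The three integrability conditions become:
\begin{itemize}[leftmargin=1.2em,itemsep=0pt]
\item $\EE\|\nabla f(X)\|<\infty$;
\item $\EE[|f(X)|\,\|\Sigma^{-1}(X-\mu)\|]<\infty$, which is equivalent to $\EE[|f|\,\|X-\mu\|]<\infty$ because $\Sigma^{-1}$ is invertible;
\item $\EE|f(X)|<\infty$.
\end{itemize}
The last condition follows from the second: on $\{\|X-\mu\|\ge1\}$ it is dominated by $|f|\,\|X-\mu\|$, and on the complementary ball the Gaussian density is bounded above and below. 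So $f\in L^1(\varrho)$ by local integrability of the $W^{1,1}_{\rm loc}$ function $f$ on the ball. Part \ref{lem:stein:ibp} then gives $\EE\nabla f=\Sigma^{-1}\EE[(X-\mu)f(X)]$. The right-hand expectation equals $\Cov(X,f(X))$, since $\EE[X-\mu]=0$ and $f$ is integrable. Multiplying by $\Sigma$ gives $\Cov(X,f(X))=\Sigma\,\EE\nabla f(X)$.

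\emph{Part \ref{lem:stein:act}.} We apply part \ref{lem:stein:gauss} with $f=F$.
\begin{itemize}[leftmargin=1.2em,itemsep=0pt]
\item \Cref{asm:cut}\ref{asm:cut:reg} makes $F$ a $C^1$ function, hence in $W^{1,1}_{\rm loc}$.
\item Polynomial growth of $F$ and $\nabla F$, together with the fact that Gaussians have all moments, gives the two integrability conditions. Alternatively they follow from $\EE\|\nabla F\|^2<\infty$ and $\EE F^2\le\EE Y^2<\infty$ by Cauchy--Schwarz.
\end{itemize}
This yields $\Cov(H,F(H))=\Sigma_H\gamma$. Next, \Cref{asm:cut}\ref{asm:cut:target} gives $Y=F(H)+\eta$ with $\EE[\eta\mid H]=0$. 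Hence $\Cov(H,\eta)=\EE[(H-\mu)\EE[\eta\mid H]]=0$, which is well defined because $\EE\|H\|^2$ and $\EE\eta^2$ are finite. Therefore $\Cov(H,Y)=\Sigma_H\gamma$. By definition \eqref{eq:steinres}, $\st_w=w^\top\{\Cov(H,Y)-\Sigma_H\gamma\}=0$ for every $w$, and since $\Sigma_H\succ0$ we get $\gamma=\Sigma_H^{-1}\Cov(H,Y)$.

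I expect the main obstacle to be the boundary-term argument in \ref{lem:stein:ibp}. The remaining steps are bookkeeping of moments.
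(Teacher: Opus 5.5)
Your proof is correct. Part (c) follows the paper's argument essentially step for step. You apply (b) with $f=F$ and get the two integrability conditions from $\EE\|\nabla F(H)\|^2<\infty$ and $\EE F(H)^2\le\EE Y^2$ via Cauchy--Schwarz. You then remove $\Cov(H,\eta)$ by the tower property.

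You should say explicitly that the bound $\EE F(H)^2\le\EE Y^2$ is Jensen's inequality applied to $F(H)=\EE[Y\mid H]$, as the paper does. The same bound is what makes $\eta=Y-F(H)$ square integrable in your last step.

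The real difference is in (a) and (b). The paper does not prove these; it cites them as the classical identity of Stein and Stoker. You instead give a self-contained argument.
\begin{itemize}
\item For (a), you use the product rule to put $g=f\varrho$ in $W^{1,1}(\RR^q)$. You then show $\int\partial_k g=0$ with a cutoff $\chi_R$ tested against the weak derivative. This correctly isolates $f\varrho\in L^1$ as the hypothesis that controls the boundary term.
\item For (b), you actually verify the claimed reduction from three integrability conditions to two. You show that $f\varrho\in L^1$ follows from $\EE[|f(X)|\,\|X-\mu\|]<\infty$ together with local integrability of $f$; only the upper bound on the Gaussian density near $\mu$ is needed there.
\end{itemize}
This gives a complete proof of the ``reduce to'' assertion, which the paper leaves to the citation, at the cost of a few lines.

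Your alternative route to the integrability in (c), through the polynomial growth of \Cref{asm:cut}\ref{asm:cut:reg} and Gaussian moments, is also valid. It simply uses a different hypothesis than the paper's second-moment route.
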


\begin{proof}[Proof of \Cref{lem:stein}]
Parts \ref{lem:stein:ibp} and \ref{lem:stein:gauss} are the classical identity of \citet{stein1981estimation} and \citet{stoker1986consistent}. For part \ref{lem:stein:act} we apply \ref{lem:stein:gauss} with $f=F$ and $\Sigma=\Sigma_H$. The response is $C^1$ and hence lies in $W^{1,1}_{\rm loc}$. The hypotheses of \Cref{thm:gap} give $\EE\|\nabla F(H)\|^2<\infty$. Since $F(H)=\EE[Y\mid H]$, Jensen's inequality gives $\EE F(H)^2\le\EE Y^2<\infty$ and the Cauchy-Schwarz inequality then gives $\EE[|F(H)|\,\|H-\mu\|]<\infty$. Part \ref{lem:stein:gauss} therefore yields $\Cov(H,F(H))=\Sigma_H\gamma$. The noise $\eta=Y-F(H)$ is square integrable with $\EE[\eta\mid H]=0$, which gives $\Cov(H,\eta)=\EE[(H-\mu)\EE(\eta\mid H)]=0$. Hence $\Cov(H,Y)=\Cov(H,F(H))+\Cov(H,\eta)=\Sigma_H\gamma$. By \eqref{eq:steinres}, $\st_w=w^\top\{\Cov(H,Y)-\Sigma_H\gamma\}=0$ for every $w$. Inverting $\Sigma_H$ gives $\gamma=\Sigma_H^{-1}\Cov(H,Y)$, the Gaussian identification of \citet{brillinger1983generalized} and \citet{stoker1986consistent}. This completes the proof.
\end{proof}

\begin{lemma}[Support, covariance and least squares]\label{lem:range}
Let $H$ have finite second moments, mean $\mu$ and covariance $\Sigma_H$. Let $V$ be the span of the differences of support points, as in \eqref{eq:V}.
\begin{enumerate}[label=(\alph*),leftmargin=*,itemsep=0pt]
\item\label{lem:range:range} $\range(\Sigma_H)=V$ and $H-\mu\in V$ almost surely.
\item\label{lem:range:ne} If $\EE Y^2<\infty$, then $\Cov(H,Y)\in V$ and the normal equations $\Sigma_H\beta=\Cov(H,Y)$ have a solution.
\item\label{lem:range:sol} If $\EE Y^2<\infty$, every solution $\beta_\star$ of the normal equations gives $\rho(\Delta)=v_A(\bar u)^\top\beta_\star$ for every summary $A$ with $A\Sigma_HA^\top\succ0$. If in addition $\EE\|\nabla F(H)\|<\infty$, it also gives $\st_w=(\Sigma_Hw)^\top(\beta_\star-\gamma)$ for every $w$.
\item\label{lem:range:dim} A law on $n_0$ points has $\dim V\le n_0-1$. A law with a density has $V=\RR^p$.
\end{enumerate}
\end{lemma}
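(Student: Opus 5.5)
Part \ref{lem:range:range} comes first, since the other parts rest on it. For $w\in\RR^p$ we have $w^\top\Sigma_Hw=\VV(w^\top H)$. Since $\Sigma_H$ is symmetric, $\range(\Sigma_H)=\ker(\Sigma_H)^\perp$. So it suffices to show that $\ker\Sigma_H=V^\perp$.

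\emph{$V^\perp\subseteq\ker\Sigma_H$.} If $w\perp V$, then $w^\top x=w^\top x_1$ for every $x\in\operatorname{supp}(H)$ and a fixed support point $x_1$. Because $H\in\operatorname{supp}(H)$ almost surely, $w^\top H$ is almost surely constant. Hence $\VV(w^\top H)=0$, and by positive semidefiniteness $\Sigma_Hw=0$.

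\emph{$\ker\Sigma_H\subseteq V^\perp$.} If $\Sigma_Hw=0$, then $w^\top H=w^\top\mu$ almost surely. The set $\{x\colon w^\top x=w^\top\mu\}$ is a closed set of full measure, so it contains the support. Differences of support points are therefore orthogonal to $w$, which gives $w\in V^\perp$.

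For the claim $H-\mu\in V$: take an orthonormal basis $w_1,\ldots,w_r$ of $V^\perp$. Each $w_i^\top(H-\mu)$ has mean zero and variance zero, so it vanishes almost surely.

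Part \ref{lem:range:ne}. By Cauchy--Schwarz, $\Cov(H,Y)=\EE[(H-\mu)(Y-\EE Y)]$ is well defined. For $w\in V^\perp$, part \ref{lem:range:range} gives $w^\top(H-\mu)=0$ almost surely, so $w^\top\Cov(H,Y)=0$. Thus $\Cov(H,Y)\in (V^\perp)^\perp=V=\range(\Sigma_H)$, and the normal equations have a solution.

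Part \ref{lem:range:sol}.

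\emph{Formula for $\rho$.} With $S=AH$ we have $\Cov(S)=A\Sigma_HA^\top$ and $\Cov(S,Y)=A\Cov(H,Y)=A\Sigma_H\beta_\star$. Hence $\beta_S=(A\Sigma_HA^\top)^{-1}A\Sigma_H\beta_\star$, and
\[
\rho(\Delta)=\bar u^\top A^\top(A\Sigma_HA^\top)^{-1}A\Sigma_H\beta_\star=v_A(\bar u)^\top\beta_\star,
\]
using the symmetry of $\Sigma_H$ and of $A\Sigma_HA^\top$.

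\emph{Formula for $\st_w$.} Assume $\EE\|\nabla F(H)\|<\infty$, so $\gamma$ is finite. Substituting $\Cov(H,Y)=\Sigma_H\beta_\star$ gives
\[
\st_w=w^\top\Sigma_H(\beta_\star-\gamma)=(\Sigma_Hw)^\top(\beta_\star-\gamma).
\]
Both formulas are independent of which solution is chosen, because solutions differ by elements of $\ker\Sigma_H=V^\perp$, and $v_A(\bar u)$ and $\Sigma_Hw$ lie in $V$.

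Part \ref{lem:range:dim}. If the law sits on $n_0$ points $x_1,\ldots,x_{n_0}$, each has positive mass, so the support is exactly these points. Every difference satisfies $x_a-x_b=(x_a-x_1)-(x_b-x_1)$, so $V$ is spanned by the $n_0-1$ vectors $x_a-x_1$.

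If the law has a density and $w\neq0$ lies in $V^\perp$, part \ref{lem:range:range} puts $H$ almost surely in a hyperplane. A hyperplane has Lebesgue measure zero, which contradicts the existence of a density. Hence $V^\perp=\{0\}$ and $V=\RR^p$.

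The only step needing care is the support argument in part \ref{lem:range:range}. Identifying $\ker\Sigma_H$ with $V^\perp$ requires the support to be the smallest closed set of full measure. That property gives both directions: $H$ lies in the support almost surely, and a closed set of full measure (the affine hyperplane) contains the support. The remaining parts are linear algebra.
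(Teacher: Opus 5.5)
Your proof is correct and follows the paper's argument essentially step for step. You identify $\ker\Sigma_H$ with $V^\perp$ using the fact that the support is the smallest closed set of full measure, and you deduce $\Cov(H,Y)\in V$ from $H-\mu\in V$ almost surely. You then substitute the normal equations into $\beta_S$ and $\st_w$, and settle the density case by the Lebesgue-nullity of a proper affine subspace. Your remark that the formulas in part (c) do not depend on which solution $\beta_\star$ is chosen is a correct addition that the paper leaves implicit.
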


\begin{proof}[Proof of \Cref{lem:range}]
The proof proceeds in three parts. First, we identify the kernel of $\Sigma_H$ with $V^\perp$, which gives \ref{lem:range:range}. Second, we solve the normal equations and rewrite the Stein residual and the probe-implied change, which gives \ref{lem:range:ne} and \ref{lem:range:sol}. Third, we bound the dimension of $V$, which gives \ref{lem:range:dim}. Throughout, $\mathcal X\coloneqq\operatorname{supp}(H)$ is the smallest closed set of full measure and $x_1\in\mathcal X$ is fixed. Since $x-x'=(x-x_1)-(x'-x_1)$, the vectors $x-x_1$ with $x\in\mathcal X$ span $V$.

\textbf{Part 1 (Kernel and range of the covariance).}
Let $w\perp V$. Then $w^\top x=w^\top x_1$ for every $x\in\mathcal X$. Since $H\in\mathcal X$ almost surely, $w^\top H=w^\top x_1$ almost surely. Taking expectations gives $w^\top x_1=w^\top\mu$ and hence $w^\top(H-\mu)=0$ almost surely. Because $\Sigma_H$ is positive semidefinite, $\|\Sigma_H^{1/2}w\|^2=w^\top\Sigma_Hw=\VV(w^\top H)=0$ forces $\Sigma_Hw=0$. Conversely, let $\Sigma_Hw=0$. Then $\VV(w^\top H)=0$ and $w^\top H=w^\top\mu$ almost surely. The closed hyperplane $\{x\colon w^\top x=w^\top\mu\}$ has full measure and therefore contains $\mathcal X$. Hence $w^\top(x-x')=0$ for all $x,x'\in\mathcal X$, which gives $w\perp V$. Thus $\ker\Sigma_H=V^\perp$ and the symmetry of $\Sigma_H$ gives $\range(\Sigma_H)=(\ker\Sigma_H)^\perp=V$. Applying the first step to each vector of a basis of $V^\perp$ gives $H-\mu\in V$ almost surely.

\textbf{Part 2 (Normal equations, Stein residual and probe-implied change).}
By the Cauchy-Schwarz inequality, $\EE[\|H-\mu\|\,|Y|]<\infty$. Since $\EE(H-\mu)=0$, $\Cov(H,Y)=\EE[(H-\mu)Y]$, which lies in $V=\range(\Sigma_H)$ because $H-\mu\in V$ almost surely. The normal equations are therefore solvable. Let $\beta_\star$ be a solution. If $\EE\|\nabla F(H)\|<\infty$, the mean gradient $\gamma$ exists. Substituting the normal equations into \eqref{eq:steinres} then gives $\st_w=w^\top\Sigma_H\beta_\star-w^\top\Sigma_H\gamma=(\Sigma_Hw)^\top(\beta_\star-\gamma)$ by the symmetry of $\Sigma_H$. For the probe-implied change, $\beta_S=(A\Sigma_HA^\top)^{-1}A\Cov(H,Y)$ and the definition of $\bar u'$ give
\begin{align*}
\rho(\Delta)
&=\bar u^\top A^\top\beta_S\\
&=\bar u'^\top\Cov(H,Y)\\
&=\bar u'^\top\Sigma_H\beta_\star\\
&=v_A(\bar u)^\top\beta_\star .
\end{align*}
The last step follows from $\Sigma_H\bar u'=v_A(\bar u)$ and the symmetry of $\Sigma_H$.

\textbf{Part 3 (Dimension of the support span).}
Let $\mathcal X=\{x_1,\ldots,x_{n_0}\}$. The $n_0-1$ vectors $x_i-x_1$ then span $V$, which gives $\dim V\le n_0-1$. Now let the law of $H$ have a density. Since $H\in\mathcal X\subset x_1+V$ almost surely, the affine subspace $x_1+V$ has probability one. A proper affine subspace is Lebesgue-null and therefore carries no mass under a law with a density. Hence $V=\RR^p$. This completes the proof.
\end{proof}

\subsection{Proof of \texorpdfstring{\Cref{prop:discrete}}{Proposition 1}}\label{app:nonid}

\begin{proof}[Proof of \Cref{prop:discrete}]
The proof proceeds in three parts. First, we show that $F_\zeta$ belongs to $\mathcal F$ and produces the law of $(H,Y)$, which gives \ref{prop:discrete:same}. Second, we compute the gradient of $F_\zeta$, its attribution and its finite effect, which gives \ref{prop:discrete:attr}. Third, we apply the construction to every observationally equivalent response, which gives \ref{prop:discrete:perp}. Since $\EE\|\Delta(H)\|^2<\infty$ and $\EE\|\nabla G(H)\|^2<\infty$ for every $G\in\mathcal F$, the Cauchy-Schwarz inequality makes every attribution below finite.

\textbf{Part 1 (Observational equivalence).}
Since $x\mapsto\zeta^\top(x-x_1)$ is affine, closure of $\mathcal F$ under adding affine functions places $F_\zeta$ in $\mathcal F$. For $x\in\operatorname{supp}(H)$ both $x$ and $x_1$ lie in the support, which gives $x-x_1\in V$. Because $\zeta\in V^\perp$, $\zeta^\top(x-x_1)=0$ and hence $F_\zeta(x)=F(x)$. Since $H\in\operatorname{supp}(H)$ almost surely, $F_\zeta(H)=F(H)$ almost surely. With the same noise $\eta$, for which $\EE[\eta\mid H]=0$ still holds, the target $F_\zeta(H)+\eta$ equals $Y$ almost surely. The response $F_\zeta$ thus produces the law of $(H,Y)$. Every observational quantity is a functional of this law and therefore takes the same value under $F$ and $F_\zeta$.

\textbf{Part 2 (Gradient, attribution and finite effect).}
Since $F$ is $C^1$, $\nabla F_\zeta=\nabla F+\zeta$ everywhere with a constant difference $\zeta$. Hence $\nabla F_\zeta$ has the Lipschitz constant of $\nabla F$ and $\nabla^2F_\zeta=\nabla^2F$ wherever $F$ is twice differentiable. Taking expectations gives $\EE\nabla F_\zeta(H)=\gamma+\zeta$. The attribution under $F_\zeta$ is therefore
\begin{align*}
\EE\bigl[\Delta(H)^\top\nabla F_\zeta(H)\bigr]
&=\EE\bigl[\Delta(H)^\top\nabla F(H)\bigr]+\EE[\Delta(H)]^\top\zeta\\
&=\alpha(\Delta)+\bar u^\top\zeta .
\end{align*}
The last step follows from the definitions of $\alpha(\Delta)$ and $\bar u$. For the finite effect write $D_\zeta(v)\coloneqq F_\zeta(H+v)-F_\zeta(H)$. Since $\zeta^\top(H+v-x_1)-\zeta^\top(H-x_1)=\zeta^\top v$, the definition of $F_\zeta$ gives $D_\zeta(\Delta(H))=D(\Delta(H))+\zeta^\top\Delta(H)$. The increment $\zeta^\top\Delta(H)$ is integrable because $\EE\|\Delta(H)\|^2<\infty$. Hence $D_\zeta(\Delta(H))$ is integrable exactly when $D(\Delta(H))$ is. Taking expectations gives $\EE D_\zeta(\Delta(H))=\EE D(\Delta(H))+\bar u^\top\zeta$.

\textbf{Part 3 (Identified sets).}
Write $\mathcal G$ for the identified set of $\gamma$. Let $\tilde F\in\mathcal F$ produce the law of $(H,Y)$. Parts 1 and 2 with $F$ replaced by $\tilde F$ show that $\tilde F_\zeta\in\mathcal F$ produces the same law and has mean gradient $\EE\nabla\tilde F(H)+\zeta$. Hence $\mathcal G+\zeta\subset\mathcal G$ for every $\zeta\in V^\perp$. Applying this inclusion to $-\zeta\in V^\perp$ gives $\mathcal G\subset\mathcal G+\zeta$. Consequently $\mathcal G+\zeta=\mathcal G$, which is the invariance under translation by $V^\perp$. Now let $\bar u\notin V$. Then $\bar u_{V^\perp}\ne0$ and $\zeta=t\,\bar u_{V^\perp}$ lies in $V^\perp$ for every $t\in\RR$. By Part 2 this choice gives the attribution $\alpha(\Delta)+t\,\bar u^\top\bar u_{V^\perp}=\alpha(\Delta)+t\,\|\bar u_{V^\perp}\|^2$, which takes every real value as $t$ ranges over $\RR$. The identified set of $\alpha(\Delta)$ is therefore $\RR$. This completes the proof.
\end{proof}

\noindent\textbf{A two-prompt example.} Let a stratum contain two prompts with activations $x_1\ne x_2$. Fix a unit vector $v\perp(x_1-x_2)$. Such a vector exists whenever $p\ge2$. Let $\varepsilon$ be a nonzero steering scale. Within the response class of \Cref{prop:discrete}, the responses $F_t(x)=F(x)+t\,v^\top(x-x_1)$, $t\in\RR$, agree on both prompts and preserve every observational quantity. For the fixed edit $\Delta=\varepsilon v$, their attributions are $\varepsilon v^\top\gamma+\varepsilon t$. Thus the same observations permit every real attribution for this specified intervention.

\subsection{Proof of \texorpdfstring{\Cref{thm:gap}}{Theorem 1}}\label{app:proofs:gap}

\begin{proof}[Proof of \Cref{thm:gap}]
The proof proceeds in three parts. First, we expand the attribution about the means of $\Delta(H)$ and $\nabla F(H)$. Second, we rewrite the probe-implied change through the normal equations of the probe. Third, we subtract the two expansions. By the Cauchy-Schwarz inequality and the second moments, every expectation below is finite.

\textbf{Part 1 (Expansion of the attribution).}
Writing $\Delta(H)=\bar u+(\Delta(H)-\bar u)$ and $\nabla F(H)=\gamma+(\nabla F(H)-\gamma)$ and expanding the product yields
\begin{align*}
\alpha(\Delta)
&=\bar u^\top\gamma+\EE\bigl[(\Delta(H)-\bar u)^\top\gamma\bigr]+\EE\bigl[\bar u^\top(\nabla F(H)-\gamma)\bigr]+\EE\bigl[(\Delta(H)-\bar u)^\top(\nabla F(H)-\gamma)\bigr]\\
&=\bar u^\top\gamma+\ec(\Delta).
\end{align*}
The last step follows from $\EE[\Delta(H)-\bar u]=0$, $\EE[\nabla F(H)-\gamma]=0$ and the definition of $\ec(\Delta)$.

\textbf{Part 2 (Expansion of the probe-implied change).}
Since $A\Sigma_HA^\top\succ0$, the normal equations $A\Sigma_HA^\top\beta_S=A\Cov(H,Y)$ of the probe give $\beta_S=(A\Sigma_HA^\top)^{-1}A\Cov(H,Y)$. Substituting $\beta_S$ into \eqref{eq:rho} yields
\begin{align*}
\rho(\Delta)
&=\bar u^\top A^\top(A\Sigma_HA^\top)^{-1}A\Cov(H,Y)\\
&=\bar u'^\top\Cov(H,Y)\\
&=\bar u'^\top\bigl\{\Cov(H,Y)-\Sigma_H\gamma\bigr\}+\bar u'^\top\Sigma_H\gamma\\
&=\st_{\bar u'}+v_A(\bar u)^\top\gamma .
\end{align*}
The second step uses the definition of $\bar u'$ and the symmetry of $(A\Sigma_HA^\top)^{-1}$. The last step follows from \eqref{eq:steinres}, $\Sigma_H\bar u'=v_A(\bar u)$ and the symmetry of $\Sigma_H$.

\textbf{Part 3 (The identity).}
Subtracting the expansion of Part 1 from that of Part 2 yields $\rho(\Delta)-\alpha(\Delta)=\st_{\bar u'}+(v_A(\bar u)-\bar u)^\top\gamma-\ec(\Delta)$, which is \eqref{eq:gap} by the definition of $\cm(\Delta)$. This completes the proof.
\end{proof}

The derivation uses only the means and covariances of $H$, $Y$, $\Delta(H)$ and $\nabla F(H)$ and requires of $\Sigma_H$ no more than $A\Sigma_HA^\top\succ0$. Since the empirical law of a stratum has finite moments of every order, the identity holds for it whenever $A\Sigma_HA^\top\succ0$. Averaging the stratum-specific identities with fixed stratum weights $\omega_k$ gives \eqref{eq:gap} for averages over strata, with each term the $\omega$-weighted average of its stratum-specific values.

At the empirical law of a stratum of $n_k$ prompts, $A\Sigma_HA^\top$ has rank at most $\dim V\le n_k-1$ (\Cref{lem:range}). A summary with more rows, such as the scores of one latent at several positions, therefore has a singular $A\Sigma_HA^\top$ in small strata. For such summaries we pool within-stratum moments over the strata of a task. Fix a summary $S=AH$ and stratum weights $\omega_k\ge0$ with $\sum_k\omega_k=1$. Write $\Cov_k$ for a covariance in stratum $k$ and $\gamma_k$ for its mean gradient. Put
\[
\bar\Sigma_S\coloneqq\sum_k\omega_k\Cov_k(S),\qquad\bar\sigma_{SY}\coloneqq\sum_k\omega_k\Cov_k(S,Y),\qquad\bar\sigma_{S\gamma}\coloneqq\sum_k\omega_k\Cov_k(S,H)\gamma_k,
\]
and assume $\bar\Sigma_S\succ0$. Pooled degrees of freedom alone do not guarantee this condition, which is therefore a condition of every pooled analysis. For an intervention with mean edit $\bar u$ the pooled probe implies the change $\rho\coloneqq\xi^\top\bar\sigma_{SY}$ with $\xi\coloneqq\bar\Sigma_S^{-1}A\bar u$. Adding and subtracting $\xi^\top\bar\sigma_{S\gamma}$ gives $\rho-\alpha_{\rm ref}=\xi^\top(\bar\sigma_{SY}-\bar\sigma_{S\gamma})+(\xi^\top\bar\sigma_{S\gamma}-\alpha_{\rm ref})$ for every reference attribution $\alpha_{\rm ref}$. The first term is the pooled Stein residual and the second the pooled omitted-variable term. For a scalar score and a mean edit $\bar u$ common to all strata, as in steering, the two terms are averages of the stratum-specific terms of \eqref{eq:gap}. Write $\sigma_k^2$ for the variance of the score in stratum $k$ and put $q_k\coloneqq\omega_k\sigma_k^2/\sum_l\omega_l\sigma_l^2$, which vanishes for strata with $\sigma_k^2=0$. Every sum below runs over the strata with $\sigma_k^2>0$. Since $\Cov_k(S,Y)=\sigma_k^2\beta_k$ for the stratum-specific coefficient $\beta_k$,
\begin{align*}
\beta_{\rm pool}&\coloneqq\bar\sigma_{SY}/\bar\Sigma_S=\textstyle\sum_kq_k\beta_k,\\
\xi^\top(\bar\sigma_{SY}-\bar\sigma_{S\gamma})&=\textstyle\sum_kq_k\,\st^{(k)}_{\bar u'_k},\\
\xi^\top\bar\sigma_{S\gamma}-\textstyle\sum_kq_k\bar u^\top\gamma_k&=\textstyle\sum_kq_k\,\cm_k(\Delta),
\end{align*}
where $\st^{(k)}$ and $\cm_k$ are the terms of \eqref{eq:gap} in stratum $k$ and $\bar u'_k\coloneqq A^\top A\bar u/\sigma_k^2$. The reference attribution of the fixed-direction analysis is accordingly $\alpha_{\rm ref}=\sum_kq_k\bar u^\top\gamma_k$, the attribution averaged with the weights of the probe.

The identity also covers the statistics of published SAE pipelines.\label{app:survey} For a scalar summary, a correlation determines $\Cov(S,Y)$ up to scale, as does a group-mean difference for a binary target. \Cref{thm:gap} applies once these quantities are converted into the probe-implied change for the specified intervention, with the probe refitted when the target changes from correctness to the log-probability. The gap then consists of a Stein residual and an omitted-variable term for fixed-size steering and also includes the edit-gradient covariance for ablation or clamping. With a model-derived target and one backward pass per prompt, the attribution and these terms are estimable for pipelines that pass from a rank test, a $t$-statistic or a correlation to an ablation or a steering intervention \citep{patel2026functional,tahimic2025mechanistic,cho2026corrsteer} and for those that rank latents or control the errors of a selection \citep{zhang2025saefire,enkhbayar2025knockoffs,carlson2025making}.

\subsection{Stein residual and omitted-variable term}\label{app:terms}

When $\Sigma_H\succ0$, the least-squares slope of \Cref{lem:range} is unique and equals $\beta_\star=\Sigma_H^{-1}\EE[(H-\mu)Y]=\EE[Y\,s_0(H)]$ for the score $s_0(x)\coloneqq\Sigma_H^{-1}(x-\mu)$ of the Gaussian law with the activation's moments. A linear probe is then an average-derivative estimate under the Gaussian score, whose Stein residual vanishes for Gaussian activations (\Cref{lem:stein}\ref{lem:stein:act}). The residual also vanishes at every law when $F$ is affine with gradient $g$, since $\EE[\eta\mid H]=0$ then gives $\Cov(H,Y)=\Cov(H,F(H))=\Sigma_Hg$ and $\gamma=g$. The empirical law of a stratum is finitely supported and hence not Gaussian. Its covariance is singular whenever the stratum has at most $p$ prompts (\Cref{lem:range}). A support of affinely independent points, as for the empirical law of a stratum with affinely independent activations, has at most $p+1$ points and gives the Stein residual a secant form.

\begin{lemma}[Secant form of the Stein residual]\label{lem:finite-stein}
Let $\operatorname{supp}(H)=\{x_1,\ldots,x_{n_0}\}$ consist of affinely independent points. Then a unique pair $(b,b_0)\in V\times\RR$ satisfies $F(x_i)=b_0+b^\top x_i$ for every $i$. If $\EE[Y\mid H]=F(H)$, then $\Cov(H,Y)=\Sigma_Hb$ and
\begin{align*}
\st_w&=(\Sigma_Hw)^\top(b-\gamma)\\
&=(\Sigma_Hw)^\top(b-\gamma_V)\qquad\text{for every }w\in\RR^p .
\end{align*}
If $F$ is affine, then $b=\gamma_V$ and every Stein residual vanishes.
\end{lemma}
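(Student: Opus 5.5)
The proof has three steps: construct the interpolating affine function, compute $\Cov(H,Y)$ through it, and then compare $b$ with $\gamma$ using $\range(\Sigma_H)=V$ from \Cref{lem:range}.

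\textbf{Interpolant.} Affine independence means the vectors $x_i-x_1$, $i=2,\ldots,n_0$, are linearly independent. They span $V$ (\Cref{lem:range}), so they form a basis of $V$. Define $b\in V$ by its inner products with this basis, $b^\top(x_i-x_1)=F(x_i)-F(x_1)$ for $i\ge2$. Such a $b$ exists and is unique in $V$, since the Gram matrix of a basis is invertible. Setting $b_0\coloneqq F(x_1)-b^\top x_1$ then gives $F(x_i)=b_0+b^\top x_i$ for every $i$.

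For uniqueness, take two pairs $(b,b_0)$ and $(b',b_0')$. Their difference $c\coloneqq b-b'\in V$ satisfies $c^\top(x_i-x_1)=0$ for all $i$. So $c\in V\cap V^\perp=\{0\}$, and then $b_0=b_0'$ follows from the equation at $x_1$.

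\textbf{Covariance.} Since $H\in\operatorname{supp}(H)$ almost surely, $\EE[Y\mid H]=F(H)=b_0+b^\top H$ almost surely. Conditioning on $H$ gives
\[
\Cov(H,Y)=\EE[(H-\mu)\,\EE(Y\mid H)]=\EE[(H-\mu)(H-\mu)^\top]\,b=\Sigma_Hb.
\]
This step is justified because the finite support makes every moment finite. Substituting into \eqref{eq:steinres} and using the symmetry of $\Sigma_H$ yields $\st_w=w^\top\Sigma_H(b-\gamma)=(\Sigma_Hw)^\top(b-\gamma)$. Equivalently, $b$ solves the normal equations, and this is \Cref{lem:range}\ref{lem:range:sol} with $\beta_\star=b$.

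\textbf{Reduction to $\gamma_V$.} By \Cref{lem:range}\ref{lem:range:range}, $\Sigma_Hw\in\range(\Sigma_H)=V$, so it is orthogonal to $\gamma_{V^\perp}$. Therefore $(\Sigma_Hw)^\top\gamma=(\Sigma_Hw)^\top\gamma_V$, which gives the second displayed form.

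\textbf{Affine case.} This is the only step with a small subtlety. If $F(x)=g_0+g^\top x$ on $\RR^p$, then $\nabla F\equiv g$ and $\gamma=g$. The pair $(g_V,\,g_0+g_{V^\perp}^\top x_1)$ also interpolates $F$ on the support, because for each support point $g_{V^\perp}^\top x_i=g_{V^\perp}^\top x_1$ (as $x_i-x_1\in V$). Uniqueness within $V\times\RR$ then forces $b=g_V=\gamma_V$, and every $\st_w$ vanishes by the preceding formula.

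The main care point is that $b$ is unique only inside $V$. One must not claim $b=\gamma$, since $\gamma_{V^\perp}$ can be arbitrary, consistent with \Cref{prop:discrete}; the projection step above is exactly what handles this.
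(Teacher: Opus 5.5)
Your proposal is correct and follows the paper's proof essentially step for step: the basis $\{x_k-x_1\}_{k\ge2}$ of $V$ determines $b\in V$ and then $b_0$, and the tower property gives $\Cov(H,Y)=\Sigma_Hb$. The inclusion $\Sigma_Hw\in V$ then removes $\gamma_{V^\perp}$, and uniqueness in $V\times\RR$ forces $b=g_V=\gamma_V$ in the affine case. The differences are cosmetic, namely your explicit uniqueness argument via $V\cap V^\perp=\{0\}$ and your explicit interpolating pair $(g_V,\,g_0+g_{V^\perp}^\top x_1)$. The paper instead simply observes that $g_V$ satisfies the same inner-product equations that define $b$.
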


\begin{proof}[Proof of \Cref{lem:finite-stein}]
Affine independence makes the vectors $x_k-x_1$ with $k\ge2$ a basis of $V$. Every interpolating pair $(b,b_0)\in V\times\RR$ satisfies $b^\top(x_k-x_1)=F(x_k)-F(x_1)$ for $k\ge2$. These equations prescribe the inner products of $b$ with a basis of $V$ and therefore have a unique solution $b\in V$. The intercept is then fixed as $b_0\coloneqq F(x_1)-b^\top x_1$. This pair interpolates $F$ at every support point. Since $F(H)=b_0+b^\top H$ almost surely, $\Cov(H,F(H))=\Sigma_Hb$. Because $\EE[Y\mid H]=F(H)$, the tower property gives $\Cov(H,Y)=\EE[(H-\mu)F(H)]=\Sigma_Hb$. Substituting this into \eqref{eq:steinres} yields
\begin{align*}
\st_w
&=w^\top\Sigma_Hb-w^\top\Sigma_H\gamma\\
&=(\Sigma_Hw)^\top(b-\gamma)\\
&=(\Sigma_Hw)^\top(b-\gamma_V).
\end{align*}
The last step follows from $\Sigma_Hw\in\range(\Sigma_H)=V$ (\Cref{lem:range}), which annihilates $\gamma_{V^\perp}$. Finally, let $F$ be affine with gradient $g$. Since $x_k-x_1\in V$, $F(x_k)-F(x_1)=g^\top(x_k-x_1)=g_V^\top(x_k-x_1)$. Uniqueness then gives $b=g_V$. The mean gradient is $\gamma=g$, which gives $b=\gamma_V$. The residual therefore vanishes. This completes the proof.
\end{proof}

The model-derived targets used here give $\EE[Y\mid H]=F(H)$ at the empirical law of a stratum. The plug-in Stein residual of a stratum with affinely independent activations therefore takes the secant form \eqref{eq:secant}, which covers every assessed evaluation stratum except one IOI stratum per model (\Cref{app:span-check}). The Stein residual measured in \Cref{app:e1r} is a within-stratum U-statistic, which for the unnormalized residual $\st_w$ in a fixed direction $w$ is unbiased for the residual of the population law of the stratum.

For a single read direction fix $d$ with $\|d\|=1$ and put $A\coloneqq d^\top E_{\rm fin}$ and $u\coloneqq E_{\rm fin}^\top d$. Write $\sigma^2\coloneqq d^\top\Cov(h)d=u^\top\Sigma_Hu$ for the variance of the score $d^\top h$ and $\|x\|_M\coloneqq(x^\top Mx)^{1/2}$ for positive definite $M$. When $\Sigma_H\succ0$, the variance inflation of the score is $\mathrm{VIF}\coloneqq\sigma^2u^\top\Sigma_H^{-1}u$.

\begin{lemma}[Single-direction geometry]\label{lem:vif}
Let $\Sigma_H\succ0$. Then $\sigma^2>0$, $v_A(u)=\Sigma_Hu/\sigma^2$, $u'=u/\sigma^2$ and $\st_{u'}=\st_u/\sigma^2$. The variance inflation satisfies $\mathrm{VIF}\ge1$ and
\[
\|u-v_A(u)\|^2_{\Sigma_H^{-1}}=\frac{\mathrm{VIF}-1}{\sigma^2}.
\]
Moreover, $\mathrm{VIF}=1$ if and only if the score $d^\top h$ is uncorrelated with every $w^\top H$ with $w\perp u$. In that case $\Sigma_Hu=\sigma^2u$ and $v_A(u)=u$. The omitted-variable term then vanishes for every edit with $\bar u\in\operatorname{span}\{u\}$.
\end{lemma}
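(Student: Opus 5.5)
The lemma is finite-dimensional linear algebra with $A=d^\top E_{\rm fin}$ a row vector and $u=E_{\rm fin}^\top d=A^\top$. I would verify the algebraic identities first, then the Cauchy--Schwarz inequality for the variance inflation, and finally its equality case.

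\textbf{Algebraic identities.} Since $u=A^\top$ is nonzero ($\|d\|=1$ and $E_{\rm fin}$ selects a block) and $\Sigma_H\succ0$, we have $\sigma^2=u^\top\Sigma_Hu>0$. The scalar $A\Sigma_HA^\top$ equals $\sigma^2$ and $Au=d^\top E_{\rm fin}E_{\rm fin}^\top d=\|d\|^2=1$. Substituting into the definitions gives $v_A(u)=\Sigma_Hu/\sigma^2$ and $u'=u/\sigma^2$. Because $\st_w$ in \eqref{eq:steinres} is linear in $w$, it follows that $\st_{u'}=\st_u/\sigma^2$.

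\textbf{The norm identity and $\mathrm{VIF}\ge1$.} Expand
\[
\|u-\Sigma_Hu/\sigma^2\|^2_{\Sigma_H^{-1}}=u^\top\Sigma_H^{-1}u-2u^\top u/\sigma^2+u^\top\Sigma_Hu/\sigma^4 .
\]
I need $u^\top u=1$. This holds because $E_{\rm fin}^\top$ is an isometric embedding of the final block, so $\|u\|=\|d\|=1$. The three terms then equal $\mathrm{VIF}/\sigma^2-2/\sigma^2+1/\sigma^2=(\mathrm{VIF}-1)/\sigma^2$. Since the left-hand side is nonnegative, $\mathrm{VIF}\ge1$. Equivalently, this is Cauchy--Schwarz in the $\Sigma_H$ inner product applied to $\Sigma_H^{-1}u$ and $u$.

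\textbf{Equality case.} $\mathrm{VIF}=1$ if and only if the norm vanishes, that is, $u=\Sigma_Hu/\sigma^2$, or $\Sigma_Hu=\sigma^2u$. This is the step requiring care, since $\mathrm{VIF}=1$ must be translated into the covariance statement. Note that $\Cov(d^\top h,w^\top H)=\Cov(u^\top H,w^\top H)=w^\top\Sigma_Hu$.
\begin{itemize}
\item If $\Sigma_Hu=\sigma^2u$, then for $w\perp u$ we get $w^\top\Sigma_Hu=\sigma^2w^\top u=0$, so the score is uncorrelated with every such $w^\top H$.
\item Conversely, if $w^\top\Sigma_Hu=0$ for all $w\perp u$, then $\Sigma_Hu\in\operatorname{span}\{u\}$, say $\Sigma_Hu=cu$. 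Taking the inner product with $u$ gives $c=\sigma^2$, so $\Sigma_Hu=\sigma^2u$ and hence $\mathrm{VIF}=1$.
\end{itemize}
In this case $v_A(u)=u$. Since $v_A$ is linear, $v_A(\bar u)=\bar u$ for every $\bar u=cu$. Therefore $\cm(\Delta)=(v_A(\bar u)-\bar u)^\top\gamma=0$, as defined in \Cref{thm:gap}.
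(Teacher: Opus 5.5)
Your proof is correct and follows the paper's argument essentially step for step. The only differences are minor: you obtain the norm identity by expanding the square rather than through the Pythagorean decomposition in the $\Sigma_H^{-1}$ inner product, and you read off $\mathrm{VIF}\ge1$ and its equality case ($u=v_A(u)$, i.e.\ $\Sigma_Hu=\sigma^2u$) from the nonnegativity and definiteness of that norm instead of from a separate Cauchy--Schwarz step, which is slightly more economical.
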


\begin{proof}[Proof of \Cref{lem:vif}]
The proof proceeds in two parts. First, we compute the normalizing quantities and derive the distance identity. Second, we prove $\mathrm{VIF}\ge1$ and characterize equality. Since the position blocks are disjoint, $E_{\rm fin}E_{\rm fin}^\top$ is the identity.

\textbf{Part 1 (Normalization and distance).}
Since $A^\top=u$ and $\|u\|=\|d\|=1$, we have $Au=d^\top E_{\rm fin}E_{\rm fin}^\top d=1$ and $A\Sigma_HA^\top=u^\top\Sigma_Hu=\sigma^2$, which is positive because $\Sigma_H\succ0$. Substituting these scalars into the definitions gives $v_A(u)=\Sigma_Hu/\sigma^2$ and $u'=u/\sigma^2$. The Stein residual is linear in its direction, which gives $\st_{u'}=\st_u/\sigma^2$. Next, $\|v_A(u)\|^2_{\Sigma_H^{-1}}=u^\top\Sigma_Hu/\sigma^4=1/\sigma^2$ and $v_A(u)^\top\Sigma_H^{-1}\{u-v_A(u)\}=u^\top u/\sigma^2-1/\sigma^2=0$. The Pythagorean identity in the $\Sigma_H^{-1}$ inner product therefore yields
\begin{align*}
\|u-v_A(u)\|^2_{\Sigma_H^{-1}}
&=\|u\|^2_{\Sigma_H^{-1}}-\|v_A(u)\|^2_{\Sigma_H^{-1}}\\
&=u^\top\Sigma_H^{-1}u-\frac1{\sigma^2}\\
&=\frac{\mathrm{VIF}-1}{\sigma^2}.
\end{align*}
The last step follows from the definition of $\mathrm{VIF}$.

\textbf{Part 2 (Variance inflation and its equality case).}
The Cauchy-Schwarz inequality for $\Sigma_H^{1/2}u$ and $\Sigma_H^{-1/2}u$ gives $1=(u^\top u)^2\le(u^\top\Sigma_Hu)(u^\top\Sigma_H^{-1}u)=\mathrm{VIF}$. Equality holds if and only if $\Sigma_H^{1/2}u$ is parallel to $\Sigma_H^{-1/2}u$, that is, if and only if $\Sigma_Hu$ is parallel to $u$. Since the vectors orthogonal to $u^\perp$ form $\operatorname{span}\{u\}$, this holds if and only if $\Cov(d^\top h,w^\top H)=w^\top\Sigma_Hu=0$ for every $w\perp u$. In that case $\Sigma_Hu=(u^\top\Sigma_Hu)\,u=\sigma^2u$, which gives $v_A(u)=u$. For an edit with $\bar u=\vartheta u$ the linearity of $v_A$ then gives $v_A(\bar u)=\bar u$ and hence $\cm(\Delta)=(v_A(\bar u)-\bar u)^\top\gamma=0$. This completes the proof.
\end{proof}

The probe's pattern $v_A(u)=\Cov(H,d^\top h)/\VV(d^\top h)$ collects the slopes of the coordinates of $H$ on the score. It therefore also moves the coordinates correlated with the score, even at unread positions. For an edit $\bar u=\vartheta u$ along the read direction, $\cm(\Delta)=\vartheta\,(v_A(u)-u)^\top\gamma$ and the variance inflation measures the displacement $u-v_A(u)$ (\Cref{lem:vif}). The hypothesis $\Sigma_H\succ0$ fails at the empirical law of a stratum with at most $p$ prompts (\Cref{lem:range}). The variance inflation of \Cref{app:e1r} is therefore a pooled selection-split proxy for the dependence among latent scores. It does not bound the omitted-variable term of a stratum.

\paragraph{Closed-form examples.} \Cref{tab:toy} gives one example per term of \eqref{eq:gap}, each with a scalar or two-coordinate activation, $Y=F(H)$ and a probe on the first coordinate. In the first example the probe's secant slope $1$ differs from the mean gradient $3$. In the second the probe's pattern $v_A(e_1)=(1,1)$ moves a coordinate that the edit leaves fixed. In the third the edit $-H$ and the gradient $H/2$ vary together across prompts, which gives $M=C=-\frac12$ in \eqref{eq:ablate}.

\begin{table}[h]
\centering\footnotesize
\setlength{\tabcolsep}{4pt}
\begin{tabular}{@{}lllcccccc@{}}
\toprule
Law of $H$ & $F(x)$ & Edit & $\rho$ & $\alpha$ & $\st_{\bar u'}$ & $\cm$ & $\ec$ \\
\midrule
uniform on $\{-1,1\}$ & $x^3$ & $1$ & $1$ & $3$ & $-2$ & $0$ & $0$ \\
$(X,X)$, $\VV(X)>0$ & $x_2$ & $e_1$ & $1$ & $0$ & $0$ & $1$ & $0$ \\
uniform on $\{0,2\}$ & $x^2/4$ & $-H$ & $-\frac12$ & $-1$ & $0$ & $0$ & $-\frac12$ \\
\bottomrule
\end{tabular}
\caption{Closed-form examples of the three terms of \eqref{eq:gap}, with $A=1$ or $A=(1,0)$. Each row satisfies $\rho-\alpha=\st_{\bar u'}+\cm-\ec$.}\label{tab:toy}
\end{table}

\subsection{Proof of \texorpdfstring{\Cref{cor:ablate}}{Corollary 1}}\label{app:proofs:ablate}

We derive \Cref{cor:ablate} from the following decomposition for any edit along one direction $d$ at a set $\mathcal T$ of positions with prompt-dependent coefficients $\delta_\tau$.

\begin{corollary}[Ablations and edits at unread positions]\label{cor:ablate-general}
Assume the hypotheses of \Cref{thm:gap}. Let $\Delta(H)=\sum_{\tau\in\mathcal T}\delta_\tau(H)E_\tau^\top d$ for a direction $d$, a set $\mathcal T$ of edited positions and coefficients with $\EE\delta_\tau^2<\infty$. Put $\Gamma_\tau\coloneqq d^\top E_\tau\nabla F$.
\begin{enumerate}[label=(\alph*),leftmargin=*,itemsep=0pt]
\item\label{cor:ablate-general:dec} $\alpha(\Delta)=\sum_\tau\EE[\delta_\tau]\EE[\Gamma_\tau]+\ec(\Delta)$ with $\ec(\Delta)=\sum_\tau\Cov(\delta_\tau,\Gamma_\tau)$. For the zero-ablation of latent $j$ ($d=d_j$, $\delta_\tau=-z_{\tau j}$),
\[
\alpha(\Delta_j)=L_j\coloneqq M_j+C_j,
\]
where $M_j\coloneqq-\sum_\tau\EE[z_{\tau j}]\EE[\Gamma_\tau]$ and $C_j\coloneqq-\sum_\tau\Cov(z_{\tau j},\Gamma_\tau)$. Mean-ablation to fixed references has the same $C_j$.
\item\label{cor:ablate-general:blind} (Edits at unread positions) If $A=\sum_{\tau\in\mathcal T_A}A_\tau E_\tau$ and $\delta_\tau=0$ almost surely for every $\tau\in\mathcal T_A$, then $\rho(\Delta)=\st_{\bar u'}=0$ and $v_A(\bar u)=0$. For the zero-ablation of a ReLU or JumpReLU latent with $A=d_j^\top E_{\rm fin}$ and ${\rm fin}\in\mathcal T$, the mean $A\bar u=-\EE z_{{\rm fin},j}$ vanishes if and only if the latent never fires at the final position. In that case the conclusions above hold with $\mathcal T_A=\{{\rm fin}\}$.
\end{enumerate}
\end{corollary}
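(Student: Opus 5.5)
Part~\ref{cor:ablate-general:dec} follows from the expansion already carried out in Part~1 of the proof of \Cref{thm:gap}, which gives $\alpha(\Delta)=\bar u^\top\gamma+\ec(\Delta)$ under the stated moment conditions. Since $\EE\delta_\tau^2<\infty$ and $\|E_\tau^\top d\|=\|d\|$, the edit has $\EE\|\Delta(H)\|^2<\infty$, so the theorem's hypotheses hold. I will substitute $\bar u=\sum_{\tau\in\mathcal T}\EE[\delta_\tau]E_\tau^\top d$, which gives $\bar u^\top\gamma=\sum_\tau\EE[\delta_\tau]\,d^\top E_\tau\gamma=\sum_\tau\EE[\delta_\tau]\EE[\Gamma_\tau]$ by linearity of expectation. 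For the covariance term I will write $\Delta(H)-\bar u=\sum_\tau(\delta_\tau-\EE\delta_\tau)E_\tau^\top d$. Pairing this with $\nabla F-\gamma$ gives $\sum_\tau\EE[(\delta_\tau-\EE\delta_\tau)(\Gamma_\tau-\EE\Gamma_\tau)]=\sum_\tau\Cov(\delta_\tau,\Gamma_\tau)$. Each covariance is finite by Cauchy--Schwarz, since $\EE\Gamma_\tau^2\le\EE\|\nabla F\|^2$.

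Setting $\delta_\tau=-z_{\tau j}$ yields $M_j$ and $C_j$ directly. For mean-ablation, $\delta_\tau=r_{\tau j}-z_{\tau j}$ differs from $-z_{\tau j}$ by a constant. That constant leaves each covariance unchanged, so $C_j$ is the same. This proves \Cref{cor:ablate}\ref{cor:ablate:dec}.

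For Part~\ref{cor:ablate-general:blind}, the key observation is that $A\bar u=0$. This holds because $E_\sigma E_\tau^\top=0$ for $\sigma\ne\tau$ and $E_\tau E_\tau^\top=I$, so $AE_\tau^\top d=A_\tau d$ for $\tau\in\mathcal T_A$ and $0$ otherwise. Then $A\bar u=\sum_{\tau\in\mathcal T_A\cap\mathcal T}A_\tau d\,\EE[\delta_\tau]=0$, because $\delta_\tau=0$ almost surely at the read positions. It follows that $\bar u'=A^\top(A\Sigma_HA^\top)^{-1}A\bar u=0$ and $v_A(\bar u)=\Sigma_H\bar u'=0$. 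Hence $\st_{\bar u'}=0$ by linearity in $w$, and $\rho(\Delta)=\bar u^\top A^\top\beta_S=(A\bar u)^\top\beta_S=0$.

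It remains to handle the ReLU/JumpReLU claim. There $A\bar u=-d_j^\top d_j\,\EE z_{{\rm fin},j}=-\EE z_{{\rm fin},j}$ with $\|d_j\|=1$. Because $z_{{\rm fin},j}\ge0$, a zero mean holds if and only if $z_{{\rm fin},j}=0$ almost surely, that is, the latent never fires at the final position. Then $\delta_{\rm fin}=0$ almost surely and the previous paragraph applies with $\mathcal T_A=\{\rm fin\}$. This gives \Cref{cor:ablate}\ref{cor:ablate:blind}, and $\rho-\alpha=-\alpha$ is immediate from $\rho=0$.

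The only delicate step is this last one. It needs nonnegativity of the scaled code $z$, which holds because ReLU/JumpReLU activations are nonnegative and the decoder norm is positive. It also needs the orthogonality of the position blocks.
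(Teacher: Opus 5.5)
Your proof is correct and follows the paper's argument essentially step by step. You expand $\alpha(\Delta)=\bar u^\top\gamma+\ec(\Delta)$ position by position, as in Part~1 of the proof of \Cref{thm:gap}; you use block orthogonality to get $A\bar u=0$ and hence $\bar u'=0$, $v_A(\bar u)=0$ and $\st_{\bar u'}=0$; and you use nonnegativity of the code for the ReLU/JumpReLU case. The only differences are cosmetic: you read off $\rho(\Delta)=(A\bar u)^\top\beta_S=0$ directly rather than via $\bar u'^\top\Cov(H,Y)$, and your bound $\EE\Gamma_\tau^2\le\EE\|\nabla F\|^2$ should carry a factor $\|d\|^2$ unless $d$ is a unit vector.
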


\begin{proof}[Proof of \Cref{cor:ablate-general}]
The proof proceeds in two parts. First, we write the expansion of $\alpha$ from Part 1 of the proof of \Cref{thm:gap} position by position, which gives \ref{cor:ablate-general:dec}. Second, we show that the probe sees the edit through $A\bar u$ alone, which gives \ref{cor:ablate-general:blind}. Throughout, $\delta_\tau\coloneqq0$ for $\tau\notin\mathcal T$. Each position $\tau$ after the first indexes a coordinate block $E_\tau H$ of $H$. The blocks are disjoint, which makes $E_\tau E_\sigma^\top$ the identity for $\sigma=\tau$ and zero otherwise.

\textbf{Part 1 (Decomposition of the attribution).}
Linearity of expectation gives
\[
\bar u=\sum_\tau\EE[\delta_\tau]\,E_\tau^\top d,\qquad\Delta(H)-\bar u=\sum_\tau(\delta_\tau-\EE\delta_\tau)\,E_\tau^\top d .
\]
Since $(E_\tau^\top d)^\top\nabla F=\Gamma_\tau$, the first identity gives $\bar u^\top\gamma=\sum_\tau\EE[\delta_\tau]\,\EE[\Gamma_\tau]$. The second identity yields
\begin{align*}
\ec(\Delta)
&=\sum_\tau\EE\bigl[(\delta_\tau-\EE\delta_\tau)\,(E_\tau^\top d)^\top\bigl(\nabla F(H)-\gamma\bigr)\bigr]\\
&=\sum_\tau\EE\bigl[(\delta_\tau-\EE\delta_\tau)(\Gamma_\tau-\EE\Gamma_\tau)\bigr]\\
&=\sum_\tau\Cov(\delta_\tau,\Gamma_\tau).
\end{align*}
The second step uses $(E_\tau^\top d)^\top\gamma=\EE\Gamma_\tau$. The last step follows from the definition of a covariance. Every term is finite because $\EE\delta_\tau^2<\infty$ and $\EE\Gamma_\tau^2\le\|d\|^2\EE\|\nabla F\|^2$. Substituting both sums into $\alpha(\Delta)=\bar u^\top\gamma+\ec(\Delta)$ gives the decomposition of part~\ref{cor:ablate-general:dec}. Zero-ablation of latent $j$ has $d=d_j$ and $\delta_\tau=-z_{\tau j}$, which gives $\alpha(\Delta_j)=M_j+C_j$. Mean-ablation to fixed references has $\delta_\tau=r_{\tau j}-z_{\tau j}$ and the same $C_j$, since $\Cov(r_{\tau j}-z_{\tau j},\Gamma_\tau)=-\Cov(z_{\tau j},\Gamma_\tau)$.

\textbf{Part 2 (Edits at unread positions).}
For $\tau\in\mathcal T_A$ the hypothesis $\delta_\tau=0$ almost surely gives $\EE\delta_\tau=0$. Block orthogonality then gives $E_\tau\bar u=\EE[\delta_\tau]\,d=0$. Hence $A\bar u=0$ and $\bar u'=A^\top(A\Sigma_HA^\top)^{-1}A\bar u=0$. The Stein residual is linear in its direction, which gives $\st_{\bar u'}=0$. The identities $v_A(\bar u)=\Sigma_H\bar u'$ and $\rho(\Delta)=\bar u'^\top\Cov(H,Y)$ from Part 2 of the proof of \Cref{thm:gap} give $v_A(\bar u)=0$ and $\rho(\Delta)=0$. The omitted-variable term is then $\cm(\Delta)=-\bar u^\top\gamma$. Since $\delta_\tau=0$ almost surely for $\tau\in\mathcal T_A$, the summands of $\bar u^\top\gamma$ and $\ec(\Delta)$ at these positions vanish. The whole attribution is therefore carried by the positions in $\mathcal T\setminus\mathcal T_A$. Now let $\Delta$ be the edit of zero-ablating a ReLU or JumpReLU latent $j$ with ${\rm fin}\in\mathcal T$ and let $A=d_j^\top E_{\rm fin}$. Then $\mathcal T_A=\{{\rm fin}\}$. Applying $A$ to the expansion of $\bar u$ in Part 1 yields
\begin{align*}
A\bar u
&=d_j^\top E_{\rm fin}\sum_{\tau\in\mathcal T}\EE[\delta_\tau]\,E_\tau^\top d_j\\
&=-\EE[z_{{\rm fin},j}]\,d_j^\top d_j\\
&=-\EE z_{{\rm fin},j}.
\end{align*}
The second step uses ${\rm fin}\in\mathcal T$ and block orthogonality. The last step follows from $\|d_j\|=1$. A ReLU or JumpReLU code is nonnegative. Hence $\EE z_{{\rm fin},j}=0$ if and only if $z_{{\rm fin},j}=0$ almost surely, that is, if and only if the latent never fires at the final position. In that case $\delta_{\rm fin}=0$ almost surely, which is the hypothesis of part~\ref{cor:ablate-general:blind} with $\mathcal T_A=\{{\rm fin}\}$. Part~\ref{cor:ablate-general:dec} uses the displayed moments alone and part~\ref{cor:ablate-general:blind} uses them together with $A\Sigma_HA^\top\succ0$. Both parts therefore hold for the empirical law of each stratum in which $A\Sigma_HA^\top\succ0$. This completes the proof.
\end{proof}

\noindent\textbf{Size of the covariance.} For an ablation at one position with nonzero means and positive variances, $|C_j|/|M_j|=|\operatorname{corr}(z_{\tau j},\Gamma_\tau)|\,\mathrm{CV}(z_{\tau j})\,\mathrm{CV}(\Gamma_\tau)$, where $\mathrm{CV}(X)\coloneqq\VV(X)^{1/2}/|\EE X|$ is the coefficient of variation. This follows from $C_j=-\operatorname{corr}(z_{\tau j},\Gamma_\tau)\VV(z_{\tau j})^{1/2}\VV(\Gamma_\tau)^{1/2}$ and $M_j=-\EE[z_{\tau j}]\EE[\Gamma_\tau]$. Rare firing makes the code's coefficient of variation large and can make $|C_j|$ exceed $|M_j|$.

\begin{proof}[Proof of \Cref{cor:ablate}]
Zero-ablation applies the edit $\Delta_j(H)=-\sum_\tau z_{\tau j}E_\tau^\top d_j$ at every position after the first. It is the edit of \Cref{cor:ablate-general} with $d=d_j$, $\delta_\tau=-z_{\tau j}$ and $\mathcal T$ the set of all positions but the first. Since the blocks are disjoint and $\|d_j\|=1$, $\|\Delta_j(H)\|^2=\sum_\tau z_{\tau j}^2$. The hypothesis $\EE\|\Delta_j(H)\|^2<\infty$ of \Cref{thm:gap} therefore gives $\EE z_{\tau j}^2<\infty$ for every $\tau$. Part~\ref{cor:ablate:dec} is then the zero-ablation case of \Cref{cor:ablate-general}\ref{cor:ablate-general:dec}. For part~\ref{cor:ablate:blind}, the summary $A=d_j^\top E_{\rm fin}$ reads $\mathcal T_A=\{{\rm fin}\}$. If $z_{{\rm fin},j}=0$ almost surely, then $\delta_{\rm fin}=0$ almost surely. \Cref{cor:ablate-general}\ref{cor:ablate-general:blind} then gives $\rho(\Delta_j)=0$. Hence $\rho(\Delta_j)-\alpha(\Delta_j)=-\alpha(\Delta_j)$. This completes the proof.
\end{proof}

%% file: sections/app_inference.tex
\section{Inference for the reported intervals}\label{app:inference}

The terms of \eqref{eq:gap} admit classical inference because the selection split fixes every summary, direction and edit that enters them. This appendix states that design, proves the many-strata inference and assesses coverage by simulation.

\begin{assumption}[Split design]\label{asm:split}
\leavevmode
\begin{enumerate}[label=(\roman*),leftmargin=1.6em,itemsep=0pt]
\item\label{asm:split:ind} (Independence) The SAE, the active set, the selections, the summaries and the edits are fixed by data independent of the evaluation split \citep{cox1975note,rinaldo2019bootstrapping}. The evaluation prompts are independent draws from each task's law.
\item\label{asm:split:mom} (Moments) Within each evaluation stratum the activations, the targets and the gradients have finite second moments.
\end{enumerate}
\end{assumption}

Item~(i) makes the samples of different evaluation strata independent given their answers and lengths. Item~(ii), with the moment conditions of \Cref{thm:gap}, makes every moment in \eqref{eq:gap} finite. A reported quantity averages within-stratum estimates over many small strata (\Cref{app:protocols}). The following theorem, whose full statement is \Cref{prop:cells}, justifies its stratum-level interval.

\begin{theorem}[Inference over many small strata]\label{thm:inference}
Assume \Cref{asm:cut,asm:split}. Let the samples be independent across strata, let each stratum quantity have an unbiased estimate from its own stratum, let the largest stratum weight tend to zero and let a fourth-moment Lyapunov condition hold. As the number of strata grows, the stratum-level interval for a fixed-weight average of stratum quantities has asymptotic coverage at least its level, with equality in the limit when the stratum quantities agree. Under the further conditions of \Cref{prop:cells}, which include a denominator bounded away from zero, the same holds for a ratio of two such averages.
\end{theorem}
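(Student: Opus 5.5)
The plan is a standard triangular-array central limit argument for a weighted average of independent, unbiased stratum estimates, combined with the observation that the usual between-stratum variance estimator is conservative under heterogeneity. Index strata by $k=1,\ldots,K_s$ with fixed weights $\omega_k\ge0$, $\sum_k\omega_k=1$. Let $\theta_k$ be a stratum quantity, such as one of the terms of \eqref{eq:gap} or a stratum attribution. Let $\hat\theta_k$ be its unbiased within-stratum estimate, for example the U-statistic mentioned in \Cref{app:terms}. The target is $\bar\theta\coloneqq\sum_k\omega_k\theta_k$ and the estimator is $\hat{\bar\theta}\coloneqq\sum_k\omega_k\hat\theta_k$. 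Under \Cref{asm:split}\ref{asm:split:ind} the $\hat\theta_k$ are independent given the selection split, because every summary, direction and edit is fixed by independent data. They are unbiased by hypothesis, and \Cref{asm:split}\ref{asm:split:mom} together with the fourth-moment condition makes their variances $v_k$ finite.

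\textbf{Step 1 (CLT).} Write $\hat{\bar\theta}-\bar\theta=\sum_k\omega_k(\hat\theta_k-\theta_k)$, a sum of independent mean-zero terms. The Lyapunov condition, taken with exponent $4$, reads $\sum_k\omega_k^4\EE|\hat\theta_k-\theta_k|^4/(\sum_k\omega_k^2v_k)^2\to0$. It gives $(\hat{\bar\theta}-\bar\theta)/s_K\Rightarrow\cN(0,1)$ with $s_K^2=\sum_k\omega_k^2v_k$. The assumption $\max_k\omega_k\to0$ is what makes this condition plausible and what drives the next step.

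\textbf{Step 2 (conservative variance).} The stratum-level interval uses the between-stratum variance estimator
\[
\hat s_K^2=c_K\sum_k\omega_k^2(\hat\theta_k-\hat{\bar\theta})^2,
\]
with the usual small-sample correction $c_K$. Expanding $\hat\theta_k-\hat{\bar\theta}=(\hat\theta_k-\theta_k)+(\theta_k-\bar\theta)-(\hat{\bar\theta}-\bar\theta)$ gives $\EE\hat s_K^2=s_K^2+\sum_k\omega_k^2(\theta_k-\bar\theta)^2+o(s_K^2)$, where the cross terms are controlled by $\max\omega_k\to0$. A weak law for the triangular array of squares, using the fourth moments through Chebyshev, gives concentration. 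The result is $\hat s_K^2/s_K^2\ge1+o_p(1)$, with $\hat s_K^2/s_K^2\to1$ when all $\theta_k$ agree. Slutsky then yields asymptotic coverage at least the nominal level, with equality in the homogeneous case.

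\textbf{Step 3 (ratios).} For a ratio $\bar\theta/\bar\phi$ such as $|\alpha/\beta|$, apply Steps 1 and 2 to the linearized stratum quantity $\theta_k-r\phi_k$ with $r=\bar\theta/\bar\phi$, in the Fieller or delta-method form. A denominator bounded away from zero makes the remainder $o_p(s_K)$.

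The main obstacle is Step 2. The concentration of $\hat s_K^2$ and the negligibility of the cross term must be shown under only the Lyapunov and $\max\omega_k\to0$ conditions, without letting the bias term $\sum_k\omega_k^2(\theta_k-\bar\theta)^2$ dominate the variance ratio. I would first try bounding $\mathrm{Var}(\hat s_K^2)$ directly by $\sum_k\omega_k^4\EE(\hat\theta_k-\theta_k)^4$ plus heterogeneity terms, and divide by $s_K^4$. If that ratio fails to vanish, I would fall back to stating coverage in terms of the ratio $\hat s_K^2/(s_K^2+\text{heterogeneity})\to1$.
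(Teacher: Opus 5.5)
Your Steps 1 and 2 follow the paper's route for a single average. You apply Lyapunov's theorem with exponent four to $\sum_k\omega_ke_k$, where $e_k\coloneqq\hat\theta_k-\theta_k$. You then expand $\hat\theta_k-\hat{\bar\theta}=e_k+d_k-\bar e$, with $d_k\coloneqq\theta_k-\bar\theta$, and bound each piece by Chebyshev.

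Three corrections are needed there. First, your fallback is not optional. The hypotheses do not bound the heterogeneity $\mathsf h^2\coloneqq\sum_k\omega_k^2d_k^2$ relative to $s_K^2$. The cross term $2\sum_k\omega_k^2e_kd_k$ has variance $4\sum_k\omega_k^4d_k^2v_k$, which need not be $o(s_K^4)$. Concentration must therefore be measured against $s_K^2+\mathsf h^2$, as the paper does.

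Second, this cross term is controlled by the Lyapunov condition, not by $\max_k\omega_k\to0$. Jensen gives $\omega_k^2v_k\le(\omega_k^4\EE e_k^4)^{1/2}\le\Upsilon_4^{1/2}s_K^2$, with $\Upsilon_4$ your Lyapunov ratio. The cross term is then $O_p(\Upsilon_4^{1/4}s_K\mathsf h)=o_p(s_K^2+\mathsf h^2)$. The condition $\max_k\omega_k\to0$ only handles the terms involving $\bar e$.

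Third, under heterogeneity $\hat s_K/s_K$ need not converge, so coverage does not follow from Slutsky. Use instead the one-sided bound $\PP(\hat s_K\ge(1-\epsilon)s_K)\to1$ for each $\epsilon>0$. Alternatively, use the paper's subsequence argument on $s_K/(s_K^2+\mathsf h^2)^{1/2}\in(0,1]$.

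The genuine gap is in Step 3: a denominator bounded away from zero does not by itself make the remainder negligible. Put $\hat r\coloneqq\hat{\bar\theta}/\hat{\bar\phi}$, $\hat y_k\coloneqq\hat\theta_k-r\hat\phi_k$ and $\hat y\coloneqq\sum_k\omega_k\hat y_k$. Then $\hat r-r=\hat y/\hat{\bar\phi}$, and Steps 1 and 2 apply to $\hat y$ once the Lyapunov condition is imposed on the linearized estimates.

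Two further facts are still needed. First, you need $\hat{\bar\phi}\to\bar\phi$ in probability. This requires the sampling variance $\sum_k\omega_k^2\VV\hat\phi_k$ of the denominator average to vanish, which boundedness of $|\bar\phi|$ does not give. Second, the interval's variance estimate uses the residuals $\hat\theta_k-\hat r\hat\phi_k=(\hat y_k-\hat y)-(\hat r-r)(\hat\phi_k-\hat{\bar\phi})$. These involve the \emph{estimated} ratio, so Step 2 does not apply to them directly.

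The paper bounds this plug-in error by $2|\hat r-r|\,\hat s_yW^{1/2}+(\hat r-r)^2W$. Here $\hat s_y^2\coloneqq\sum_k\omega_k^2(\hat y_k-\hat y)^2$ and $W\coloneqq\sum_k\omega_k^2(\hat\phi_k-\hat{\bar\phi})^2$. It obtains $W=o_p(1)$ by assuming that both the variance and the heterogeneity of the denominator average vanish, which is $\varsigma_Q^2+\mathsf h_Q^2\to0$ in \Cref{prop:cells}. These are the ``further conditions'' the statement refers to, and your argument needs them or an explicit substitute.
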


The intervals target the fixed-weight average of the stratum targets and are therefore asymptotically conservative when the stratum targets differ. At the observed stratum counts they rely on a finite-sample normal approximation. Under the sparse JumpReLU design at the observed stratum sizes, simulated intervals cover at $0.919$ to $0.942$ for the ablation terms and at $0.820$ to $0.835$ for the Stein residual (\Cref{app:coverage}). The theorem concerns the stratum-level $t$ intervals and the per-pair tests built from them. The percentile bootstrap intervals resample strata or tasks as specified in each analysis. \Cref{prop:cells} does not establish coverage for these bootstrap intervals, including those for medians and shares over pairs.

For strata $k=1,\ldots,K_s$ let $\hat\theta_k$ be an estimate of a stratum target $\theta_k$ computed from stratum $k$ alone, with variance $v_k\coloneqq\VV\hat\theta_k$. Given weights $\omega_k\ge0$ fixed by the stratum sizes with $\sum_k\omega_k=1$, we define
\[
\begin{gathered}
\bar\theta\coloneqq\sum_k\omega_k\theta_k,\qquad\hat\theta\coloneqq\sum_k\omega_k\hat\theta_k,\qquad\bar\omega\coloneqq\max_k\omega_k,\qquad\varsigma^2\coloneqq\sum_k\omega_k^2v_k,\\
\mathsf h^2\coloneqq\sum_k\omega_k^2(\theta_k-\bar\theta)^2,\qquad\hat\varsigma^2\coloneqq\sum_k\omega_k^2(\hat\theta_k-\hat\theta)^2,\qquad\Upsilon_4\coloneqq\varsigma^{-4}\sum_k\omega_k^4\,\EE(\hat\theta_k-\theta_k)^4,
\end{gathered}
\]
where $\varsigma^2$ is the variance of $\hat\theta$ for estimates independent across strata, $\mathsf h^2$ measures the heterogeneity of the stratum targets and $\Upsilon_4$ is a Lyapunov ratio with exponent four. For $0<a<1$ we write $z_{1-a/2}\coloneqq\Phi^{-1}(1-a/2)$, with $\Phi$ the standard normal distribution function. For a ratio, each stratum carries estimates $\hat N_k$ and $\hat Q_k$ of targets $N_k$ and $Q_k$, averaged with the same weights into $\bar N$, $\hat N$, $\bar Q$ and $\hat Q$. The ratio $\bar R\coloneqq\bar N/\bar Q$ has the estimate $\hat R\coloneqq\hat N/\hat Q$. The linearized stratum estimates $\hat y_k\coloneqq\hat N_k-\bar R\hat Q_k$ have targets $y_k\coloneqq N_k-\bar RQ_k$ with average $\bar y=\bar N-\bar R\bar Q=0$. We write $\varsigma_y^2$, $\mathsf h_y^2$ and $\Upsilon_{4,y}$ for the quantities above computed from $(\hat y_k,y_k)$ and $\varsigma_Q^2$ and $\mathsf h_Q^2$ for those computed from $(\hat Q_k,Q_k)$.

\begin{proposition}[Inference over many small strata]\label{prop:cells}
Fix the selection split and the stratum sizes. Let the stratum estimates be independent with $\EE\hat\theta_k=\theta_k$ and $v_k<\infty$ for every $k$. Suppose that $\varsigma^2>0$, $\bar\omega\to0$ and $\Upsilon_4\to0$ as $K_s\to\infty$.
\begin{enumerate}[label=(\alph*),leftmargin=*,itemsep=0pt]
\item\label{prop:cells:clt} (Normality) $(\hat\theta-\bar\theta)/\varsigma\to\cN(0,1)$ in law.
\item\label{prop:cells:var} (Variance estimate) $\hat\varsigma^2/(\varsigma^2+\mathsf h^2)\to1$ in probability.
\item\label{prop:cells:cov} (Coverage) For every $a\in(0,1)$, $\liminf_{K_s\to\infty}\PP(|\hat\theta-\bar\theta|\le z_{1-a/2}\hat\varsigma)\ge1-a$. If $\mathsf h^2/\varsigma^2\to0$, this coverage converges to $1-a$.
\item\label{prop:cells:ratio} (Ratios) Let instead the stratum estimates be pairs $(\hat N_k,\hat Q_k)$ with means $(N_k,Q_k)$ and finite variances. Keep $\bar\omega\to0$ and, in place of the conditions on $\varsigma^2$ and $\Upsilon_4$, suppose that $|\bar Q|$ is bounded away from zero, $\varsigma_Q^2+\mathsf h_Q^2\to0$, $\varsigma_y^2>0$ and $\Upsilon_{4,y}\to0$. Then the event $\hat Q\ne0$ has probability tending to one and $\bar Q(\hat R-\bar R)/\varsigma_y\to\cN(0,1)$ in law. For every $a\in(0,1)$ the interval $\hat R\pm z_{1-a/2}\hat\varsigma_R$ with the linearized stratum-level variance $\hat\varsigma_R^2\coloneqq\sum_k\omega_k^2(\hat N_k-\hat R\hat Q_k)^2/\hat Q^2$ has asymptotic coverage at least $1-a$. Its coverage converges to $1-a$ if $\mathsf h_y^2/\varsigma_y^2\to0$.
\end{enumerate}
\end{proposition}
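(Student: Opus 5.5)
The plan is a Lyapunov central limit theorem for a weighted sum of independent, non-identically distributed summands. It is followed by a law-of-large-numbers argument for the centered sum of squares, and then by a delta-method linearization for the ratio. Throughout, $\omega_k$ and $\theta_k$ are treated as a triangular array indexed by $K_s$.

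\textbf{Part (a).} Write $\hat\theta-\bar\theta=\sum_k\omega_k(\hat\theta_k-\theta_k)$. The summands are independent, have mean zero, and have total variance $\varsigma^2$. The Lyapunov condition with exponent four is exactly $\varsigma^{-4}\sum_k\omega_k^4\EE(\hat\theta_k-\theta_k)^4=\Upsilon_4\to0$. This gives $(\hat\theta-\bar\theta)/\varsigma\to\cN(0,1)$.

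\textbf{Part (b).} Put $e_k\coloneqq\hat\theta_k-\theta_k$ and expand
\[
\hat\varsigma^2=\sum_k\omega_k^2(\theta_k-\bar\theta)^2+2\sum_k\omega_k^2(\theta_k-\bar\theta)(e_k-\bar e)+\sum_k\omega_k^2(e_k-\bar e)^2,
\]
where $\bar e\coloneqq\hat\theta-\bar\theta$. The first term is $\mathsf h^2$. The third term equals $\sum_k\omega_k^2e_k^2-2\bar e\sum_k\omega_k^2e_k+\bar e^2\sum_k\omega_k^2$. Its leading piece has mean $\varsigma^2$ and variance at most $\sum_k\omega_k^4\EE e_k^4=\Upsilon_4\varsigma^4$, so divided by $\varsigma^2$ it tends to one in $L^2$. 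The other pieces are negligible. For instance, $\bar e^2\sum_k\omega_k^2\le\bar\omega\,\bar e^2=O_p(\bar\omega\varsigma^2)$ by (a). For $\bar e\sum_k\omega_k^2e_k$, the factor $\sum_k\omega_k^2e_k$ has variance $\sum_k\omega_k^4v_k\le\bar\omega^2\varsigma^2$.

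The cross term has mean zero, apart from the piece involving $\bar e$, which I handle separately. Its variance is $4\sum_k\omega_k^4(\theta_k-\bar\theta)^2v_k$. By Cauchy--Schwarz this is at most $4\,\mathsf h^2\max_k\omega_k^2v_k$, and $\max_k\omega_k^4v_k^2\le\sum_k\omega_k^4\EE e_k^4=\Upsilon_4\varsigma^4$. Hence the cross term is $o_p(\mathsf h\varsigma)=o_p(\varsigma^2+\mathsf h^2)$. Combining, $\hat\varsigma^2/(\varsigma^2+\mathsf h^2)\to1$.

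I expect this bookkeeping to be the main obstacle. The difficulty is showing that every remainder is small relative to $\varsigma^2+\mathsf h^2$ uniformly, without assuming that $\mathsf h^2$ is bounded. I will therefore normalize each remainder by $\varsigma\sqrt{\varsigma^2+\mathsf h^2}$ and use $\mathsf h\varsigma\le(\varsigma^2+\mathsf h^2)/2$.

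\textbf{Part (c).} This follows from (a), (b) and Slutsky's theorem. Since $\hat\varsigma/\varsigma\ge(1+o_p(1))$, we get
\[
\PP\bigl(|\hat\theta-\bar\theta|\le z_{1-a/2}\hat\varsigma\bigr)\ge\PP\bigl(|Z|\le z_{1-a/2}(1+o_p(1))\bigr)\to1-a.
\]
When $\mathsf h^2/\varsigma^2\to0$, the ratio $\hat\varsigma/\varsigma$ tends to one and the coverage converges to $1-a$.

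\textbf{Part (d).} First, $\hat Q-\bar Q$ has second moment $\varsigma_Q^2\to0$, so $\hat Q\to\bar Q$ in probability. Since $|\bar Q|$ is bounded away from zero, $\PP(\hat Q\ne0)\to1$. Next, use the exact identity $\hat Q(\hat R-\bar R)=\hat N-\bar R\hat Q=\sum_k\omega_k\hat y_k$, whose target is $\bar y=0$. Applying (a) to the $\hat y_k$ gives $\hat Q(\hat R-\bar R)/\varsigma_y\to\cN(0,1)$, and replacing $\hat Q$ by $\bar Q$ costs a factor $\hat Q/\bar Q\to1$.

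For the variance, note that $\hat N_k-\hat R\hat Q_k=\hat y_k-(\hat R-\bar R)\hat Q_k$. Part (b), applied to $\hat y_k$, gives $\sum_k\omega_k^2\hat y_k^2\approx\varsigma_y^2+\mathsf h_y^2$; this uses the centered form with $\bar y=0$, and the difference is handled as in (b). The correction is $(\hat R-\bar R)^2\sum_k\omega_k^2\hat Q_k^2=O_p(\varsigma_y^2)\cdot O_p(\bar\omega)$, using $\sum_k\omega_k^2\EE\hat Q_k^2\le\bar\omega\,\EE\sum_k\omega_k\hat Q_k^2$. The cross term then follows by Cauchy--Schwarz. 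Here I must check that $\sum_k\omega_k\EE\hat Q_k^2$ stays bounded. If it does not, I will instead bound the cross term through $\varsigma_Q^2+\mathsf h_Q^2\to0$ together with $\bar Q$ bounded. Dividing by $\hat Q^2\to\bar Q^2$ and applying Slutsky's theorem, as in (c), yields the stated coverage, with equality in the limit when $\mathsf h_y^2/\varsigma_y^2\to0$.
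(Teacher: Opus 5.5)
Your parts (a) and (b), and the normality claim in (d), follow the paper's proof essentially step for step. Both use Lyapunov with exponent four, the same expansion of $\hat\varsigma^2$ around $\varsigma^2+\mathsf h^2$ controlled by Chebyshev with $\max_k\omega_k^2v_k\le\Upsilon_4^{1/2}\varsigma^2$, and the identity $\hat Q(\hat R-\bar R)=\hat y$. For (c) you argue directly that $\PP(\hat\varsigma\ge(1-\epsilon)\varsigma)\to1$ because $\mathsf h^2\ge0$, and then let $\epsilon\downarrow0$. This is valid and slightly more economical than the paper's subsequence argument over limit points $c$ of $\varsigma/(\varsigma^2+\mathsf h^2)^{1/2}$, provided your displayed limit is read as a $\liminf$ bound.

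The step that does not go through as written is the variance estimate in (d). You expand the uncentered residuals $\hat N_k-\hat R\hat Q_k=\hat y_k-(\hat R-\bar R)\hat Q_k$, so you need $(\hat R-\bar R)^2\sum_k\omega_k^2\hat Q_k^2=o_p(\varsigma_y^2+\mathsf h_y^2)$.

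Your primary route bounds $\sum_k\omega_k^2\EE\hat Q_k^2\le\bar\omega\sum_k\omega_k\EE\hat Q_k^2$ and treats the last sum as bounded. The hypotheses do not support this. They control only $\omega_k^2$-weighted quantities such as $\varsigma_Q^2$ and $\mathsf h_Q^2$. The inequality $\sum_k\omega_k^2x_k\le\bar\omega\sum_k\omega_kx_k$ runs the wrong way to bound $\sum_k\omega_k\VV\hat Q_k$ or $\sum_k\omega_kQ_k^2$.

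Your fallback is the right computation, $\EE\sum_k\omega_k^2\hat Q_k^2\le\varsigma_Q^2+2\mathsf h_Q^2+2\bar Q^2\bar\omega$. But pairing it with $(\hat R-\bar R)^2=O_p(\varsigma_y^2)$ requires $\bar Q^2\bar\omega\to0$, that is, an upper bound on $|\bar Q|$. The statement does not assume one; it only says $|\bar Q|$ is bounded away from zero.

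The repair is short. On the $\bar Q$ piece, use the sharper $(\hat R-\bar R)\bar Q=\hat y\,\bar Q/\hat Q=O_p(\varsigma_y)$, which gives $O_p(\bar\omega\varsigma_y^2)$. The paper avoids the issue by centering, $\hat N_k-\hat R\hat Q_k=(\hat y_k-\hat y)-(\hat R-\bar R)(\hat Q_k-\hat Q)$. Then only $W_Q=\sum_k\omega_k^2(\hat Q_k-\hat Q)^2$ enters, with $\EE W_Q\le3(1+\bar\omega)\varsigma_Q^2+3\mathsf h_Q^2\to0$ regardless of the size of $\bar Q$.

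Also, you divide by $\hat Q^2\to\bar Q^2$; this should be $\hat Q/\bar Q\to1$ when $\bar Q$ moves with $K_s$. Better still, on $\{\hat Q\ne0\}$ the coverage event is exactly $\{|\hat y|\le z_{1-a/2}\hat s\}$ with $\hat s^2=\sum_k\omega_k^2(\hat N_k-\hat R\hat Q_k)^2$. So the argument of (c) applies to the linearized estimates directly.
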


The within-stratum U-statistics of the fixed-direction analysis and of the ablation terms, which need $n_k\ge3$, are stratum estimates of this kind, pooled with weights $\omega_k\propto n_k-1$. The interval used in the analyses multiplies $\hat\varsigma^2$ by $K_s/(K_s-1)$ and replaces $z_{1-a/2}$ by the $t_{K_s-1}$ quantile $t_{K_s-1,1-a/2}$. Its half-width is that of \Cref{prop:cells}(c) times $\{K_s/(K_s-1)\}^{1/2}t_{K_s-1,1-a/2}/z_{1-a/2}$, which exceeds one because $K_s/(K_s-1)>1$ and $t_{K_s-1,1-a/2}>z_{1-a/2}$ and tends to one as $K_s\to\infty$. This interval therefore contains the interval of (c), inherits its coverage bound and by Slutsky's lemma its equality case. The same holds for the interval of (d).

\subsection{Proof of \texorpdfstring{\Cref{prop:cells}}{Proposition C.1}}\label{app:cells-proof}

\begin{proof}[Proof of \Cref{prop:cells}]
Write $e_k\coloneqq\hat\theta_k-\theta_k$ for the stratum errors, which are independent with mean zero, $d_k\coloneqq\theta_k-\bar\theta$ for the stratum deviations and $\bar e\coloneqq\hat\theta-\bar\theta=\sum_k\omega_ke_k$ for the error of the average.

\textbf{Part 1 (Asymptotic normality).}
The summands $\omega_ke_k$ of $\bar e$ form a triangular array of independent variables with mean zero and variances summing to $\varsigma^2>0$. Their Lyapunov ratio with exponent four, $\varsigma^{-4}\sum_k\EE(\omega_ke_k)^4=\varsigma^{-4}\sum_k\omega_k^4\EE e_k^4=\Upsilon_4$, tends to zero. Lyapunov's theorem therefore gives $\bar e/\varsigma\to\cN(0,1)$ in law, which is (a) because $\bar e=\hat\theta-\bar\theta$.

\textbf{Part 2 (Variance estimate).}
We show that $\hat\varsigma^2-(\varsigma^2+\mathsf h^2)=o_p(\varsigma^2+\mathsf h^2)$. Since $\hat\theta_k-\hat\theta=e_k+d_k-\bar e$ and $\sum_k\omega_k^2d_k^2=\mathsf h^2$, expanding the square in $\hat\varsigma^2$ yields
\[
\hat\varsigma^2-(\varsigma^2+\mathsf h^2)
=\underbrace{\sum_k\omega_k^2e_k^2-\varsigma^2}_{O_p(\Upsilon_4^{1/2}\varsigma^2)}
+\underbrace{2\sum_k\omega_k^2e_kd_k}_{O_p(\Upsilon_4^{1/4}\varsigma\mathsf h)}
-\underbrace{2\bar e\sum_k\omega_k^2(e_k+d_k)}_{O_p(\bar\omega\varsigma^2+\bar\omega^{1/2}\varsigma\mathsf h)}
+\underbrace{\bar e^2\sum_k\omega_k^2}_{O_p(\bar\omega\varsigma^2)} .
\]
Independence across strata gives the variance bounds
\begin{align*}
\VV\Bigl(\sum_k\omega_k^2e_k^2\Bigr)&\le\sum_k\omega_k^4\EE e_k^4=\Upsilon_4\varsigma^4,\\
\VV\Bigl(\sum_k\omega_k^2e_kd_k\Bigr)&=\sum_k\omega_k^4d_k^2v_k\le\max_k(\omega_k^2v_k)\,\mathsf h^2\le\Upsilon_4^{1/2}\varsigma^2\mathsf h^2,\\
\VV\Bigl(\sum_k\omega_k^2e_k\Bigr)&=\sum_k\omega_k^4v_k\le\bar\omega^2\varsigma^2,
\end{align*}
where the last step of the second line uses $\omega_k^2v_k\le(\omega_k^4\EE e_k^4)^{1/2}\le\Upsilon_4^{1/2}\varsigma^2$, which follows from Jensen's inequality. The first sum has mean $\varsigma^2$ and the second term has mean zero. Chebyshev's inequality and the first two bounds therefore give the orders of the first two terms. For the third term, $\bar e=O_p(\varsigma)$ because $\VV\bar e=\varsigma^2$. Chebyshev's inequality and the third bound give $\sum_k\omega_k^2e_k=O_p(\bar\omega\varsigma)$. The Cauchy-Schwarz inequality with $\sum_k\omega_k^2\le\bar\omega\sum_k\omega_k=\bar\omega$ gives $|\sum_k\omega_k^2d_k|\le\bar\omega^{1/2}\mathsf h$. The fourth term is at most $\bar\omega\bar e^2=O_p(\bar\omega\varsigma^2)$. Since $\Upsilon_4\to0$, $\bar\omega\to0$ and $\varsigma\mathsf h\le(\varsigma^2+\mathsf h^2)/2$, each of the four orders is $o(\varsigma^2+\mathsf h^2)$. Dividing by $\varsigma^2+\mathsf h^2>0$ proves (b).

\textbf{Part 3 (Coverage).}
Put $c_{K_s}\coloneqq\varsigma/(\varsigma^2+\mathsf h^2)^{1/2}$, which lies in $(0,1]$ because $\varsigma^2>0$. By (b) the event $\hat\varsigma>0$ has probability tending to one. On this event $(\hat\theta-\bar\theta)/\hat\varsigma=\{(\hat\theta-\bar\theta)/\varsigma\}\,c_{K_s}\,(\varsigma^2+\mathsf h^2)^{1/2}/\hat\varsigma$. By (a) the first factor converges in law to $\cN(0,1)$. By (b) and the continuity of $x\mapsto x^{-1/2}$ at one, the last factor converges to one in probability. Every subsequence has a further subsequence along which $c_{K_s}$ converges to some $c\in[0,1]$. Along it Slutsky's lemma gives $(\hat\theta-\bar\theta)/\hat\varsigma\to\cN(0,c^2)$ in law. The coverage therefore converges to $2\Phi(z_{1-a/2}/c)-1\ge2\Phi(z_{1-a/2})-1=1-a$ if $c>0$ and, because $z_{1-a/2}>0$, to one if $c=0$. Since every subsequence thus has a further subsequence with limiting coverage at least $1-a$, the limit inferior of the coverage is at least $1-a$. If $\mathsf h^2/\varsigma^2\to0$, then $c_{K_s}\to1$ along the whole sequence and the coverage converges to $2\Phi(z_{1-a/2})-1=1-a$, which proves (c).

\textbf{Part 4 (Ratios).}
The average $\hat Q$ has mean $\bar Q$ and variance $\varsigma_Q^2\to0$. Chebyshev's inequality gives $\hat Q-\bar Q=o_p(1)$. Since $|\bar Q|$ is bounded away from zero, $\hat Q/\bar Q\to1$ in probability and the event $\hat Q\ne0$ has probability tending to one. On this event $\hat R-\bar R=(\hat N-\bar R\hat Q)/\hat Q=\hat y/\hat Q$ with $\hat y\coloneqq\sum_k\omega_k\hat y_k=\hat N-\bar R\hat Q$. The $\hat y_k$ are independent across strata with means $y_k$ and finite variances. With $\varsigma_y^2>0$, $\bar\omega\to0$ and $\Upsilon_{4,y}\to0$, Parts 1 and 2 apply to them. Since $\bar y=0$, they give $\hat y/\varsigma_y\to\cN(0,1)$ in law and $\hat\varsigma_y^2/(\varsigma_y^2+\mathsf h_y^2)\to1$ in probability, with $\hat\varsigma_y^2\coloneqq\sum_k\omega_k^2(\hat y_k-\hat y)^2$. Since $\bar Q(\hat R-\bar R)/\varsigma_y=(\hat y/\varsigma_y)(\bar Q/\hat Q)$, Slutsky's lemma gives the normal limit.

For the variance estimate put $\hat x_k\coloneqq\hat N_k-\hat R\hat Q_k$, $\hat s^2\coloneqq\sum_k\omega_k^2\hat x_k^2$ and $W_Q\coloneqq\sum_k\omega_k^2(\hat Q_k-\hat Q)^2$. Then $\hat\varsigma_R^2=\hat s^2/\hat Q^2$. Since $\hat y=(\hat R-\bar R)\hat Q$, each $\hat x_k$ satisfies $\hat x_k=\hat y_k-(\hat R-\bar R)\hat Q_k=(\hat y_k-\hat y)-(\hat R-\bar R)(\hat Q_k-\hat Q)$. Expanding the square and applying the Cauchy-Schwarz inequality to the cross term yields
\begin{align*}
|\hat s^2-\hat\varsigma_y^2|\le2|\hat R-\bar R|\,\hat\varsigma_yW_Q^{1/2}+(\hat R-\bar R)^2W_Q .
\end{align*}
Writing $\hat Q_k-\hat Q=(\hat Q_k-Q_k)+(Q_k-\bar Q)-(\hat Q-\bar Q)$ and using $(x_1+x_2+x_3)^2\le3(x_1^2+x_2^2+x_3^2)$ and $\sum_k\omega_k^2\le\bar\omega$ gives $\EE W_Q\le3(1+\bar\omega)\varsigma_Q^2+3\mathsf h_Q^2$, which tends to zero. Markov's inequality therefore gives $W_Q=o_p(1)$. Moreover, $\hat R-\bar R=(\hat y/\varsigma_y)(\varsigma_y/\hat Q)=O_p(\varsigma_y)$ and $\hat\varsigma_y=O_p\{(\varsigma_y^2+\mathsf h_y^2)^{1/2}\}$. The right-hand side of the last display is therefore $o_p(\varsigma_y^2+\mathsf h_y^2)$. Hence $\hat s^2/(\varsigma_y^2+\mathsf h_y^2)\to1$ in probability, which is the conclusion of (b) for the linearized estimates with $\hat s$ in place of $\hat\varsigma_y$.
On the event $\hat Q\ne0$ we have $|\hat R-\bar R|=|\hat y|/|\hat Q|$ and $\hat\varsigma_R=\hat s/|\hat Q|$. The ratio interval therefore covers $\bar R$ exactly when $|\hat y-\bar y|\le z_{1-a/2}\hat s$. Part 3 uses only the conclusions of (a) and (b). Applied to the linearized estimates with $\hat s$ in place of $\hat\varsigma$, it shows that the probability of this event has limit inferior at least $1-a$ and converges to $1-a$ when $\mathsf h_y^2/\varsigma_y^2\to0$. Since the event $\hat Q\ne0$ has probability tending to one, the same holds for the ratio interval, which proves (d). This completes the proof.
\end{proof}

\subsection{Coverage of the intervals in simulation}\label{app:coverage}

The simulation applies the within-stratum estimators of \Cref{app:reports,app:finite} and the $t$ interval used in the analyses on simulated strata with true values from $10^6$ Monte Carlo draws. Strata have $6$ varying positions with $p=192$ coordinates, a smooth nonlinear response with an analytic gradient and $10$ directions. Each model's exact task stratum-size vectors (strata of at least three prompts, weights $\omega_k\propto n_k-1$) are reproduced under the Gaussian law, a multivariate $t_5$ law and a sparse JumpReLU code $H=\sum_jz_jd_j+e$, the law closest to SAE activations, with $1{,}000$ replicates each.

\begin{table}[h]
\centering\footnotesize
\setlength{\tabcolsep}{3.8pt}
\begin{tabular}{l*{12}{c}}
\toprule
& \multicolumn{4}{c}{Gaussian} & \multicolumn{4}{c}{$t_5$} & \multicolumn{4}{c}{JumpReLU code} \\
\cmidrule(lr){2-5}\cmidrule(lr){6-9}\cmidrule(lr){10-13}
& GPT-2 & 2B & 9B & 8B & GPT-2 & 2B & 9B & 8B & GPT-2 & 2B & 9B & 8B \\
Tasks & 8 & 10 & 10 & 7 & 8 & 10 & 10 & 7 & 8 & 10 & 10 & 7 \\
\midrule
Stein, mean & 0.961 & 0.971 & 0.970 & 0.967 & 0.918 & 0.925 & 0.927 & 0.925 & 0.835 & 0.821 & 0.824 & 0.820 \\
Stein, test-weighted & 0.955 & 0.971 & 0.970 & 0.964 & 0.915 & 0.929 & 0.925 & 0.921 & 0.863 & 0.814 & 0.829 & 0.830 \\
Stein, $K_s\le9$ & 0.974 & 0.981 & 0.978 & 0.978 & 0.925 & 0.934 & 0.939 & 0.936 & 0.804 & 0.792 & 0.796 & 0.777 \\
Stein, $K_s\ge30$ & 0.946 & 0.952 & 0.960 & 0.950 & 0.908 & 0.914 & 0.915 & 0.914 & 0.882 & 0.886 & 0.889 & 0.880 \\
\midrule
$L_j$ & 0.936 & 0.933 & 0.932 & 0.934 & 0.932 & 0.928 & 0.928 & 0.933 & 0.941 & 0.938 & 0.936 & 0.940 \\
$M_j$ & 0.936 & 0.935 & 0.935 & 0.937 & 0.934 & 0.932 & 0.929 & 0.934 & 0.940 & 0.941 & 0.940 & 0.942 \\
$C_j$ & 0.917 & 0.911 & 0.911 & 0.912 & 0.893 & 0.893 & 0.889 & 0.892 & 0.926 & 0.920 & 0.924 & 0.919 \\
\bottomrule
\end{tabular}
\caption{Coverage of nominal $95\%$ stratum-level $t$ intervals at the empirical stratum sizes, averaged over the reproduced tasks (Tasks). 2B, 9B and 8B are Gemma-2-2B, Gemma-2-9B and Llama-3.1-8B. Test-weighted weights tasks by their number of tests. The $K_s$ rows keep tasks with at most nine or at least thirty strata.}\label{tab:sim-emp}
\end{table}

In \Cref{tab:sim-emp} the Stein intervals cover $0.961$ to $0.971$ under the Gaussian law and $0.918$ to $0.927$ under $t_5$. Under the JumpReLU code they cover $0.820$ to $0.835$ on average over tasks, $0.777$ to $0.804$ for tasks with at most nine strata and $0.880$ to $0.889$ for tasks with thirty or more. Separated sparse codes are far from Gaussian and give heavy-tailed within-stratum Stein estimates. The simulation thus shows substantial finite-sample undercoverage of the Stein intervals under the sparse JumpReLU design. Coverage is higher in the groups with more strata, consistent with the many-strata limit of \Cref{prop:cells}. The ablation intervals are much less affected. Those for $L_j$ and $M_j$ cover $0.928$ to $0.942$ under each of the three laws. Those for $C_j$ cover $0.911$ to $0.917$ under the Gaussian law, $0.889$ to $0.893$ under $t_5$ and $0.919$ to $0.926$ under the JumpReLU code. Coverage at these stratum sizes depends on unbiased stratum estimates, which the within-stratum U-statistics provide for strata of at least three prompts. In a design with $60$ strata of fixed size $n_k\in\{3,4,5\}$, the stratum-level intervals around within-stratum U-statistics cover at $0.964$, $0.971$ and $0.957$, whereas intervals around plug-in stratum estimates cover at $0.353$, $0.482$ and $0.555$.

%% file: sections/app_empirics.tex
\section{Experimental setup}\label{app:empirics}

This appendix describes the tasks, models, interventions and estimation shared by the analyses of \Cref{sec:results}. Every empirical result is given model by model for GPT-2, Gemma-2-2B, Gemma-2-9B and Llama-3.1-8B, in this order, with one SAE at one layer of each model. Unless stated otherwise, ``attribution'' is $\alpha(\Delta)$ of \eqref{eq:effect} for the intervention named and every quantity is computed on the evaluation split. Bracketed intervals are $95\%$ percentile bootstrap intervals whose resampling unit is given with each table. Estimates from fewer than thirty direction-task pairs are marked with an asterisk$^\ast$.

\subsection{Data, models and protocols}\label{app:protocols}

\begin{table}[ht]
\centering\scriptsize
\setlength{\tabcolsep}{2.5pt}
\begin{adjustbox}{max width=\textwidth}
\begin{tabular}{@{}>{\raggedright\arraybackslash}p{2.35cm}c>{\raggedright\arraybackslash}p{4.35cm}>{\raggedright\arraybackslash}p{4.2cm}>{\raggedright\arraybackslash}p{1.4cm}rrc@{}}
\toprule
& & & & & \multicolumn{2}{c}{Prompts} & Strata \\
\cmidrule(lr){6-7}
Task & Cat. & Construction & Representative template & Answer & Sel. & Eval. & $n_k\ge3$ \\
\midrule
\texttt{antonym} & L & $62$ pairs in both directions, $4$ tpl. & \texttt{The opposite of "\{x\}" is} & antonym & 317 & 74 & 0\,/\,0\,/\,0 \\
\texttt{country\_capital} & F & $51$ country-capital pairs, $20$ tpl. & \texttt{The capital of \{x\} is} & capital city & 832 & 188 & 2\,/\,3\,/\,3 \\
\texttt{country\_capital\_\allowbreak mcqa} & F & $58$ countries, two prompts each with the capital and three random distractor capitals in random order & \texttt{What is the capital of \{x\}? (A) \{o1\} (B) \{o2\} (C) \{o3\} (D) \{o4\}. Answer:} & option letter & 71 & 15 & 1\,/\,3\,/\,1 \\
\texttt{country\_language} & F & $63$ countries with one common language, $4$ tpl. & \texttt{In \{x\}, people commonly speak} & language & 157 & 45 & 2\,/\,3\,/\,2 \\
\texttt{few\_shot\_\allowbreak induction} & I & $400$ prompts, two demonstration pairs and a query pair from disjoint halves of the antonym list & \texttt{\{a\}:\{b\}, \{c\}:\{d\}, \{x\}:} & antonym of the query word & 264 & 56 & 6\,/\,9\,/\,6 \\
\texttt{ioi} & I & $7$ tpl.\ in two name orders, single-token GPT-2 names, $13$ places, $14$ objects & \texttt{Then, \{x\} and \{y\} went to the \{place\}. \{y\} gave a \{object\} to} & name of the indirect object & 1{,}605 & 391 & 71\,/\,71\,/\,71 \\
\texttt{last\_letter} & L & $132$ common words, $4$ tpl. & \texttt{The last letter of "\{x\}" is} & letter & 334 & 89 & 11\,/\,10\,/\,11 \\
\texttt{person\_birthplace} & F & $50$ person-city pairs, $18$ tpl. & \texttt{\{x\} was born in} & city & 734 & 166 & 4\,/\,4\,/\,5 \\
\texttt{person\_occupation} & F & $40$ person-occupation pairs, $25$ tpl. & \texttt{\{x\} worked as a} & occupation & 784 & 216 & 32\,/\,37\,/\,29 \\
\texttt{simple\_addition} & A & $150$ pairs, $x,y\ge10$, $x+y\le99$, half with a carry, $29$ tpl. & \texttt{\{x\} + \{y\} =} & integer & 1{,}581 & 419 & 50\,/\,46\,/\,43 \\
\texttt{two\_digit\_\allowbreak subtraction} & A & $600$ pairs, $10\le y\le x\le99$, half with a borrow, $1$ tpl. & \texttt{\{x\} - \{y\} =} & integer & 384 & 96 & 9\,/\,9\,/\,9 \\
\midrule
Total & & & & & 7{,}063 & 1{,}755 & 188\,/\,195\,/\,180 \\
\bottomrule
\end{tabular}
\end{adjustbox}
\caption{The eleven tasks, with templates abbreviated tpl.\ and the category (Cat.) factual recall (F), lexical (L), in-context (I) or arithmetic (A). Entity lists are curated and operands are drawn at random. In a template \texttt{\{x\}} marks the entity slot, \texttt{\{a\}:\{b\}} and \texttt{\{c\}:\{d\}} the demonstration pairs and \texttt{\{y\}} the second operand or, in \texttt{ioi}, the subject. The corpus leaves out the paraphrase templates of \texttt{country\_capital}, \texttt{person\_birthplace}, \texttt{person\_occupation} and \texttt{simple\_addition}, the IOI test prompts and a held-out fifth (at least $30$ prompts) of each of the other six tasks. It caps each task at $2{,}000$ prompts. The last column counts the evaluation strata of at least three prompts for GPT-2, the two Gemma models and Llama-3.1-8B. Strata depend on the model through the tokenized prompt length. The primary analyses require strata of at least three prompts. The ablation sensitivity analysis also admits two-prompt strata.}\label{tab:tasks}
\end{table}

\paragraph{Tasks and splits.}
The corpus of \Cref{tab:tasks} holds $8{,}828$ prompts, split at random into a selection split of $7{,}063$ prompts ($80\%$) and an evaluation split of $1{,}765$. Ten evaluation prompts that also occur in the selection split are removed from every evaluation analysis, which leaves $1{,}755$. A stratum fixes a task, an answer and a prompt length.

\paragraph{Models, SAEs and outcome.}
GPT-2 \citep{radford2019language} is read with the residual-stream SAE at the input of block $6$ ($24{,}576$ latents, \citealp{bloom2024gpt2residualsaes}). Gemma-2-2B and Gemma-2-9B \citep{gemmateam2024gemma2} are read with the Gemma Scope JumpReLU residual-stream SAEs \citep{lieberum2024llamascope} at layer $12$ (width $16{,}384$, average $L_0$ $41$) and layer $22$ (width $131{,}072$, average $L_0$ $32$). Llama-3.1-8B \citep{dubey2024llama3} is read with the Llama Scope residual-stream SAE at layer $16$ ($32{,}768$ latents, \citealp{he2024llamascope}). All four decoders have full rank $d$ ($768$, $2{,}304$, $3{,}584$ and $4{,}096$). Each model is used under its published licence (MIT for GPT-2, the Gemma Terms of Use for Gemma~2, the Llama 3.1 Community License for Llama~3.1) and each SAE under the licence of its public release. The outcome of every probe, gradient and intervention is the teacher-forced log-probability of the full answer, $\ell=\sum_k\log p(y^\ast_k\mid\text{prompt},y^\ast_{<k})$, standardized within task with constants fixed on the selection split. Because the standardization is affine, $F$ denotes $\psi\circ F$ (\Cref{asm:cut}).

\paragraph{Gradients and compute.}
One backward pass per evaluation prompt returns $\nabla_H\ell$ at every prompt and answer position, in single precision. Analytic directional derivatives agree with central finite differences along $100$ latent and $20$ gradient directions on every model (for Llama-3.1-8B in double precision). The computations use NVIDIA A100 80GB GPUs on an institutional cluster and require at most about two GPU-hours per model and analysis.

\paragraph{Interventions.}
Every intervention is additive at the SAE hook, keeps the SAE error term and leaves the first position unchanged (\Cref{asm:cut}). Zero-ablation sets the code of one latent to zero at every position but the first. Mean-ablation sets it to the latent's task mean on the selection split. The analyses ablate one latent at a time. The worked example also ablates sets of latents (\Cref{app:case}). Steering adds $\pm\varepsilon\,d_j$ at the final prompt position, a fixed-size edit whose edit-gradient covariance in \eqref{eq:gap} is zero. With the unit edit $u=E_{\rm fin}^\top d_j$, the realized steering effects give the central difference quotient $T\coloneqq\{F(H+\varepsilon u)-F(H-\varepsilon u)\}/(2\varepsilon)$ at $\varepsilon=0.25\,\sigma_{tj}$, pooled over strata with the weights of $\alpha$, where $\sigma_{tj}$ is the pooled within-stratum standard deviation of the latent's final-position score on the evaluation split. On the scale of $u$, $T$ is compared with the attribution $d_j^\top E_{\rm fin}\gamma=\alpha(\Delta)/\varepsilon$ (mean gradient $\gamma=\EE\nabla F(H)$) and with the probe-implied change $(Au)\beta=\beta$ (final-position score $A=d_j^\top E_{\rm fin}$). Because $\sigma_{tj}$ is estimated on the evaluation split, the steering edit is not fixed by the selection split. The steering comparisons of \Cref{app:x6} are therefore descriptive, lie outside the inference of \Cref{thm:inference} and carry descriptive bootstrap intervals.

\paragraph{Within-stratum estimation.}
Stratum-specific quantities are estimated without bias in strata with at least three prompts and pooled within task with weights $\omega_k\propto n_k-1$ (\Cref{prop:cells}). A ratio such as a probe coefficient is the ratio of its pooled numerator and its pooled denominator (\Cref{app:e1r}). Pooling prompts over strata before estimation would change the estimand, since the coefficient of a probe pooled over all strata of a task splits exactly into a within-stratum and a between-stratum part. Over the $50$, $244$, $481$ and $24$ selected direction-task pairs the median absolute share of the between-stratum part is $0.632$, $0.816$, $0.818$ and $0.662$. A pooled probe is thus mostly a contrast between answers and prompt lengths, which is why every estimate is computed within a stratum.

\subsection{Prespecified and post hoc analyses}\label{app:prespec}

The fixed-direction analysis and its variants (\Cref{app:e1r}), the coefficient-selected steering and its second-dictionary replication (\Cref{app:x6,app:x7}), the ablation decomposition (\Cref{app:e8}) and the single-latent ablations (\Cref{app:singletons}) are prespecified. A prespecified analysis has its estimand and decision rule fixed before its outcome is computed. Three further prespecified criteria of these plans concern set ablations. The first requires the prompt-specific gradient prediction of the finite ablation of each task's selection to be more accurate than the probe-based one. It is met in all four models under zero- and mean-ablation, with paired intervals excluding zero. Each of the other two must hold in at least three models. The second requires the first-order regime at reduced edit scales of the set ablation, with a near-quadratic remainder of slope in $[1.5,2.5]$. It is met in all four models. The third requires interactions to dominate the remainder of the full set ablation, with an interaction share above $0.5$. It is met only on Gemma-2-9B. The remaining analyses are post hoc and designed after the aggregate results. They comprise the larger steering steps and the maximum-activation displacements (\Cref{app:x6}), the sampling-noise analysis (\Cref{app:noise}), the probe-to-ablation breakdown and the positional analysis (\Cref{app:x8,app:position}), the probes on FDFI's averaged summaries (\Cref{app:fdfi-summary}), the observed span of a stratum with its prespecified readout (\Cref{app:span-check}), the rankings against steering responses and the held-out set ablations (\Cref{app:choice}) and the worked example (\Cref{app:case}). The analysis specifications are released with the code.

\subsection{The FDFI selection step}\label{app:selection}

FDFI produces the candidates analysed in \Cref{sec:results}. We use its Gaussian transport map with squared loss, under which FDFI coincides with disentangled feature importance with the Bures-Wasserstein map \citep{du2025disentangled}. The flow-matching map of \citet{chen2026fdfi} is not used. On the selection split it fits one ridge predictor $f$ (penalty $\nu$) of the within-task standardized outcome, linear in the active latent activations $z\in\RR^{K_{\rm act}}$ averaged over a set $\mathcal P$ of positions, with a task-specific intercept. The active set holds the latents whose selection-split firing rates lie in an interval chosen without reference to the target ($K_{\rm act}=679$, $329$, $648$ and $396$). Within a task, with $\bar z$ and $\Sigma_z\succ0$ the mean and covariance of $z$, FDFI uses the Gaussian transport map $z=\bar z+\Sigma_z^{1/2}\xi$ with the symmetric square root, under which $\xi\coloneqq\Sigma_z^{-1/2}(z-\bar z)$ has $\EE\xi=0$ and $\Cov(\xi)=I$ for any law of $z$. The implementation combines the decorrelated-coordinate importance estimates using squared transport-Jacobian entries, giving $\hat\phi_l=\sum_k(\hat L_{lk})^2\hat\phi_k^Z$, where $\hat L$ is the fitted Gaussian transport matrix. Here $\phi_l$ denotes the population importance targeted by $\hat\phi_l$ under the specified Gaussian-transport procedure. The selection $S_t$ applies Benjamini-Hochberg at $q=0.05$ within task $t$ \citep{benjamini1995controlling} to the one-sided statistics $\hat\phi_l/\widehat{\rm se}_l$ for $H_0:\phi_l=0$ against $\phi_l>0$. The standard errors are those of the package's confidence intervals with the gap margin and the null-threshold adjustment. The $p$-values are one-sided normal-reference $p$-values against zero computed from these standard errors. The selections contain $50$, $244$, $481$ and $24$ direction-task pairs with $41$, $140$, $287$ and $20$ distinct latents. With fitting choices and algorithmic randomness held fixed, the estimated scores and selections depend only on the observed activations, targets and stratum labels. They therefore remain unchanged under the response perturbations of \Cref{prop:discrete}, applied stratum by stratum. A statement about intervening on a selected latent therefore rests on the attribution rather than on its FDFI statistic.

\subsection{Mean edits and the observed span of a stratum}\label{app:span-check}

This section concerns the empirical law of the observed prompts of each evaluation stratum, for which $V$ is the span of the observed activation differences. It assesses the geometric condition of \Cref{prop:discrete} in the evaluation design and makes no statement about the dimension of the population support. \Cref{prop:discrete} leaves the empirical-average attribution unrestricted whenever the empirical mean edit lies outside $V$. For every FDFI-selected pair and every evaluation stratum of at least three prompts we compute the share $\|\bar u_{V^\perp}\|/\|\bar u\|$ of the mean edit outside $V$, treating positions whose activations are numerically constant within the stratum (relative tolerance $10^{-6}$) as constant. The full-space shares agree with these at the displayed precision. Each pair value averages its strata with weights $n_k-1$. Under unit steering at the final position the median share over pairs is $99.94$, $99.97$, $99.96$ and $99.94\%$. The smallest pair value is $0.973$ and no stratum has a share below $10^{-6}$. Under zero-ablation every nonzero mean edit lies outside $V$, with median shares of $0.994$ to $0.998$. In the remaining strata the edit vanishes at every position that varies within the stratum. At this numerical tolerance every assessed stratum has rank $n_k-1$ and hence affinely independent activations, except one IOI stratum of six prompts in each model. The secant reading of \Cref{lem:finite-stein}, which requires affinely independent support of the stratum's empirical law, therefore applies to every other stratum.

%% file: sections/app_reports.tex
\section{The probe against the intervention attribution}\label{app:reports}

This appendix supports the statements of \Cref{sec:results} that the probe-implied change departs from the intervention attribution in size and in sign. \Cref{app:prespec} states which of its analyses were prespecified. The outcome is the within-task standardized log-probability of the answer, $d_j$ is the unit decoder direction of latent $j$ and quantities are pooled over strata of at least three prompts with weights $\omega_k\propto n_k-1$. Unless stated otherwise the probe is fitted on the final-position decoder score $A=d_j^\top E_{\rm fin}$, with coefficient $\beta$. Pairs are direction-task pairs.

\subsection{The pipeline's own directions under a fixed edit}\label{app:e1r}

Under a unit steering intervention the attribution of an FDFI-selected direction is a few percent of its probe coefficient, whereas the Stein residual and the omitted-variable term are of the order of the coefficient.

\paragraph{Design.} The analysis covers the selected pairs $(t,j)$, $j\in S_t$, that have an estimate (\Cref{tab:e1r}) under the unit steering edit $\Delta\equiv u=E_{\rm fin}^\top d_j$. Since $Au=1$, the probe-implied change is $\rho=\beta$, the edit-gradient covariance vanishes and \eqref{eq:gap} reads $\beta-\alpha=\st_{u'}+\cm$, with $\alpha=d_j^\top\gamma_{\rm fin}$ for the mean gradient $\gamma_{\rm fin}$ at the final position. Within each stratum the covariance and variance of the score, $\alpha$, $\st$ and $\cm$ are estimated by unbiased U-statistics. The pooled coefficient $\beta=\sum_kq_k\beta_k$, $\alpha$, $\st$ and $\cm$ average their stratum-specific values with weights $q_k\propto\omega_k\sigma_k^2$, where $\sigma_k^2$ is the score variance of stratum $k$ (\Cref{app:proofs:gap}). Intervals resample strata within task ($2{,}000$ replicates).

\paragraph{Results.} The median $|\alpha/\beta|$ is $0.028$ to $0.046$ and the ratio is below $0.1$ for $74$ to $83\%$ of pairs, with medians of $0.020$ to $0.057$ across the prespecified variants (\Cref{tab:e1r}). The signs of $\alpha$ and $\beta$ disagree for $22$ to $36\%$ of pairs and for $15$ to $43\%$ across the variants, shares that are descriptive because sampling noise alone would produce more sign disagreement (\Cref{app:noise}). The median $|\st/\beta|$ is $0.614$, $0.922$, $0.728$ and $0.583$ and the median $|\cm/\beta|$ is $0.514$, $0.593$, $0.463$ and $0.406$. For a law with $\Sigma_H\succ0$ the omitted-variable term of the unit steering edit vanishes when $\mathrm{VIF}_j=1$ (\Cref{lem:vif}), a lemma that does not apply to the singular $\Sigma_H$ of the empirical stratum laws. A proxy with Ledoit-Wolf shrinkage restricts the pooled within-stratum covariance of the final-position activation on the selection split to the span of the active decoder directions. Restriction lowers or preserves the variance inflation, which makes the proxy an approximate lower bound on the pooled $\mathrm{VIF}_j$. Over all $50$, $244$, $481$ and $24$ selected directions its median is $141.647$, $78.035$, $122.873$ and $91.910$ and it exceeds $10$ for $98$, $97$, $99$ and $96\%$ of directions.

\begin{table}[h]
\centering\scriptsize
\setlength{\tabcolsep}{3pt}
\begin{adjustbox}{max width=\textwidth}
\begin{tabular}{llcccccccccc}
\toprule
& & & \multicolumn{2}{c}{Primary, $n_k\ge3$} & \multicolumn{2}{c}{Trim $|\beta|$} & & & \multicolumn{2}{c}{Larger strata} & \\
\cmidrule(lr){4-5}\cmidrule(lr){6-7}\cmidrule(lr){10-11}
Model & Statistic & $N$ (tasks) & estimate [CI] & $<0.1$ & $10\%$ & $20\%$ & Eq.\ tasks & LOTO & $n_k\ge4$ & $n_k\ge6$ & Direct \\
\midrule
GPT-2 & $|\alpha/\beta|$ & 46 (8) & 0.046 [0.030, 0.059] & 0.739 & 0.026 & 0.024 & 0.054 & 0.031 to 0.057 & 0.036 & 0.035 & 0.049 \\
 & sign disagr. & & 0.348 [0.283, 0.435] & & 0.317 & 0.306 & 0.365 & 0.318 to 0.378 & 0.333 & 0.375 & 0.391 \\
\addlinespace
Gemma-2-2B & $|\alpha/\beta|$ & 230 (10) & 0.032 [0.022, 0.036] & 0.770 & 0.027 & 0.024 & 0.035 & 0.031 to 0.034 & 0.034 & 0.044 & 0.030 \\
 & sign disagr. & & 0.352 [0.317, 0.426] & & 0.353 & 0.337 & 0.321 & 0.343 to 0.365 & 0.391 & 0.430 & 0.400 \\
\addlinespace
Gemma-2-9B & $|\alpha/\beta|$ & 457 (10) & 0.028 [0.019, 0.035] & 0.751 & 0.023 & 0.020 & 0.022 & 0.023 to 0.032 & 0.023 & 0.027 & 0.031 \\
 & sign disagr. & & 0.363 [0.341, 0.422] & & 0.355 & 0.353 & 0.360 & 0.350 to 0.379 & 0.384 & 0.415 & 0.396 \\
\addlinespace
Llama-3.1-8B & $|\alpha/\beta|$ & 23 (7)$^\ast$ & 0.032 [0.022, 0.051] & 0.826 & 0.030 & 0.027 & 0.032 & 0.029 to 0.038 & 0.031 & 0.056 & 0.027 \\
 & sign disagr. & & 0.217 [0.174, 0.348] & & 0.150 & 0.167 & 0.287 & 0.182 to 0.250 & 0.300 & 0.200 & 0.217 \\
\bottomrule
\end{tabular}
\end{adjustbox}
\caption{The fixed-direction analysis and its prespecified variants, with the median $|\alpha/\beta|$ and the share of pairs with $\operatorname{sign}\alpha\ne\operatorname{sign}\beta$. Column $<0.1$ is the share with $|\alpha/\beta|<0.1$. Trim drops the smallest $10\%$ or $20\%$ of $|\beta|$, Eq.\ tasks weights tasks equally and LOTO is the leave-one-task-out range. Larger strata leave $42$, $202$, $417$ and $20$ pairs at $n_k\ge4$ and $40$, $107$, $277$ and $15$ at $n_k\ge6$. Direct replaces $\alpha$ by the unweighted mean of $d_j^\top g_{\rm fin}$ over the task's evaluation prompts.}\label{tab:e1r}
\end{table}

\paragraph{Dependence on the summary.}\label{app:span}
The coefficient that a probe assigns to a latent depends on the summary that contains its score. A second probe for each selected pair reads a summary $S=AH$ of $6$ to $18$ rows, the score $d_j^\top h$ at the final position, at up to eight preceding prompt positions and at up to four answer positions common to the task's strata (on GPT-2 also eight within-task principal components of the mean activation at the non-final positions). Since $A\Sigma_HA^\top$ is singular in every stratum with at most as many prompts as rows, its coefficient $\beta_S$ uses within-stratum moments of $(S,Y)$ pooled over the task's strata, to which the pooled identity of \Cref{app:proofs:gap} applies whenever the pooled summary covariance is positive definite. Tasks with pooled within-stratum degrees of freedom of at least three times the number of rows keep $29$, $92$, $225$ and $15$ pairs. Since $Au=e_1$, the probe-implied change for unit steering is $\beta_S[1]$, whose sign is opposite to that of $\beta$ for $108$ of the $361$ pairs ($6/29$, $33/92$, $65/225$, $4/15$), whereas $\alpha$ is the same for both. Changing $A$ changes $\rho$ and moves $\st+\cm$ by exactly the difference, as \eqref{eq:gap} requires for every linear probe.

\subsection{Steering of coefficient-selected directions}\label{app:x6}

The attribution predicts the steering effect of the directions with the largest coefficients, which exceed that effect by two orders of magnitude. Because \Cref{thm:gap} holds for every selection rule, selecting by the coefficient tests its intervention predictions directly.

\paragraph{Design.} For each task with at least two evaluation strata of three or more prompts, the ten active latents whose decoder scores have the largest within-stratum coefficients $|\beta^{\rm sel}_{tj}|$ on the selection split form $\mathcal S^{\beta}_t$. No FDFI quantity and no evaluation-split data enter the selection. On the evaluation split $\beta$ is re-estimated and $T_{tj}$ is the central difference quotient of steering at $\pm0.25\,\sigma_{tj}$, pooled with the weights of $\alpha$ (\Cref{app:protocols}). Because the score's pooled within-stratum standard deviation $\sigma_{tj}$ is estimated on this split, the comparisons are descriptive and lie outside \Cref{thm:inference}. A standardized change $\delta$ multiplies the answer probability by $\exp(\delta s_t)$, with $s_t$ the within-task standard deviation of the answer's log-probability on the selection split, which puts the steering intervention at a median change of $0.02$ to $0.09\%$ in the answer probability. The prespecified primary contrast is the four-model average of $\mathrm{Gain}_m=\mathrm{mean}_t\,\mathrm{mean}_{j\in \mathcal S^{\beta}_t}[\,|\sigma_{tj}(\beta_{tj}-T_{tj})|-|\sigma_{tj}(\alpha_{tj}-T_{tj})|\,]$. Its decision rule requires an interval excluding zero and $\mathrm{Gain}_m>0$ in every model. The descriptive interval comes from a joint cross-model bootstrap that draws tasks once per replicate for all models and then resamples strata ($2{,}000$ replicates).

\paragraph{Results.} The prespecified criterion holds (\Cref{tab:x6}), with $\mathrm{Gain}=0.345$, descriptive interval $[0.267,0.487]$ and a positive contrast in every model. The attribution matches $T$ in sign and is closer to $T$ than $\beta$ is for $380$ of $380$ pairs, with median $|T/\alpha|$ of $1.000$ to $1.001$. A no-intercept calibration of $T$ on $\alpha$ has slope $0.997$ to $1.000$ and $R^2>0.999$, against $R^2\le0.280$ for $\beta$. The coefficient is about $130$ to $400$ times the steering effect in median magnitude and has the opposite sign for $50$, $43$, $21$ and $40\%$ of pairs. The selection-split coefficient $\beta^{\rm sel}$ gives a larger contrast still ($0.344$, $0.372$, $0.389$ and $0.449$). Selecting directions to maximize $|\beta|$ leaves the Stein residual and the omitted-variable term of the order of $\beta$ (median $|\st/\beta|$ $0.404$ to $0.730$, $|\cm/\beta|$ $0.506$ to $0.647$). These directions lie almost entirely outside the FDFI selections (task-mean share $0$ to $0.030$), which makes the result a statement about the coefficient as a selection criterion. In a post hoc descriptive extension, steering at $2$, $5$ and $10$ within-stratum standard deviations leaves the attribution with the sign of $T$ for $753$, $739$ and $712$ of the $756$ FDFI-selected pairs and for $380$, $380$ and $379$ of the $380$ coefficient-selected pairs. No step reaches the prespecified breakdown threshold, at which $|T-\alpha|>\frac12|\alpha|$ for a majority of pairs in three of four models.

\paragraph{Displacements scaled by the maximum activation.} The same extension adds displacements of $1$, $2$ and $5$ times the evaluation-split maximum activation of each latent, its largest code over evaluation prompts and positions after the first, converted to a displacement with the decoder norm. These are added displacements, not settings of the activation to its maximum. The prespecified breakdown threshold concerns the standard-deviation grid and is not applied here. The results are descriptive. The first-answer-token flip rate averages, for each pair, the indicator that the top-ranked first answer token changes over both steering signs and the pair's prompts. Pairs are then weighted equally. Closeness compares $\alpha$ and $\beta$ with the central difference quotient and does not assess each one-sided intervention effect. Breakdown is the fraction of pairs with $|T-\alpha|>\frac12|\alpha|$. At one maximum activation the flip rate is $0.219$, $0.080$, $0.048$ and $0.101$ for the FDFI-selected pairs ($46$, $230$, $457$ and $23$) and $0.071$, $0.030$, $0.018$ and $0.027$ for the coefficient-selected pairs ($90$, $100$, $97$ and $89$). The attribution is closer to $T$ than $\beta$ for $0.978$, $0.996$, $0.987$ and $1.000$ of the FDFI-selected pairs and for every coefficient-selected pair, with median $|T/\alpha|$ of $0.965$ to $1.006$. Its sign agreement with $T$ is $0.947$ to $0.978$ for the FDFI-selected pairs and $0.930$ to $0.978$ for the coefficient-selected pairs, against $0.646$ to $0.826$ and $0.500$ to $0.763$ for $\beta$. Breakdown occurs for $0.043$ to $0.239$ and $0.072$ to $0.150$ of these pairs. At five maximum activations the flip rate rises to $0.295$ to $0.678$ and $0.152$ to $0.339$ and breakdown to $0.652$ to $0.739$ and $0.389$ to $0.600$, while the attribution stays closer to $T$ than $\beta$ for $0.950$ to $1.000$ of pairs.

\begin{table}[h]
\centering\scriptsize
\setlength{\tabcolsep}{2.5pt}
\begin{adjustbox}{max width=\textwidth}
\begin{tabular}{lcccccccccc}
\toprule
& & & \multicolumn{2}{c}{$\alpha$ vs.\ $T$} & & & & & & \\
\cmidrule(lr){4-5}
Model & Pairs (tasks) & $\mathrm{Gain}_m$ [CI] & sign agr. & $|T/\alpha|$ & $|T/\beta|$ & opp.\ sign $\alpha,\beta$ & $|\st/\beta|$ & $|\cm/\beta|$ & $R^2$ ($\alpha$ / $\beta$) & in $S_t$ \\
\midrule
GPT-2 & 90 (9) & 0.333 [0.248, 0.456] & 1.000 & 1.000 & $7.60\times10^{-3}$ & 0.500 & 0.404 & 0.647 & 1.000 / $-$0.022 & 0 \\
Gemma-2-2B & 100 (10) & 0.311 [0.219, 0.491] & 1.000 & 1.000 & $3.68\times10^{-3}$ & 0.430 & 0.730 & 0.599 & 1.000 / $6.70\times10^{-3}$ & 0.010 \\
Gemma-2-9B & 100 (10) & 0.346 [0.182, 0.699] & 1.000 & 1.001 & $2.50\times10^{-3}$ & 0.210 & 0.637 & 0.613 & 1.000 / 0.280 & 0.030 \\
Llama-3.1-8B & 90 (9) & 0.390 [0.270, 0.517] & 1.000 & 1.000 & $3.36\times10^{-3}$ & 0.400 & 0.609 & 0.506 & 1.000 / 0.030 & 0 \\
\midrule
Aggregate & 380 (38) & 0.345 [0.267, 0.487] & 380/380 & & & & & & & \\
\bottomrule
\end{tabular}
\end{adjustbox}
\caption{Steering the directions with the largest selection-split $|\beta|$ at $\pm0.25$ within-stratum standard deviations. The contrast $\mathrm{Gain}_m$ is positive when $\alpha$ predicts $T$ better than $\beta$ and its intervals are descriptive. Ratios are medians over pairs. The $R^2$ of the no-intercept calibration of $T$ on the predictor uses the centred total sum of squares and is negative for a fit worse than the mean. Column in $S_t$ is the task-mean share of directions in the FDFI selection.}\label{tab:x6}
\end{table}

\paragraph{A second dictionary.}\label{app:x7}
A gap that is a property of the probe rather than of one dictionary reappears under a second SAE. A prespecified rule picks for each model an alternate SAE at the same hook with a different width and the nearest average $L_0$, eligible if its fraction of variance unexplained (FVU) is at most $0.5$ and at most twice the primary's. The GPT-2 alternate has FVU $0.729$ against a threshold of $0.278$ and is excluded. On both dictionaries the selection rule of this subsection acts on the latents firing at the final position on at least $10\%$ of selection-split prompts. The prespecified criteria are (i) a task-balanced median $|\alpha/\beta|\le0.10$, (ii) a sign disagreement $\ge0.15$ and (iii) $|C|>|M|$ for $\ge10\%$ of directions. Both dictionaries meet (i) and (ii) on the three eligible models (\Cref{tab:x7}), with median $|\alpha/\beta|\le9.51\times10^{-3}$ and sign disagreement $0.300$ to $0.420$. Criterion (iii) holds on Llama-3.1-8B only and fails on the Gemma models under both dictionaries ($0.030$ to $0.040$). Under the FDFI selections of the same Gemma models $|C|>|M|$ holds for $34$ and $39\%$ of latents (\Cref{tab:e8a}), which points to the selection rule rather than the dictionary as the source of the low rate.

\begin{table}[h]
\centering\scriptsize
\setlength{\tabcolsep}{3pt}
\resizebox{\textwidth}{!}{\begin{tabular}{llcccc}
\toprule
Model & Alternate SAE (primary) & FVU alt.\ / prim.\ (thr.) & Pool prim.\ / alt. & Primary (i) / (ii) / (iii) & Alternate (i) / (ii) / (iii) \\
\midrule
GPT-2 & \texttt{res\_scefr-ajt} (\texttt{res-jb}) & 0.729 / 0.139 (0.278) & N/A & not eligible & N/A \\
Gemma-2-2B & 32k, $L_0$ 40 (16k, $L_0$ 41) & 0.084 / 0.092 (0.184) & 86 / 62 & $4.97\times10^{-3}$ / 0.420 / 0.040 & $5.02\times10^{-3}$ / 0.390 / 0.040 \\
Gemma-2-9B & 16k, $L_0$ 35 (131k, $L_0$ 32) & 0.056 / 0.045 (0.090) & 46 / 71 & $9.51\times10^{-3}$ / 0.300 / 0.030 & $5.95\times10^{-3}$ / 0.390 / 0.040 \\
Llama-3.1-8B & \texttt{l16r\_32x} (\texttt{l16r\_8x}) & 0.243 / 0.275 (0.500) & 409 / 278 & $4.75\times10^{-3}$ / 0.389 / 0.111 & $4.07\times10^{-3}$ / 0.356 / 0.111 \\
\bottomrule
\end{tabular}}
\caption{A second dictionary at the same hook, with the candidate pool and the criteria (i) to (iii) of \Cref{app:x7}. The FVU threshold is $\min\{0.5,\,2\,\mathrm{FVU}_{\rm prim}\}$.}\label{tab:x7}
\end{table}

\subsection{Gap and sampling noise in the coefficient}\label{app:noise}

The gap between the coefficient and the attribution exceeds the sampling noise in $\hat\beta$. Noise alone could produce a small $|\alpha/\beta|$ if the coefficient equalled the attribution, through a large $|\hat\beta|$ of random sign.

\paragraph{Design.} The FDFI-selected pairs of \Cref{app:e1r} and the coefficient-selected pairs of \Cref{app:x6} are tested, the latter with $\beta$ estimated on evaluation prompts not used to select them. For the unit steering intervention the gap $G\coloneqq\beta-\alpha=\st_{u'}+\cm$ is a ratio of pooled sums over the denominator $\sum_k\omega_k\widehat{\VV}_k$ shared by $\beta$ and $\alpha$ and carries the linearized stratum-level standard error of \Cref{prop:cells} with the factor $K_s/(K_s-1)$. Each $\hat G/\widehat{\rm se}(\hat G)$ is referred two-sided to $t_{K_s-1}$, with Benjamini-Hochberg (BH) control at $q=0.05$ within model. The ratio $R_m\coloneqq\sum_p|\hat\alpha_p|/\sum_p|\hat\beta_p|$ over a model's pairs is insensitive to small denominators and has a task-resampled interval ($10{,}000$ replicates). The noise-only simulation of the null $\beta_p=\alpha_p$ draws $\beta^\ast_p=\hat\alpha_p+\widehat{\rm se}(\hat\beta_p)W_p$ with independent $W_p\sim t_{K_s-1}$, where $K_s$ is the number of evaluation strata of pair $p$, ($20{,}000$ replicates) and computes the median of $|\hat\alpha_p/\beta^\ast_p|$ and the share of opposite signs. The prespecified decision rule requires in every model a null probability below $0.05$ and a task-resampled upper limit of $R_m$ below $0.10$.

\paragraph{Results.} The criterion holds in every model and in both populations (\Cref{tab:noise}). Per-pair tests reject a zero gap for $227$ of the $756$ FDFI-selected pairs and for $177$ of the $380$ coefficient-selected pairs. The coefficient is the larger in magnitude in all $177$ rejections among the coefficient-selected pairs and in $225$ of the $227$ among the FDFI-selected pairs, with both exceptions on Gemma-2-9B. Under the noise-only simulation the median ratio would lie between $0.067$ and $0.226$ for the FDFI-selected pairs and every one of the $20{,}000$ simulated medians exceeds the observed one in every model and in both populations. Sign disagreement over all pairs is descriptive, since under the noise-only simulation the median share of opposite signs is $0.391$ to $0.453$ among the FDFI-selected pairs and $0.478$ to $0.490$ among the coefficient-selected pairs, against observed shares of $0.217$ to $0.363$ and $0.210$ to $0.500$ (\Cref{tab:e1r,tab:x6}). Evidence of sign disagreement beyond noise therefore rests on the pairs whose coefficient and attribution are both significantly nonzero under two separate BH families at $0.05$ within model. Of these, $27$ of $141$ FDFI-selected pairs and $22$ of $80$ coefficient-selected pairs have opposite signs.

\begin{table}[h]
\centering\scriptsize
\setlength{\tabcolsep}{3pt}
\resizebox{\textwidth}{!}{%
\begin{tabular}{llccccccc}
\toprule
& & & & & \multicolumn{2}{c}{Rejections of $G=0$} & Noise-only null & Both significant, \\
\cmidrule(lr){6-7}
Pairs & Model & $N$ (tasks) & Median $|\alpha/\beta|$ & $\sum|\alpha|/\sum|\beta|$ [CI] & all & $K_s\ge10$ & median [q5] & opposite sign \\
\midrule
FDFI-selected & GPT-2 & 46 (8) & 0.046 & 0.036 [0.019, 0.064] & 35 & 34/38 & 0.226 [0.156] & 5/20 \\
 & Gemma-2-2B & 230 (10) & 0.032 & 0.032 [0.022, 0.048] & 65 & 50/92 & 0.080 [0.066] & 11/42 \\
 & Gemma-2-9B & 457 (10) & 0.028 & 0.038 [0.021, 0.071] & 110 & 99/225 & 0.067 [0.059] & 9/67 \\
 & Llama-3.1-8B & 23 (7)$^\ast$ & 0.032 & 0.033 [0.023, 0.049] & 17 & 13/14 & 0.170 [0.103] & 2/12 \\
\addlinespace
Coefficient-selected & GPT-2 & 90 (9) & $7.61\times10^{-3}$ & $9.35\times10^{-3}$ [$6.07\times10^{-3}$, 0.013] & 43 & 36/40 & 0.030 [0.023] & 6/16 \\
 & Gemma-2-2B & 100 (10) & $3.70\times10^{-3}$ & $6.46\times10^{-3}$ [$3.69\times10^{-3}$, 0.012] & 47 & 36/40 & 0.016 [0.012] & 10/27 \\
 & Gemma-2-9B & 100 (10) & $2.49\times10^{-3}$ & $2.96\times10^{-3}$ [$1.95\times10^{-3}$, $9.26\times10^{-3}$] & 36 & 32/40 & 0.010 [$8.14\times10^{-3}$] & 0/15 \\
 & Llama-3.1-8B & 90 (9) & $3.41\times10^{-3}$ & $5.75\times10^{-3}$ [$2.67\times10^{-3}$, $9.88\times10^{-3}$] & 51 & 38/40 & 0.016 [0.012] & 6/22 \\
\bottomrule
\end{tabular}}
\caption{Per-pair tests of the gap $G=\beta-\alpha$ (BH at $0.05$ within model) and the noise-only null of \Cref{app:noise}. The pooled ratio has a task-resampled interval. Column $K_s\ge10$ counts rejections among pairs estimated from at least ten strata. The null columns give the median and fifth percentile of the simulated median ratio. The last column counts opposite signs among the pairs whose $\alpha$ and $\beta$ are both significant.}\label{tab:noise}
\end{table}

\subsection{The probe against single-latent ablations}\label{app:x8}

Under zero-ablation the probe-implied change is often zero or negligible where the intervention has a non-negligible attribution, almost always because the latent never fires at the final position. The analysis is post hoc and descriptive (\Cref{app:prespec}).

\paragraph{Design.} The analysis uses the single-latent zero- and mean-ablations of \Cref{app:e8} on the pairs of \Cref{tab:x8}. Within stratum $k$ the probe-implied change is $\rho_k=(A\bar u_k)\beta_k$, with $A\bar u_k$ the mean change of the final-position score and $\beta_k$ the stratum's coefficient. With the attribution $\alpha_k=\mathrm{mean}_i\,\Delta_i^\top g_i$, \eqref{eq:gap} reads $\rho_k-\alpha_k=\st_k+\cm_{{\rm fin},k}+\cm_{{\rm oth},k}-\ec_k$ in empirical moments, with the omitted-variable term split into the final position and the positions the probe does not read. Under zero-ablation $\ec_k$ is the term $C_j$ of \eqref{eq:ablate}. At the threshold $\delta_0=10^{-3}$ in standardized units, K0 holds the pairs with $|\alpha|\le\delta_0$. Among the others, K1 has $\rho=0$ because the latent never fires at the final position (zero-ablation), K2 has $|\rho|\le\delta_0$ outside K1 and K3 and K4 have $|\rho|>\delta_0$ with opposite and equal signs. The component share of a term is the mean of its magnitude, equally weighted over tasks, over $|\st|+|\cm_{\rm fin}|+|\cm_{\rm oth}|+|\ec|$. Intervals resample tasks within model ($2{,}000$ replicates).

\paragraph{Results.} Under zero-ablation the share of K1 and K2 is $0.370$, $0.296$ and $0.287$ on GPT-2 and the two Gemma models and $0.043$ on Llama-3.1-8B (\Cref{tab:x8}). Almost all of these pairs are in K1 ($15$ of $17$, $67$ of $68$, $130$ of $131$ and $1$ of $1$), whose latents never fire at the final position, the one position the probe reads. For them $A\bar u_k=0$ in every stratum, which gives $\st=\cm_{\rm fin}=0$ and places the whole attribution $\alpha=-\cm_{\rm oth}+\ec$ at unread positions (\Cref{cor:ablate}\ref{cor:ablate:blind}). A further $11$ to $24\%$ of pairs have opposite signs. Among same-sign pairs the probe overstates the attribution by a median factor of $7.502$ to $10.959$. Mean-ablation, which shifts the final-position score wherever the latent does not fire, leaves K1 empty. The component shares of \Cref{fig:headline}C locate the gap. The largest share falls to the omitted-variable term at unread positions under zero-ablation on GPT-2 and both Gemma models ($37$ to $45\%$), to the omitted-variable term at the final position on Llama-3.1-8B ($47\%$) and to the Stein residual under mean-ablation ($38$ to $51\%$). Over both ablations $\st$ and $\cm$ carry $89$ to $98\%$ of the summed magnitude and the edit-gradient covariance $2$ to $11\%$. The covariance is a large part of the attribution itself ($|C|>|M|$ for $17$ to $52\%$ of latents, \Cref{tab:e8a}) but a small part of a gap dominated by the probe's overstatement.

\begin{table}[h]
\centering\scriptsize
\setlength{\tabcolsep}{3pt}
\begin{adjustbox}{max width=\textwidth}
\begin{tabular}{llcccccccccccc}
\toprule
& & & \multicolumn{5}{c}{Counts of pairs} & & & \multicolumn{4}{c}{Component shares} \\
\cmidrule(lr){4-8}\cmidrule(lr){11-14}
Model & Ablation & $N$ (tasks) & K0 & K1 & K2 & K3 & K4 & K1+K2 [CI] & K3 [CI] & $\st$ & $\cm_{\rm fin}$ & $\cm_{\rm oth}$ & $\ec$ \\
\midrule
GPT-2 & zero & 46 (8) & 5 & 15 & 2 & 11 & 13 & 0.370 [0.294, 0.533] & 0.239 [0.071, 0.311] & 0.296 & 0.142 & 0.449 & 0.113 \\
 & mean & 25 (7)$^\ast$ & 4 & 0 & 0 & 5 & 16 & 0 & 0.200 [0.100, 0.280] & 0.389 & 0.207 & 0.371 & 0.033 \\
\addlinespace
Gemma-2-2B & zero & 230 (10) & 55 & 67 & 1 & 26 & 81 & 0.296 [0.199, 0.416] & 0.113 [0.077, 0.145] & 0.325 & 0.211 & 0.414 & 0.050 \\
 & mean & 229 (9) & 57 & 0 & 2 & 66 & 104 & $8.73\times10^{-3}$ [0, 0.016] & 0.288 [0.223, 0.385] & 0.400 & 0.277 & 0.303 & 0.020 \\
\addlinespace
Gemma-2-9B & zero & 457 (10) & 146 & 130 & 1 & 74 & 106 & 0.287 [0.189, 0.414] & 0.162 [0.127, 0.208] & 0.326 & 0.277 & 0.370 & 0.027 \\
 & mean & 456 (9) & 142 & 0 & 4 & 149 & 161 & $8.77\times10^{-3}$ [0, 0.017] & 0.327 [0.278, 0.394] & 0.377 & 0.315 & 0.292 & 0.016 \\
\addlinespace
Llama-3.1-8B & zero & 23 (7)$^\ast$ & 2 & 1 & 0 & 3 & 17 & 0.043 [0, 0.125] & 0.130 [0.040, 0.238] & 0.325 & 0.466 & 0.186 & 0.023 \\
 & mean & 23 (7)$^\ast$ & 0 & 0 & 0 & 8 & 15 & 0 & 0.348 [0.261, 0.467] & 0.507 & 0.207 & 0.242 & 0.044 \\
\bottomrule
\end{tabular}
\end{adjustbox}
\caption{The probe against single-latent ablations at $\delta_0=10^{-3}$, with counts of pairs by category, shares of K1 and K2 and of K3 among all pairs and component shares of $|\st|+|\cm_{\rm fin}|+|\cm_{\rm oth}|+|\ec|$, equally weighted over tasks. Because terms can cancel, these shares partition the summed magnitudes rather than the gap. A zero without interval is exact.}\label{tab:x8}
\end{table}

\paragraph{Positional sparsity.}\label{app:position}
SAE latents fire at few positions, whereas the probe reads only the final one. The analysis is post hoc and descriptive with four prespecified outcomes. It relates each outcome to the final-position firing share by a within-task Spearman correlation with a permutation $p$-value (\Cref{app:prespec}). Of the $548$ zero-ablation pairs with $|\alpha|>\delta_0$, the $213$ in K1 never fire at the final position. Their unread share $u\coloneqq|\cm_{\rm oth}|/(|\st|+|\cm_{\rm fin}|+|\cm_{\rm oth}|+|\ec|)$ has median $0.656$, $0.699$ and $0.784$ on GPT-2 and the Gemma models ($15$, $67$ and $130$ pairs), $0.753$ pooled over models and $0.484$ for the one Llama-3.1-8B pair. The rest is the edit-gradient covariance, which also acts only at unread positions because the edit vanishes at the final position.

\subsection{Probes on FDFI's position-averaged summaries}\label{app:fdfi-summary}

FDFI scores a latent through its code averaged over the prompt positions, a summary that \Cref{app:e1r} does not cover. For every FDFI-selected pair two further single-latent probes are fitted on the evaluation strata of at least three prompts, one on the decoder score averaged over the prompt positions (a linear summary to which \Cref{thm:gap} applies) and one on the averaged SAE code (FDFI's own input, a nonlinear summary). Moments are pooled within task with weights proportional to the stratum sizes and the intervention is the zero-ablation at every position after the first. Because this attribution weights prompts equally, the comparisons use a final-position probe with the same weights. The analysis is post hoc and descriptive with prespecified readouts and thresholds. Position averaging largely removes the blindness of the probe (\Cref{tab:fdfi-summary}, \Cref{fig:fdfi-summary-panel}), since the share of pairs whose probe ignores the edit falls from $9$ to $53\%$ for the final-position probe to at most $2\%$. The averaged probes still overstate the attribution, with an equal-task average of within-task median $|\alpha/\rho|$ of $0.056$ to $0.298$, against $0.420$, $2.916$, $0.261$ and $0.224$ for the final-position probe. Their descriptive sign disagreement with the attribution is $16$ to $54\%$, at least as high as that of the final-position probe on the three larger models. The prespecified criterion requires a sign disagreement of at least $0.15$ or a median ratio outside $[1/3,3]$ in at least three models for both summaries. It holds in all four models. The numerical-tolerance criterion is unmet for $1$ of $812$ GPT-2 prompts and $16$ of $720$ Llama-3.1-8B prompts. Under the prespecified amendment, recomputed codes agree in sign with FDFI's inputs on every prompt and the readouts agree across numerical variants of the code computation to within $3.33\times10^{-3}$. On Gemma-2-9B and Llama-3.1-8B the averaged codes are recomputed in float32 from the evaluation-split activations rather than taken from FDFI's float16 inputs.

\begin{table}[h]
\centering\scriptsize
\setlength{\tabcolsep}{4pt}
\begin{tabular}{lcccccccc}
\toprule
& & \multicolumn{3}{c}{Probe-blind share} & \multicolumn{3}{c}{Opposite sign of $\rho$ and $\alpha$} & Median $|\alpha/\rho|$ \\
\cmidrule(lr){3-5}\cmidrule(lr){6-8}\cmidrule(lr){9-9}
Model & Pairs & final & avg.\ decoder & avg.\ code & final & avg.\ decoder & avg.\ code & avg.\ decoder / code \\
\midrule
GPT-2 & 46 & 0.391 & 0 & 0 & 0.414 & 0.403 & 0.336 & 0.294 / 0.167 \\
Gemma-2-2B & 230 & 0.452 & 0.013 & 0 & 0.268 & 0.487 & 0.456 & 0.100 / 0.119 \\
Gemma-2-9B & 457 & 0.534 & 0.020 & 0 & 0.436 & 0.542 & 0.516 & 0.103 / 0.056 \\
Llama-3.1-8B & 23 & 0.087 & 0 & 0 & 0.141 & 0.165 & 0.160 & 0.298 / 0.127 \\
\bottomrule
\end{tabular}
\caption{Probes on FDFI's position-averaged summaries under zero-ablation, against the final-position probe with the same weights (final). Blind shares are over pairs. The descriptive sign shares and the ratio column are equal-task averages of within-task shares and medians. The code summary excludes $4$ and $10$ pairs on the Gemma models whose averaged code has negligible pooled within-stratum variance.}\label{tab:fdfi-summary}
\end{table}

\begin{figure}[t]
\centering
\includegraphics[width=0.65\textwidth]{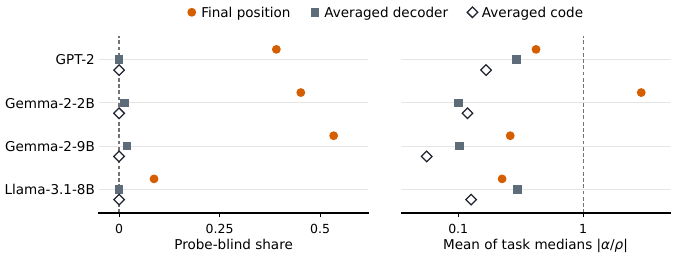}
\caption{Probes on FDFI's position-averaged summaries under zero-ablation, a post hoc descriptive analysis (\Cref{tab:fdfi-summary}). The left panel gives the probe-blind share pooled over pairs and the right panel the equal-task average of within-task median $|\alpha/\rho|$ over pairs with $\rho\ne0$ on a logarithmic scale. Dashed lines mark a zero blind share and equal magnitudes. The Gemma-2-2B final-position ratio of $2.916$ is driven by one task with a single nonzero-probe pair.}\label{fig:fdfi-summary-panel}
\end{figure}

%% file: sections/app_finite.tex
\section{Ablation terms and realized intervention effects}\label{app:finite}

This appendix supports the statements of \Cref{sec:results} that the covariance term in the attribution of an ablation is of the order of its mean-activation term and that gradient-based first-order predictions of realized single-latent ablations agree in sign with the realized effect far more often than probe-based ones. \Cref{app:e8} splits the attribution of each ablation into its two terms and \Cref{app:singletons} compares first-order predictions of single-latent ablations with their realized effects. Every intervention keeps the SAE error term and acts at every position but the first. Zero-ablation sets the latent's code to $0$ and mean-ablation to its selection-split task mean. The ablation terms $L_j$, $M_j$ and $C_j$ of \eqref{eq:ablate} are estimated within strata of at least three prompts and pooled with weights $n_k-1$. Intervals are $95\%$ percentile bootstrap intervals with $2{,}000$ replicates, whose resampling unit each caption states.

\subsection{The mean-activation and covariance terms}\label{app:e8}

\paragraph{Design.} For each selected latent $j$ and task, $L_j$, $M_j$ and $C_j$ are estimated unbiasedly within every stratum. The edit is the one applied in the realized ablations of \Cref{app:singletons}, the latent's code times its unnormalized decoder row and, where the SAE release defines one, its activation scaling factor. Under mean-ablation the code is replaced by its deviation from the reference mean.

\paragraph{Results.} The covariance term is of the same order as the mean term. Under zero-ablation, $|C_j|>|M_j|$ for $52$, $34$, $39$ and $17\%$ of direction-task estimates. Adding $C_j$ reverses the sign of $M_j$ for $26$, $17$, $18$ and $4\%$. The median share $|C_j|/(|M_j|+|C_j|)$ is $0.511$, $0.381$, $0.375$ and $0.267$, with upper deciles of $0.635$ to $0.851$ (\Cref{tab:e8a}). The fraction with $|C_j|>|M_j|$ is stable when strata of two prompts are admitted. Under mean-ablation it is larger on three models and nearly equal on GPT-2, with $C_j$ reversing the sign of $M_j$ for $13$ to $20\%$. The definitions of \Cref{cor:ablate} explain the large shares. At one position with nonzero means and positive variances, $|C_j|/|M_j|$ is the absolute activation-gradient correlation times the coefficients of variation of the code and of the directional derivative. Sparse codes have a large code coefficient of variation. A probe supports at most a mean-activation account, which therefore misses a term of the same order as the one it keeps.

\begin{table}[t]
\centering\footnotesize
\setlength{\tabcolsep}{5pt}
\begin{tabular}{@{}lcccc@{}}
\toprule
& GPT-2 & Gemma-2-2B & Gemma-2-9B & Llama-3.1-8B \\
\midrule
\multicolumn{5}{@{}l}{\emph{Zero-ablation}}\\
Estimates (tasks) & 46 (8) & 230 (10) & 457 (10) & 23 (7)$^\ast$ \\
$|C_j|>|M_j|$ & 0.522 & 0.339 & 0.385 & 0.174 \\
\quad equal task weights & 0.317 & 0.287 & 0.335 & 0.097 \\
\quad strata with $n_k\ge2$ & 0.500 & 0.344 & 0.360 & 0.208 \\
$\operatorname{sign}(M_j+C_j)\ne\operatorname{sign}(M_j)$ & 0.261 & 0.174 & 0.182 & 0.043 \\
Share of $C_j$ [q10, q90] & 0.511 [0.144, 0.848] & 0.381 [0.061, 0.851] & 0.375 [0.048, 0.843] & 0.267 [0.037, 0.635] \\
$C_j\ne0$, BH at $q=0.10$ & 24/46 & 27/227 & 58/447 & 6/23 \\
\midrule
\multicolumn{5}{@{}l}{\emph{Mean-ablation}}\\
Estimates (tasks) & 25 (7)$^\ast$ & 229 (9) & 456 (9) & 23 (7)$^\ast$ \\
$|C_j|>|M_j|$ & 0.520 & 0.424 & 0.388 & 0.217 \\
$\operatorname{sign}(M_j+C_j)\ne\operatorname{sign}(M_j)$ & 0.200 & 0.201 & 0.189 & 0.130 \\
Share of $C_j$ [q10, q90] & 0.507 [0.136, 0.859] & 0.441 [0.054, 0.881] & 0.384 [0.045, 0.851] & 0.281 [0.040, 0.586] \\
\bottomrule
\end{tabular}
\caption{The two terms of \eqref{eq:ablate} per direction-task estimate, in strata of at least three prompts unless stated otherwise. The equal-task-weights row averages the within-task fraction over tasks. The share of $C_j$ is the median of $|C_j|/(|M_j|+|C_j|)$, with its tenth and ninetieth percentiles. BH counts the Benjamini-Hochberg rejections of $C_j=0$ within model among the estimates with a defined standard error, based on stratum-level $t$ intervals (\Cref{app:coverage}).}\label{tab:e8a}
\end{table}

\subsection{Single-latent ablations}\label{app:singletons}

\paragraph{Design.} On up to $200$ evaluation prompts per task, in five folds, each latent of a task's selection is zero-ablated alone at every position but the first. This gives the finite effect $D_{ij}\coloneqq D(\Delta_j(H_i))=F(H_i+\Delta_j(H_i))-F(H_i)$ at the activation $H_i$ of prompt $i$. Three first-order predictions are compared with it on the prompts in strata of at least three prompts. The prompt-specific gradient prediction $P_{ij}\coloneqq\nabla F(H_i)^\top\Delta_j(H_i)$ is attribution patching with the prompt's gradient, with expectation $L_j=M_j+C_j$. The prediction $M_{ij}$ replaces $\nabla F(H_i)$ by its leave-one-out stratum mean, with expectation $M_j$. The probe-based prediction ${\rm probe}_{ij}$ applies a cross-fitted within-stratum slope of the outcome on the final-position decoder score to the final-position code change. Summaries weight tasks equally and intervals resample tasks within model. The prespecified criterion is a better prediction of single-latent ablations by the single-latent attribution than by the probe in at least three models. \Cref{app:prespec} lists it with the other prespecified analyses.

\paragraph{Results.} The criterion holds in all four models, with aggregate gain $0.197$ $[0.101,0.324]$ (\Cref{tab:x45s}). The prompt-specific gradient prediction has the sign of the finite effect for $92$, $91$, $93$ and $97\%$ of prompt-latent rows (equal task weights). The probe-based prediction reaches $21$, $22$, $17$ and $35\%$, since it predicts no change whenever the latent is inactive at the final position (\Cref{cor:ablate}). Adding $C$ improves on the mean-gradient prediction in all four models.

\begin{table}[t]
\centering\footnotesize
\setlength{\tabcolsep}{4pt}
\begin{tabular}{@{}lcccc@{}}
\toprule
Single-latent zero-ablations & GPT-2 & Gemma-2-2B & Gemma-2-9B & Llama-3.1-8B \\
\midrule
Prompt-latent rows (tasks) & 1{,}795 (7) & 10{,}727 (9) & 29{,}180 (9) & 1{,}782 (7) \\
MAE of $P_{ij}$ & 0.026 & 0.097 & 0.082 & $8.52\times10^{-3}$ \\
MAE of $M_{ij}$ & 0.054 & 0.115 & 0.091 & 0.032 \\
MAE of ${\rm probe}_{ij}$ & 0.329 & 0.216 & 0.196 & 0.308 \\
Gain over the probe & 0.303 & 0.119 & 0.114 & 0.300 \\
\quad 95\% CI & [0.037, 0.799] & [0.078, 0.167] & [0.064, 0.171] & [0.114, 0.611] \\
Gain from $C$, over $M_{ij}$ & 0.029 & 0.018 & $8.96\times10^{-3}$ & 0.023 \\
\quad 95\% CI & [0.013, 0.048] & [$8.34\times10^{-3}$, 0.034] & [$5.28\times10^{-3}$, 0.014] & [0.012, 0.041] \\
Sign agreement, $P_{ij}$ / ${\rm probe}_{ij}$ & 0.920 / 0.212 & 0.905 / 0.217 & 0.926 / 0.175 & 0.975 / 0.349 \\
Within-task $r$ with $D_{ij}$, $P_{ij}$ / ${\rm probe}_{ij}$ & 0.942 / 0.237 & 0.597 / 0.197 & 0.569 / $-0.054$ & 0.971 / 0.417 \\
\bottomrule
\end{tabular}
\caption{Single-latent zero-ablations, with the predictions $P_{ij}$, $M_{ij}$ and ${\rm probe}_{ij}$ of \Cref{app:singletons}. MAE weights tasks equally. Gains are differences in MAE, positive when $P_{ij}$ is more accurate. Their intervals resample tasks within model. Sign agreement is over rows with $D_{ij}\ne0$, with a zero prediction counted as a disagreement. Sign agreement averages within-task rates equally over tasks. The correlation $r$ is the mean over tasks of the within-task Pearson correlation.}\label{tab:x45s}
\end{table}

%% file: sections/app_choice.tex
\section{Choosing latents for intervention}\label{app:choice}

This appendix supports the paragraph of \Cref{sec:results} on choosing latents. Both analyses rank or select latents and then compare the choice with realized intervention effects. The first ranks the FDFI-selected latents of the steering extension of \Cref{app:x6}. The second selects sets of latents on training folds and ablates them on held-out prompts.

\subsection{Rankings against steering responses}\label{app:choice-rank}

\paragraph{Design.} The units are the model-task pairs of the steering extension with at least two FDFI-selected latents ($6$, $9$, $9$ and $4$ units, $28$ in total). The analysis counts the units in which the latent with the largest steering response is ranked first by the FDFI statistic $\hat\phi_l/\widehat{\rm se}_l$ of \Cref{app:selection}, by the coefficient magnitude $|\beta|$ or by the attribution magnitude $|\alpha|$. The primary target is the central difference quotient $|T|$, the response per unit displacement. A second target is the finite half-difference $|\varepsilon T|$ of the positive and negative steering interventions, not a one-sided intervention effect, with $\beta$ and $\alpha$ scaled by the same latent-specific step $\varepsilon$. Uniformly random choice within each unit matches the target in $3.31$ units in expectation. This expectation serves as a reference. The analysis is post hoc. Because the attribution and $T$ use the same evaluation prompts, the agreement is descriptive. \Cref{app:choice-sets} gives a held-out comparison.

\paragraph{Results.} The FDFI statistic and coefficient magnitude seldom rank first the latent with the largest quotient magnitude within the selected candidate set (\Cref{tab:choice-rank}). The attribution does so in nearly every unit at small steps. Its advantage holds through one maximum activation and fades at five maximum activations, beyond the range of first-order accuracy (\Cref{app:x6}). With the scaled target the coefficient's ranking gains at large steps, because the step sizes differ across latents. Over units with at least three candidates, the median within-task Spearman correlation between the FDFI statistic and $|T|$ at one standard deviation is $0.151$, $0.026$, $0.054$ and $0.200$.

\begin{table}[h]
\centering\footnotesize
\setlength{\tabcolsep}{5pt}
\begin{tabular}{@{}llccccccc@{}}
\toprule
& & \multicolumn{4}{c}{Within-stratum standard deviations} & \multicolumn{3}{c}{Maximum activations} \\
\cmidrule(lr){3-6}\cmidrule(lr){7-9}
Target & Ranked by & 1 & 2 & 5 & 10 & 1 & 2 & 5 \\
\midrule
$|T|$ & FDFI statistic & 4 & 4 & 3 & 3 & 3 & 4 & 2 \\
& $|\beta|$ & 2 & 2 & 2 & 2 & 1 & 3 & 3 \\
& $|\alpha|$ & 28 & 27 & 19 & 17 & 16 & 10 & 5 \\
\addlinespace
$|\varepsilon T|$ & FDFI statistic & 4 & 4 & 3 & 3 & 5 & 4 & 2 \\
& $|\varepsilon\beta|$ & 7 & 7 & 5 & 4 & 4 & 6 & 8 \\
& $|\varepsilon\alpha|$ & 28 & 26 & 20 & 17 & 18 & 12 & 10 \\
\bottomrule
\end{tabular}
\caption{Number of the $28$ model-task units in which the latent with the largest steering response is ranked first. Uniformly random choice gives $3.31$ in expectation. Post hoc and descriptive.}\label{tab:choice-rank}
\end{table}

\subsection{Held-out set ablations under six selection rules}\label{app:choice-sets}

\paragraph{Design.} The units are the model-task pairs whose FDFI selection holds at least three latents ($5$, $10$, $10$ and $3$ units, $28$ in total). They use the up to $200$ evaluation prompts per task of the set-ablation design in five folds, including prompts excluded by the primary stratum-size restriction and the antonym task. Correlation, rank-test, class-mean-difference and coefficient scores are computed on the four training folds. Attribution sets also use those folds. The FDFI comparator uses the selection-split statistic of \Cref{app:selection}. Every rule selects $k_t=\min(10,|S_t|)$ latents from the active pool. The rules are the absolute correlation between the position-averaged code and the target, the absolute statistic of a Mann-Whitney test between the upper and lower target terciles, the absolute class-mean difference between those terciles, the magnitude $|\beta|$ of the within-stratum pooled final-position coefficient over training strata with at least two prompts, the FDFI selection of \Cref{app:selection} and the attribution set. The attribution set maximizes the training-fold first-order set effect $|\sum_{j\in K}L_j|$ over sets $K$ of $k_t$ active latents. Each set is zero-ablated on the held-out fold at every position but the first, keeping the SAE error term. The primary contrast is $d_c\coloneqq A_{\rm attr}-A_c$, where $A$ is the absolute held-out mean finite effect in standardized units, averaged over folds and then equally over units. Intervals resample strata within model, task and fold. A two-stage version resamples tasks within model and then strata. Both use $2{,}000$ replicates and hold the fitted selections fixed. The comparison is post hoc, specified after the aggregate results and before evaluation (\Cref{app:prespec}).

\paragraph{Results.} Attribution sets produce larger held-out ablation effects than correlation, rank-test and coefficient sets in every unit, with two-stage intervals excluding zero (\Cref{tab:choice-sets}). They exceed class-mean-difference sets in most units, with a two-stage interval that excludes zero at the $95\%$ level but not after a Bonferroni correction over the five comparators. Against the FDFI selection the contrast is positive on three models and in most units, but its two-stage interval includes zero. The two rules are therefore not distinguished under task resampling. Coefficient sets have the smallest edit norms, consistent with latents that rarely fire on held-out prompts. After dividing the absolute held-out mean effect by the root-mean-square edit norm within fold and averaging, the attribution sets still exceed the correlation, rank-test and coefficient sets on every model. The comparison measures the effect on the answer's log-probability. How a larger ablation effect translates into a better intervention depends on the downstream goal.

\begin{table}[h]
\centering\footnotesize
\setlength{\tabcolsep}{4pt}
\resizebox{\textwidth}{!}{%
\begin{tabular}{@{}lcccccccc@{}}
\toprule
Comparator $c$ & $d_c$ & Stratum-resampled 95\% CI & Two-stage 95\% CI & GPT-2 & Gemma-2-2B & Gemma-2-9B & Llama-3.1-8B & Units with $d_c>0$ \\
\midrule
Correlation & 0.949 & [0.903, 1.001] & [0.292, 2.147] & 0.396 & 0.370 & 2.033 & 0.185 & 28/28 \\
Rank test & 0.933 & [0.885, 0.984] & [0.277, 2.121] & 0.415 & 0.276 & 2.044 & 0.278 & 28/28 \\
Class-mean difference & 0.724 & [0.640, 0.798] & [0.061, 1.795] & 0.331 & $-4.91\times10^{-3}$ & 1.795 & 0.236 & 24/28 \\
$|\beta|$ & 1.058 & [1.013, 1.112] & [0.371, 2.304] & 0.454 & 0.506 & 2.137 & 0.309 & 28/28 \\
FDFI & 0.492 & [0.357, 0.602] & [$-0.179$, 1.441] & 0.290 & $-0.209$ & 1.380 & 0.206 & 23/28 \\
\bottomrule
\end{tabular}}
\caption{Held-out set ablations under six selection rules. The contrast $d_c=A_{\rm attr}-A_c$ is in standardized units, equal over units. Model columns give the mean contrast over the model's units. The mean attribution-set effect is $0.467$, $0.511$, $2.165$ and $0.310$ against $0.013$, $5.55\times10^{-3}$, $0.028$ and $7.68\times10^{-4}$ for coefficient sets.}\label{tab:choice-sets}
\end{table}

%% file: sections/app_case.tex
\section{The worked example}\label{app:case}

This appendix gives the selection rule and full results for the worked example in \Cref{fig:overview}A. \Cref{fig:case} shows three held-out prompts. The example is illustrative and post hoc. Its selection rule is specified after the aggregate analyses and independently of the case-level results (\Cref{app:prespec}).

The rule compares two sets of latents per task. Zero-ablating a set $\mathcal S$ of latents removes their codes at every position but the first and keeps the SAE error term. It applies the edit $\Delta_{\mathcal S}\coloneqq\sum_{j\in\mathcal S}\Delta_j$ and has finite effect $D\coloneqq F(H+\Delta_{\mathcal S}(H))-F(H)$ on a prompt. Units are the (model, task) pairs with $k_t=\min(10,|S_t|)\ge3$. On each of five folds $f$ of a task's evaluation prompts, two sets of $k_t$ latents are formed without the prompts of fold $f$ and zero-ablated on them. The FDFI set holds the $k_t$ members of $S_t$ with the largest FDFI statistic. The attribution set maximizes $|\sum_{j\in\mathcal S}L_j^{(-f)}|$ over sets of $k_t$ active latents, where $L_j^{(-f)}$ is the training-fold mean first-order effect of zero-ablating latent $j$, as gradient-based selectors propose \citep{shu2025beyond}. The unit contrast $|\bar D_{\rm attr}|-|\bar D_{\rm FDFI}|$ compares the held-out absolute mean finite effects of the two sets. Among the factual tasks with single-token answers whose two sets differ and whose unit contrast is positive, the rule takes the unit nearest the lower median of the contrast, the first fold in which the two sets differ and the three held-out prompts nearest the quartiles of the unedited log-probability. Gemma-2-9B \texttt{country\_language} is the sole eligible unit (contrast $0.155$). Because eligibility requires a positive contrast, the example serves to illustrate single and set ablations. Evidence on the relative merit of the two selection rules comes from \Cref{app:choice-sets}.

The FDFI representative is the FDFI-set member with the largest FDFI statistic, latent $102779$ (statistic $12.199$, $\beta=3.59\times10^{-3}$, training-fold $L_j=4.40\times10^{-3}$). The attribution representative is the attribution-set member outside the FDFI set with the largest training-fold $|L_j^{(-f)}|$, latent $99439$ ($6.498$, $\beta=-0.091$, $L_j=0.088$). Each prompt is evaluated unedited and under the four zero-ablations of \Cref{tab:case-detail}, none of which changes the top-ranked answer (\Cref{fig:case}).

\begin{figure}[t]
\centering
\includegraphics[width=\textwidth]{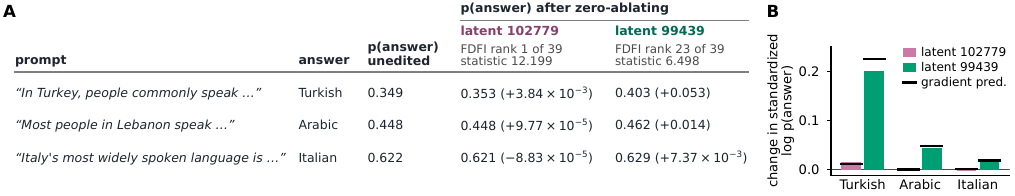}
\caption{The worked example (Gemma-2-9B, country to language). (A) Three held-out prompts chosen by baseline log-probability quantiles, with $p(\text{answer})$ unedited and after zero-ablating latent 102779 or 99439 at every position but the first (change in parentheses). Headers give each latent's FDFI rank and statistic. (B) Realized change of each ablation (bars) against the prompt-specific gradient prediction (ticks), in standardized units. The example is illustrative, selected post hoc as the sole eligible unit of the deterministic rule of this section.}\label{fig:case}
\end{figure}

\begin{table}[t]
\centering\footnotesize
\setlength{\tabcolsep}{5pt}
\begin{tabular}{@{}llcccc@{}}
\toprule
Prompt & & & Realized & First-order & Edit \\
answer (unedited $p$) & Zero-ablated & $p(\text{answer})$ & change & prediction & norm \\
\midrule
\textit{In Turkey, people commonly speak} & FDFI repr. & 0.353 & 0.015 & 0.012 & 97.006 \\
Turkish (0.349) & attribution repr. & 0.403 & 0.201 & 0.224 & 24.246 \\
& FDFI set & 0.334 & $-0.062$ & $-0.043$ & 120.406 \\
& attribution set & 0.457 & 0.381 & 0.479 & 85.688 \\
\addlinespace
\textit{Most people in Lebanon speak} & FDFI repr. & 0.448 & $3.04\times10^{-4}$ & $3.08\times10^{-4}$ & 18.565 \\
Arabic (0.448) & attribution repr. & 0.462 & 0.045 & 0.049 & 15.330 \\
& FDFI set & 0.446 & $-6.49\times10^{-3}$ & $-8.02\times10^{-3}$ & 75.831 \\
& attribution set & 0.511 & 0.188 & 0.187 & 88.173 \\
\addlinespace
\textit{Italy's most widely spoken language is} & FDFI repr. & 0.621 & $-2.03\times10^{-4}$ & $6.66\times10^{-4}$ & 64.386 \\
Italian (0.622) & attribution repr. & 0.629 & 0.017 & 0.019 & 17.907 \\
& FDFI set & 0.562 & $-0.142$ & 0.176 & 110.702 \\
& attribution set & 0.593 & $-0.066$ & 0.316 & 91.278 \\
\bottomrule
\end{tabular}
\caption{The worked example of \Cref{fig:case} (Gemma-2-9B \texttt{country\_language}, first fold). Changes and the prompt-specific gradient predictions, summed over the ten latents for set ablations, are in standardized log-probability units. The edit norm is $(\sum_\tau\|\Delta h_\tau\|^2)^{1/2}$ and repr.\ is the representative latent.}\label{tab:case-detail}
\end{table}

In \Cref{tab:case-detail} the FDFI representative has small realized effects close to its first-order predictions, although its edit is the larger of the two representatives' on every prompt. The attribution detects this combination of a large edit and a near-zero first-order effect, which neither the FDFI statistic nor $\beta$ measures. The attribution representative changes the answer by almost exactly its first-order prediction on every prompt. Because the coefficient and the ablation effect have different edit orientations, their opposite signs alone do not establish a disagreement. On the first two prompts the set ablations agree in sign and roughly in size with their summed first-order predictions. On the Italy prompt both ten-latent sets lower the probability ($-0.142$ and $-0.066$) although their summed predictions are positive ($0.176$ and $0.316$). There the two representatives stay close to their predictions while the set edits are five to six times larger in norm than the attribution representative's. A set ablation can thus depart from its summed first-order prediction on a prompt where single-latent ablations stay close to theirs.